\definecolor{HeaderBlue}{HTML}{E6EEF8}
\definecolor{LightGreen}{HTML}{E8F5E9}
\definecolor{LightPink}{HTML}{FCE4EC}
\definecolor{RowGray}{gray}{0.96}
\definecolor{textblue}{RGB}{204,229,255}
\definecolor{imagepurple}{RGB}{221,212,232}
\definecolor{videogreen}{RGB}{208,231,217}
\definecolor{headercolor}{RGB}{249,235,222}
\definecolor{ourmethodcolor}{RGB}{220,245,248}
\definecolor{NoteCol}{RGB}{249,235,222}
\theoremstyle{plain} 
\newtheorem{theorem}{Theorem}
\newtheorem{assumption}{Assumption}
\theoremstyle{definition}
\newtheorem{definition}{Definition}
\theoremstyle{plain}
\newtheorem{proposition}{Proposition}
\newcommand{\trainable}[1]{\textcolor{orange!85!black}{\mathbf{#1}}}
\newcommand{\ema}[1]{\textcolor{teal}{\mathbf{#1}}}
\newtcolorbox{notebox}{
  colback=NoteCol,
  colframe=black,
    boxrule=0.3pt,
    arc=0pt,
    left=2pt, right=2pt, top=1.5pt, bottom=1.5pt,
    boxsep=1pt,
    before skip=2pt,
    after skip=2pt,
    fontupper=\small
}
\newtcolorbox{theorembox}{
    colback=ourmethodcolor,   
    colframe=black,           
    boxrule=0.3pt,
    arc=2pt,
    left=2pt, right=2pt, top=2.5pt, bottom=2.5pt,
    boxsep=1pt,
    before skip=2pt,
    after skip=2pt,
    fontupper=\small
}
\definecolor{lightblue}{RGB}{0,180,255}
\newcommand{\frozen}[1]{\textcolor{lightblue}{\mathbf{#1}}}  %
\definecolor{SharedCol}{RGB}{220,245,248}
\definecolor{TaskCol}{RGB}{249,235,222}
\def\maketitle{\par
 \begingroup
   \def\thefootnote{\fnsymbol{footnote}}
   \twocolumn[\@maketitle]
   \@thanks
 \endgroup
 \setcounter{footnote}{0}
 \let\maketitle\relax
 \let\@maketitle\relax
 \gdef\@thanks{}\gdef\@author{}\gdef\@title{}\let\thanks\relax}
\def\@maketitle{\vbox{\hsize\textwidth
 \linewidth\hsize \vskip 0.125in minus 0.125in \centering
{\Large\bfseries \@title \par} \vskip -0.3in minus 0.1in
 {\def\and{\unskip\enspace{\rmfamily and}\enspace}%
    \def\And{\end{tabular}\hss \egroup \hskip 1in plus 2fil
             \hbox to 0pt\bgroup\hss \begin{tabular}[t]{c}\bfseries}%
    \def\AND{\end{tabular}\hss\egroup \hfil\hfil\egroup
            \vskip 0.25in plus 1fil minus 0.125in
             \hbox to \linewidth\bgroup\normalsize \hfil\hfil
               \hbox to 0pt\bgroup\hss \begin{tabular}[t]{c}\bfseries}
    \hbox to \linewidth\bgroup\normalsize \hfil\hfil
    \hbox to 0pt\bgroup\hss
  \outauthor
   \hss\egroup
    \hfil\hfil\egroup}
  \vskip 0.2in
  \begin{center}
    \captionsetup{type=figure}
    \includegraphics[width=1.0\textwidth]{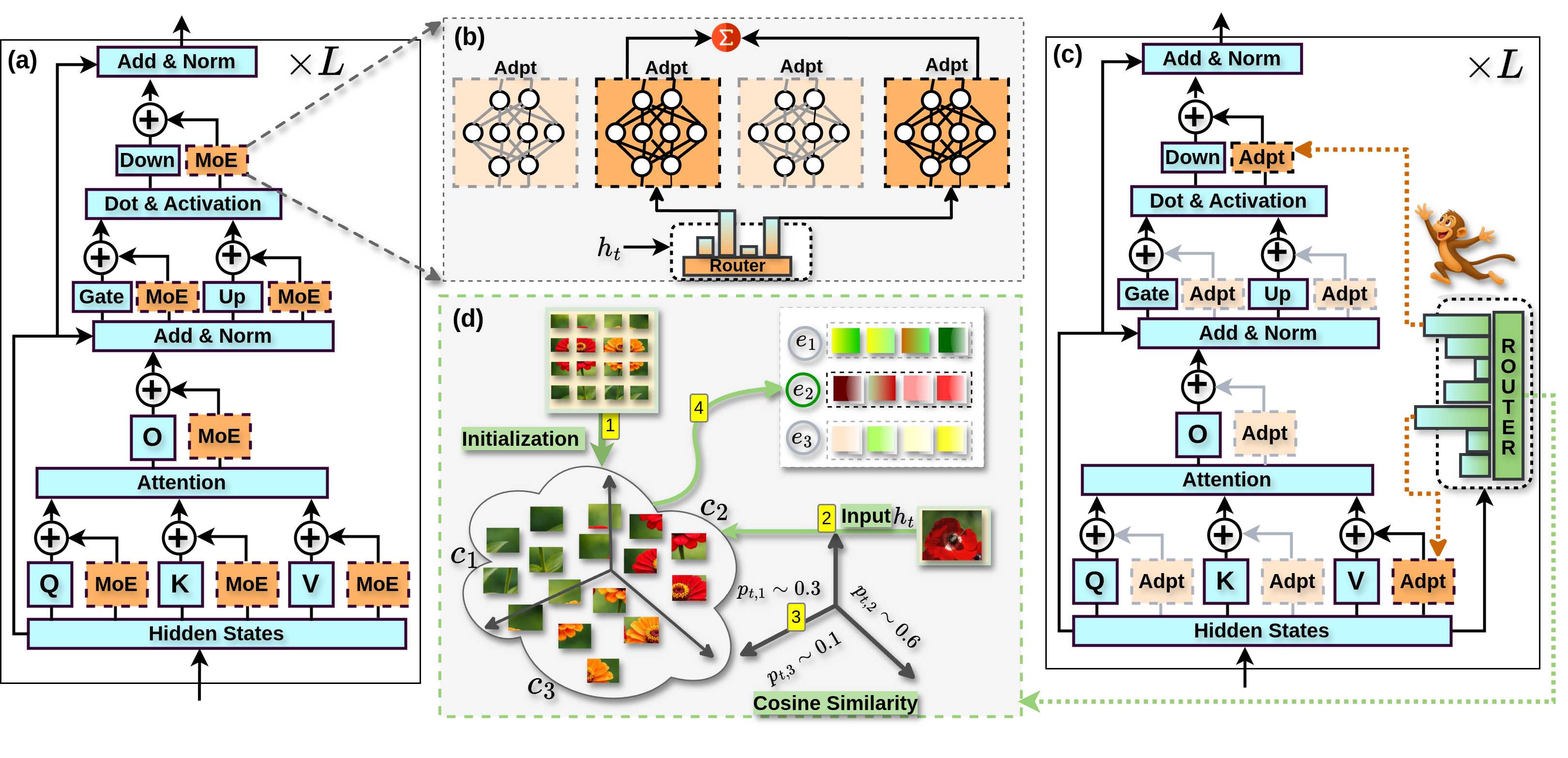}
    \captionof{figure}{Overview of MJ compared to MoE-PEFT. 
    \textbf{(a)} MoE-PEFT architecture: each projection (Q, K, V, O, Gate, Up, Down) has multiple expert adapters with a learned router.
    \textbf{(b)} MoE-PEFT routing: a trainable router selects among $N$ expert adapters, and outputs are summed ($\Sigma$). 
    \textbf{(c)} MJ architecture: each projection has a single adapter (same as standard PEFT). Here, V and Down adapters are activated ($k{=}2$); the rest are skipped. Inactive projections apply only frozen weights $\frozen{W_e}$; active projections apply $\frozen{W_e} + m_e \cdot \trainable{\Delta W_e}$. \textbf{(d)} MJ routing mechanism: \textcircled{1} Initialize cluster centers $\ema{C}$ via $k$-means before training; \textcircled{2} For input token $h_t$, \textcircled{3} compute cosine similarity to each center; \textcircled{4} Select top-$k$ experts based on similarity (here $e_2$ is activated, $e_1$ and $e_3$ are skipped).}
    \label{fig:moe_lora_monkeyjump}
  \end{center}
  \vskip 0.1in
}}
\definecolor{titleblue}{RGB}{46,108,235}
\definecolor{boxblue}{RGB}{245,248,255}
\definecolor{chipPinkFrame}{RGB}{204, 35, 112}
\definecolor{chipPinkBack}{RGB}{255, 230, 242}
\definecolor{chipOrangeFrame}{RGB}{210,120,  0}
\definecolor{chipOrangeBack}{RGB}{255,240,220}
\newcommand{\titleicon}{%
  \raisebox{-2.90em}{\includegraphics[height=4.3em]{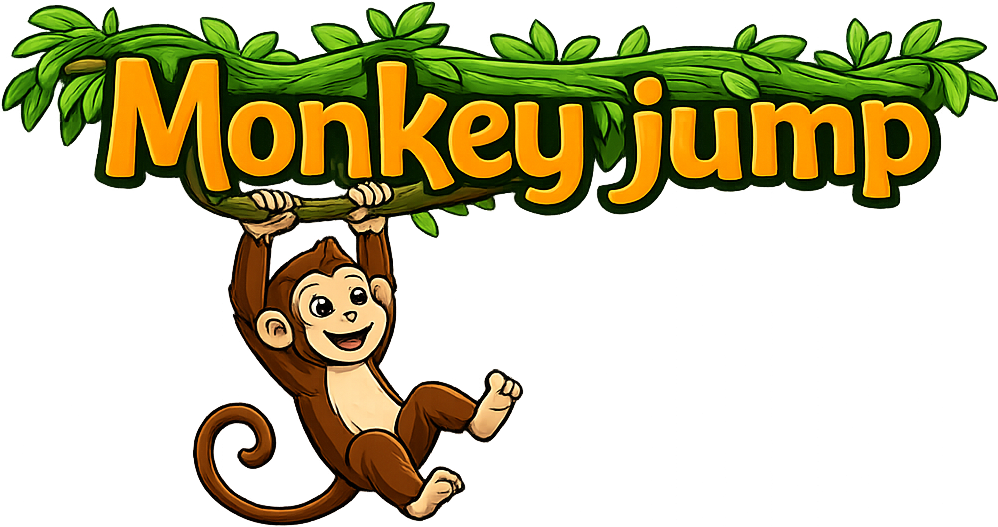}}%
}
\newtcbox{\chip}[1][]{
  on line,
  enhanced,
  boxrule=0.6pt,
  arc=2.2mm,
  top=0.2ex,bottom=0.2ex,left=0.6ex,right=0.6ex,
  boxsep=0pt,
  fontupper=\bfseries\footnotesize,
  #1
}
\title{\titleicon \kern-0.20em  MoE-Style PEFT for Efficient Multi-Task Learning}
\author{
\textbf{Nusrat Jahan Prottasha}\textsuperscript{1},
\textbf{Md Kowsher}\textsuperscript{1},
\textbf{Chun-Nam Yu}\textsuperscript{2}, \\
\textbf{Chen Chen}\textsuperscript{1},
\textbf{Ozlem Garibay}\textsuperscript{1} \\[1ex]
\textsuperscript{1}UCF \quad
\textsuperscript{2}Nokia Bell Labs \\[1ex]
\href{https://github.com/Nusrat-Prottasha/MonkeyJump}{\faGithub} \quad
\href{https://nusrat-prottasha.github.io/MonkeyJump/}{\faGlobe}
}
\begin{document}

\maketitle

\begin{abstract}
Mixture-of-experts variants of parameter-efficient fine-tuning enable per-token specialization, but they introduce additional trainable routers and expert parameters, increasing memory and training costs. This undermines the core goal of parameter efficient finetuning. We propose \emph{Monkey Jump}\footnote{Named for the selective activation pattern: adapters ``jump'' on for some projections and off for others.}, which brings MoE-style specialization to PEFT without adding extra trainable parameters for experts and routers. Instead of introducing new PEFT adapters as experts, Monkey Jump treats the PEFT adapters already present in each Transformer block (e.g., query, key, value, up, and down projections) as implicit experts and routes tokens among them. Routing is performed via $k$-means clustering with EMA-updated centers—no gradients, no learned parameters. We theoretically show that token-wise routing increases expressivity and can outperform shared adapters by avoiding cancellation effects. In multi-task experiments spanning 14 text, 14 image, and 19 video benchmarks, Monkey Jump achieves competitive performance with MoE-PEFT methods while using $7$--$29\times$ fewer trainable parameters, up to 48\% lower memory, and 1.5--2$\times$ faster training. Monkey Jump is architecture-agnostic and can be applied to any adapter-based PEFT method.
\end{abstract}

\section{Introduction}
\label{sec:intro}

Large language models achieve remarkable performance across many tasks, but fine-tuning all their parameters is expensive~\cite{prottasha2025peft}. Parameter-efficient fine-tuning (PEFT) methods like LoRA~\cite{hu2022lora} address this by freezing the pretrained weights and training only small adapter modules. This reduces memory usage and training time while maintaining strong performance. PEFT has become the standard approach for adapting large models to downstream tasks.

However, standard PEFT applies the same adapters uniformly to all inputs. Every token receives the same transformation, regardless of its content. This uniformity limits the model's ability to specialize for different input types, which becomes problematic in multi-task learning where diverse tasks require different adaptations ~\cite{ma2025ts, luo2024moelora}.

Recent work combines mixture-of-experts (MoE) with PEFT to address this limitation~\cite{li2024mixlora, luo2024moelora}. These methods create multiple adapter experts per layer and use a learned router to select which experts to apply for each input. This enables specialization—different inputs activate different experts—and improves performance on many benchmarks. However, MoE-PEFT methods introduce significant overhead compared to standard PEFT: (i) \textbf{more parameters}—multiple experts per layer multiply the adapter count by $N\times$; (ii) \textbf{learned routers}—routing networks add $O(Nd)$ trainable parameters per layer; (iii) \textbf{higher memory}—evaluating multiple experts increases activation memory; and (iv) \textbf{slower training}—multiple activated experts participate in gradient updates. These costs conflict with the core goal of PEFT: efficient adaptation under resource constraints.

\begin{notebox}
\textit{\textbf{Motivation:} When PEFT is applied to all  projections in a Transformer block, it creates separate PEFT adapters. Rather than adding new experts in each  projection, Monkey Jump routes tokens among these existing adapters—achieving MoE-style specialization with zero additional expert parameters.}
\end{notebox}

We propose \textbf{Monkey Jump (MJ)}, a method that brings MoE-style specialization to PEFT while preserving its parameter efficiency (Figure~\ref{fig:moe_lora_monkeyjump}(c)). Standard PEFT attaches an adapter (e.g., for LoRA, $\trainable{\Delta W} = \trainable{BA}$) to each projection (Q, K, V, O, up, gate, down), applying all of them uniformly to every token. MJ routes each token to a subset of these projection adapters based on representation similarity, enabling natural specialization.

MJ works in three stages. \textbf{(i) Before training:} We run $k$-means on token representations from a data subset (§\ref{ab:cluster_data}) to initialize cluster centers—one per projection. \textbf{(ii) During training:} Each token computes cosine similarity to all centers (§\ref{app:ablation-similarity}). The centers are updated gradually via EMA to track how token patterns change—no gradients needed (§\ref{app:ablation-ema}). \textbf{(iii) Expert selection:} Based on similarity scores, each token activates the top-$k$ most similar adapters and skips the rest. Only activated adapters contribute to the output. This lets tokens ``jump'' between adapters based on content, hence the name.

The trainable parameter count remains \emph{exactly the same} as standard PEFT. This is because MJ introduces no new trainable components: (i) existing adapters serve as implicit experts—no additional expert parameters are added; (ii) routing centers are non-trainable buffers updated via EMA, not learned routers optimized by gradients. The only additions are the cluster centers ($O(Ed)$ per block), which are negligible compared to model size and do not participate in backpropagation. This results in comparable model size, memory, and trainable parameters to standard PEFT (§\ref{app:ablation-complexity}).

To motivate why routing helps, we conduct a preliminary experiment comparing two settings using SmolLM-360M with LoRA ($r{=}1$) applied to Q, K, V projections. In the \textbf{shared adapter} setting, we train a single set of adapters on all three GLUE tasks jointly—every task uses the same adapter weights, similar to standard process of LoRA. In the \textbf{task-specific adapter} setting, we train separate adapters for each task—each task gets its own dedicated adapter weights, but the total parameter count remains identical (we partition the adapters across tasks). As shown in Table~\ref{tab:task_specialization}, task-specific adapters consistently outperform shared adapters across all three tasks (+1.19\% on SST-2, +0.61\% on CoLA, +1.21\% on MRPC). This suggests that different tasks benefit from different adapter configurations, and routing tokens to specialized adapters—rather than applying the same adapter uniformly—can improve performance without adding parameters.

\begin{table}[t]
\centering
\small
\setlength{\tabcolsep}{3pt}
\resizebox{0.90\linewidth}{!}{%
\begin{tabular}{@{}lccc|ccc@{}}
\toprule
\textbf{Setting} 
& \multicolumn{3}{c|}{\cellcolor{SharedCol}\textbf{Shared Adapters}} 
& \multicolumn{3}{c}{\cellcolor{TaskCol}\textbf{Task-Specific Adapters}} \\
\midrule
\textbf{Task} 
& \cellcolor{SharedCol}SST-2 
& \cellcolor{SharedCol}CoLA 
& \cellcolor{SharedCol}MRPC 
& \cellcolor{TaskCol}SST-2 
& \cellcolor{TaskCol}CoLA 
& \cellcolor{TaskCol}MRPC \\
\midrule
\textbf{Accuracy} 
& \cellcolor{SharedCol}88.28 
& \cellcolor{SharedCol}60.21 
& \cellcolor{SharedCol}84.62 
& \cellcolor{TaskCol}\textbf{89.47} 
& \cellcolor{TaskCol}\textbf{60.82} 
& \cellcolor{TaskCol}\textbf{85.83} \\
\bottomrule
\end{tabular}
}
\caption{Shared vs.\ task-specific adapters on GLUE tasks (SST2, CoLA, MRPC). \textbf{Shared}: one adapter trained on all tasks jointly. \textbf{Task-specific}: separate adapters per task (same total parameters). Specialization improves accuracy without adding parameters.}
\label{tab:task_specialization}
\end{table}

\paragraph{Contributions.}
\textbf{(i)} We propose \textbf{Monkey Jump}, a gradient-free routing mechanism that achieves MoE-style specialization by treating existing PEFT adapters as implicit experts. The routing uses $k$-means initialization and EMA-updated centers—no learned routers, no additional parameters.
\textbf{(ii)} We provide \textbf{theoretical analysis} showing that routing increases expressivity (Theorem~\ref{thm:main}) and that last-token routing is optimal for causal Transformers (Theorem~\ref{thm:last-token}).
\textbf{(iii)} We demonstrate \textbf{competitive accuracy with superior efficiency} across 47 benchmarks: MJ matches MoE-PEFT performance using $7$--$29\times$ fewer parameters, 48\% lower memory, and 2$\times$ faster training. \textbf{(iv)} We show that MJ is a \textbf{general recipe}: any adapter-based PEFT method can be converted to a mixture-of-experts model by simply adding trainable  projections-free routing.

\section{Preliminaries}
\label{sec:prelim}

We consider a Transformer with $L$ blocks. Each block receives a sequence 
of token representations $H = [h_1, \ldots, h_T] \in \mathbb{R}^{d \times T}$ 
and transforms them through a set of linear  projections 
$\mathcal{S} = \{\texttt{q}, \texttt{k}, \texttt{v}, \texttt{o}, 
\texttt{up}, \texttt{gate}, \texttt{down}\}$, corresponding to the 
attention projections (query, key, value, output) and feed-forward layers 
(up, gate, down). Each  projection $s \in \mathcal{S}$ applies a linear 
transformation $\frozen{W_s} \in \mathbb{R}^{d_{\mathrm{out}} \times d_{\mathrm{in}}}$ 
to its input.

\paragraph{PEFT.}
PEFT adapts pretrained models by freezing the original weights $\frozen{W_s}$ 
and introducing a small trainable adapter $\trainable{\Delta W_s}$ for each 
 projection. For a token $h_t$, the transformation becomes
{\setlength{\abovedisplayskip}{0pt}
 \setlength{\belowdisplayskip}{0pt}
\begin{equation}
\label{eq:peft}
y_t = \frozen{W_s}\, h_t + \trainable{\Delta W_s}\, h_t.
\end{equation}}
In LoRA~\citep{hu2022lora}, the adapter is parameterized as a low-rank product 
$\trainable{\Delta W_s} = \trainable{B_s}\trainable{A_s}$ with 
$\trainable{A_s} \in \mathbb{R}^{r \times d_{\mathrm{in}}}$ and 
$\trainable{B_s} \in \mathbb{R}^{d_{\mathrm{out}} \times r}$, where 
$r \ll \min(d_{\mathrm{in}}, d_{\mathrm{out}})$ keeps the parameter count small. 
Standard PEFT applies every adapter uniformly—each token $h_t$ receives the 
same correction $\trainable{\Delta W_s}\, h_t$ regardless of its content.

\paragraph{MoE.}
Mixture-of-experts architectures achieve input-dependent computation by 
maintaining multiple expert networks $\{f_1, \ldots, f_N\}$ and a router 
$g(\cdot)$ that selects which experts to apply. The output is a weighted 
combination $y = \sum_{n=1}^{N} g_n(x) \cdot f_n(x)$, where the router 
weights satisfy $\sum_n g_n(x) = 1$ and are typically sparse via top-$k$ 
selection. While MoE enables specialization, it multiplies parameters by 
$N$ and requires a learned router with $O(Nd)$ additional parameters.

\paragraph{MoE-PEFT.}
Recent work combines these ideas by instantiating $N$ adapter experts per 
 projection (Figure~\ref{fig:moe_lora_monkeyjump}(a)). For example, MoE-LoRA maintains 
$N$ adapter pairs $\{(\trainable{A_s^{(n)}}, \trainable{B_s^{(n)}})\}_{n=1}^N$ 
and computes
{\setlength{\abovedisplayskip}{0pt}
 \setlength{\belowdisplayskip}{0pt}
\begin{equation}
\label{eq:moe-peft}
y_{t,s} = \frozen{W_s}\, h_t + \sum_{n=1}^{N} g_n(h_t) \cdot 
\trainable{B_s^{(n)}}\trainable{A_s^{(n)}}\, h_t.
\end{equation}}
This achieves token-wise specialization but increases trainable parameters 
by a factor of $N$ and adds a learned router—undermining the efficiency 
that motivated PEFT. In the next section, we show how to achieve similar 
specialization without these costs.

\paragraph{Notation.} We use color-coded notation to distinguish parameter types: $\frozen{W}$ (frozen), $\trainable{\Delta W}$ (trainable via gradient descent), and $\ema{C}$ (updated via EMA). We denote element-wise multiplication by $\odot$ and $\ell_2$ norm by $\|\cdot\|$.


\section{Monkey Jump}
\label{sec:method}

Monkey Jump (MJ) brings MoE-style specialization to PEFT while preserving the 
parameter budget of standard PEFT. The core idea is simple: each Transformer 
block already contains multiple adapters—one per  projection (query, key, value, 
etc.)—and these adapters can serve as \emph{implicit experts}. Rather than 
applying all adapters uniformly to every token, MJ lets each token select a 
subset of adapters based on its content.

Concretely, consider a block with $E = |\mathcal{S}|$ adapter  projections, 
indexed by $e \in \{1, \ldots, E\}$. In standard PEFT, every token $h_t$ 
receives the full contribution from every adapter. In MJ, we introduce a 
routing coefficient $m_{t,e} \geq 0$ for each token- projection pair that 
modulates the adapter's contribution:
{\setlength{\abovedisplayskip}{0pt}
 \setlength{\belowdisplayskip}{0pt}
\begin{equation}
\label{eq:routed-output}
y_{t,e} = \frozen{W_e}\, h_t + m_{t,e} \cdot \trainable{\Delta W_e}\, h_t.
\end{equation}}
When $m_{t,e} = 0$, adapter $e$ has no effect on token $t$; when $m_{t,e} > 0$, 
it contributes proportionally. The frozen base transformation $\frozen{W_e}\, h_t$ 
is always applied—only the adapter contribution is gated. To maintain sparsity, 
each token activates at most $k$ out of $E$ adapters.
\subsection{Routing}
\label{sec:routing}

\begin{notebox}
\textit{\textbf{Question:} How can we route tokens to adapters without adding trainable parameters?}
\end{notebox}

In standard MoE, a learned router network maps inputs to expert weights, 
introducing $O(Ed)$ trainable parameters per block. This conflicts with 
our goal of parameter efficiency. Instead, we observe that tokens requiring 
similar adaptations tend to have similar representations~\citep{bal2025grasp}, 
motivating a \emph{clustering-based} approach: we group tokens by 
representation similarity and route each group to the same adapters.

Each block maintains $E$ routing centers $\ema{C} = [\ema{c}_1, \ldots, \ema{c}_E] 
\in \mathbb{R}^{E \times d}$, one per  projection. These centers are non-trainable 
buffers updated via EMA—no gradients flow through them~\citep{cai2021exponential}. 
For each token $h_t$, we compute cosine similarity to each center, convert to 
probabilities via softmax, and select the top-$k$  projections:
{\setlength{\abovedisplayskip}{0pt}
 \setlength{\belowdisplayskip}{0pt}
\begin{equation}
\label{eq:routing}
\begin{aligned}
z_{t,e} &= \frac{1}{\tau} \left\langle \frac{h_t}{\|h_t\|_2}, 
\frac{\ema{c}_e}{\|\ema{c}_e\|_2} \right\rangle, \\[2pt]
p_{t,e} &= \frac{\exp(z_{t,e})}{\sum_{e'=1}^{E} \exp(z_{t,e'})}, \\[2pt]
m_{t,e} &= p_{t,e} \cdot \mathbb{I}[e \in \mathrm{TopK}(p_t, k)],
\end{aligned}
\end{equation}}
where $\tau > 0$ is a temperature controlling routing sharpness. This mechanism 
routes similar tokens to the same adapters without any learned parameters.
\subsection{Center Initialization and Online Updates}
\label{sec:centers}

The effectiveness of routing depends critically on the quality of centers $\ema{C}$. 
Poor initialization leads to arbitrary routing decisions (Figure~\ref{fig:kmneas_init}(c)) 
that provide no benefit over uniform application~\citep{fedus2022switch}. We address 
this in two stages.

\paragraph{Initialization.}
Before training, we initialize centers using $k$-means clustering~\citep{arthur2006k}. 
We randomly sample a subset of training examples, perform a forward pass through the 
frozen backbone to collect token representations, and run $k$-means with cosine 
similarity on the $L_2$-normalized representations:
{\setlength{\abovedisplayskip}{0pt}
 \setlength{\belowdisplayskip}{0pt}
\begin{equation}
\label{eq:kmeans-init}
\ema{C} \leftarrow \mathrm{KMeans}\!\left(
\left\{ \frac{h_t}{\|h_t\|_2} \right\}_{t \in \mathcal{D}_{\mathrm{init}}},\, E
\right),
\end{equation}}
where $\mathcal{D}_{\mathrm{init}}$ is the initialization subset. This ensures 
that each center corresponds to a distinct cluster in the representation space 
from the first training iteration (§\ref{ab:intitialization}, §\ref{app:ablation-kmeans}).

\paragraph{Online updates.}
During training, centers are updated via EMA, 
entirely outside the gradient path:
{\setlength{\abovedisplayskip}{0pt}
 \setlength{\belowdisplayskip}{0pt}
\begin{equation}
\label{eq:ema}
\ema{c}_e \leftarrow \beta\,\ema{c}_e + (1-\beta)\,\bar{h}_e, \quad
\bar{h}_e = \frac{1}{|\mathcal{B}_e|} \sum_{t \in \mathcal{B}_e} h_t,
\end{equation}}
where $\mathcal{B}_e$ is the set of tokens routed to adapter $e$ in the current 
batch and $\beta \in (0,1)$ is the momentum coefficient. If $|\mathcal{B}_e| = 0$, 
the center remains unchanged. This allows centers to track the evolving token 
distribution as adapters are trained.

\begin{notebox}
\textit{\textbf{Important:} $K$-means initialization is crucial for stable training. Random 
initialization causes $\sim$3\% performance drop (§~\ref{app:ablation-ema}).}
\end{notebox}
\subsection{Routing Variants}
\label{sec:variants}

The routing mechanism can be instantiated at different levels of granularity, 
trading off specialization against computational cost.

\paragraph{Token-wise routing.}
Each token $h_t$ is routed independently based on its own representation, 
as described in Equation~\ref{eq:routing}. This provides fine-grained, 
per-token specialization and is the default mode.

\paragraph{Sequence-wise routing.}
Sequence-wise routing follows the same mechanism as token-wise routing, with one difference: instead of routing each token independently, all tokens in a sequence share the same routing decision. The routing is computed using the last token representation $h_T$, and the resulting coefficients $m_e$ are applied uniformly to all tokens in the sequence.

\begin{notebox}
\textit{\textbf{Finding:} In causal Transformers, the last token $h_T$ contains more mutual information about the full sequence than pooled representations (mean/max), because it has attended to all previous tokens. See Theorem~\ref{thm:last-token}.}
\end{notebox}

\paragraph{Shared adapter.}
Optionally, multiple projections can be designated as always-active ($m_{t,e^*} = 1$ for all $t$), providing stable global adaptation alongside routed adapters. These shared adapters contribute to every token regardless of routing decisions. We find that FFN projections work best as shared adapters—in our experiments, we use O and gate as shared adapters while Q, K, V participate in routing (§\ref{app:ablation-shared}).
\subsection{Computational Cost}
\label{sec:cost}

MJ exactly preserves the trainable parameter count of the underlying PEFT 
method—no new learned weights are added. The only additions are: \textbf{(i) Routing centers:} non-trainable buffers of size $O(Ed)$ per block. \textbf{(ii) Routing computation:} $O(TEd)$ operations for similarity and top-$k$ selection. Both are negligible compared to the adapter forward pass ($O(TEdr)$) or attention ($O(T^2d)$) (§~\ref{sec:efficiency}, §~\ref{app:ablation-complexity}).

\begin{notebox}
\textit{\textbf{Monkey Jump = Standard PEFT adapters ($\trainable{\Delta W}$) + 
Trainable Parameter-free routing ($\ema{C}$).} Same trainable parameters, better specialization.}
\end{notebox}
\section{Theoretical Analysis}
\label{sec:theory}

We provide theoretical justifications for two key design choices in MJ: 
(1) why token-wise routing increases expressivity over uniform adapter 
application, and (2) why last-token representations are optimal for 
sequence-wise routing in causal Transformers.

\paragraph{Expressivity of Token-wise Routing.}
\label{sec:expressivity} Why does routing help? Intuitively, when all adapters are applied uniformly, 
their effects are summed for every token. If two adapters make opposing 
corrections along some dimension, these corrections cancel, reducing 
expressivity. Routing avoids this: by sending different tokens through 
different adapters, each adapter's full effect is preserved for the tokens 
that need it.

We formalize this by comparing the \emph{output rank} of MJ versus standard 
PEFT—a measure of how many independent directions the adapter outputs can 
span. For each adapter $\Delta W_e$, let $\mathcal{C}_e := \mathrm{Col}(\Delta W_e)$ 
denote its column space, and let $\mathcal{C}_{\mathrm{all}} := \sum_{e=1}^E \mathcal{C}_e$ 
denote the sum of all column spaces.

Standard PEFT applies all projection adapters to all tokens. To analyze expressivity, we consider the aggregate adapter contribution: $U^{\mathrm{PEFT}} = (\sum_e \Delta W_e) H$. This lies in $\mathrm{Col}(\sum_e \Delta W_e)$—the column space of the \emph{summed} matrix—which can be strictly smaller than $\mathcal{C}_{\mathrm{all}}$ due to cancellation. MJ with hard routing (top-1) assigns each token to exactly one adapter, yielding $U^{\mathrm{MJ}} = [\Delta W_1 H_1 \; \cdots \; \Delta W_E H_E]$ where $H_e$ contains the tokens routed to adapter $e$. This is a horizontal concatenation, so its column space is $\sum_e \mathrm{Col}(\Delta W_e H_e)$—potentially much larger.

\begin{theorembox}
\begin{theorem}[Expressivity of MJ]
\label{thm:main}
Under hard routing, if all adapters are activated and receive sufficiently 
diverse inputs (i.e., $\mathrm{rank}(\Delta W_e H_e) = \mathrm{rank}(\Delta W_e)$ 
for all $e$), then
{\setlength{\abovedisplayskip}{1pt}
 \setlength{\belowdisplayskip}{0pt}
\[
\mathrm{rank}(U^{\mathrm{MJ}}) \;\geq\; \mathrm{rank}(U^{\mathrm{PEFT}}),
\]}
with strict inequality whenever $\mathrm{Col}(\sum_e \Delta W_e) \subsetneq \sum_e \mathcal{C}_e$.
\end{theorem}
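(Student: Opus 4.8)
The plan is to reduce the whole statement to a comparison of column-space dimensions. First I would record the structural fact that under hard top-$1$ routing the columns of $H$ are partitioned among the adapters, so $H = [H_1 \;\cdots\; H_E]$ up to a column permutation; since permuting columns changes neither rank nor column space, the column space of the block matrix $U^{\mathrm{MJ}} = [\Delta W_1 H_1 \;\cdots\; \Delta W_E H_E]$ is exactly $\sum_{e=1}^E \mathrm{Col}(\Delta W_e H_e)$.

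Next I would invoke the diversity hypothesis. For each $e$ we always have $\mathrm{Col}(\Delta W_e H_e) \subseteq \mathrm{Col}(\Delta W_e) = \mathcal{C}_e$, and the assumption $\mathrm{rank}(\Delta W_e H_e) = \mathrm{rank}(\Delta W_e)$ says these two subspaces have the same (finite) dimension, hence they coincide. Here the hypothesis that every adapter is activated matters: it guarantees $H_e$ is nonempty, so $\Delta W_e H_e$ is a genuine matrix and the rank identity is not vacuous. Summing over $e$ gives $\mathrm{Col}(U^{\mathrm{MJ}}) = \sum_e \mathcal{C}_e = \mathcal{C}_{\mathrm{all}}$, so $\mathrm{rank}(U^{\mathrm{MJ}}) = \dim \mathcal{C}_{\mathrm{all}}$.

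For the PEFT side I would use two elementary inequalities: right multiplication cannot increase rank, so $\mathrm{rank}(U^{\mathrm{PEFT}}) = \mathrm{rank}\bigl((\textstyle\sum_e \Delta W_e) H\bigr) \le \mathrm{rank}(\sum_e \Delta W_e)$; and each column of $\sum_e \Delta W_e$ is a sum of vectors, the $e$-th lying in $\mathcal{C}_e$, so $\mathrm{Col}(\sum_e \Delta W_e) \subseteq \mathcal{C}_{\mathrm{all}}$ and therefore $\mathrm{rank}(\sum_e \Delta W_e) \le \dim \mathcal{C}_{\mathrm{all}}$. Chaining these yields $\mathrm{rank}(U^{\mathrm{PEFT}}) \le \dim \mathcal{C}_{\mathrm{all}} = \mathrm{rank}(U^{\mathrm{MJ}})$, the claimed inequality. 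For the strict case, if $\mathrm{Col}(\sum_e \Delta W_e) \subsetneq \mathcal{C}_{\mathrm{all}}$ then this is a strict containment of finite-dimensional spaces, so $\mathrm{rank}(\sum_e \Delta W_e) < \dim \mathcal{C}_{\mathrm{all}}$, and the chain becomes strict.

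Obstacle: there is no deep step — the argument is pure linear algebra. The only points requiring care are the passage from ``equal ranks'' to ``equal subspaces,'' which needs both the a priori containment $\mathrm{Col}(\Delta W_e H_e) \subseteq \mathcal{C}_e$ and finite dimensionality, and the degenerate case of an unactivated adapter, which the hypothesis explicitly excludes. I would also add a remark that the bound is tight: it holds with equality precisely when no cancellation occurs, i.e. when $\mathrm{Col}(\sum_e \Delta W_e) = \mathcal{C}_{\mathrm{all}}$, tying the statement back to the cancellation intuition described before the theorem.
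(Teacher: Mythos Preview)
Your proposal is correct and follows essentially the same route as the paper's proof: both establish $\mathrm{Col}(U^{\mathrm{MJ}}) = \mathcal{C}_{\mathrm{all}}$ via the block-concatenation structure under hard routing plus the diversity hypothesis, bound $\mathrm{rank}(U^{\mathrm{PEFT}}) \le \mathrm{rank}(\sum_e \Delta W_e) \le \dim \mathcal{C}_{\mathrm{all}}$, and read off the (strict) inequality. If anything, your treatment of the step ``equal ranks plus containment implies equal subspaces'' is slightly more explicit than the paper's.
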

\end{theorembox}

The proof is in Appendix~\ref{app:proof}; extension to soft top-$k$ routing 
is in Appendix~\ref{app:soft-routing}.

\paragraph{Example.}
Consider two rank-1 adapters:
\[
\Delta W_1 = \begin{bmatrix} 1 & 0 \\ 0 & 0 \end{bmatrix}, \qquad
\Delta W_2 = \begin{bmatrix} 0 & 0 \\ -1 & 0 \end{bmatrix}.
\]
Adapter 1 produces outputs along $e_1 = (1,0)^\top$; adapter 2 along 
$e_2 = (0,1)^\top$. Together, $\mathcal{C}_1 + \mathcal{C}_2 = \mathbb{R}^2$. 
However, their sum $\Delta W_1 + \Delta W_2 = \left[\begin{smallmatrix} 1 & 0 \\ -1 & 0 \end{smallmatrix}\right]$ 
produces outputs only along $(1, -1)^\top$—a 1D subspace. With input 
$H = [e_1 \; e_1]$, standard PEFT gives 
$U^{\mathrm{PEFT}} = \left[\begin{smallmatrix} 1 & 1 \\ -1 & -1 \end{smallmatrix}\right]$ 
(rank 1), while MJ gives 
$U^{\mathrm{MJ}} = \left[\begin{smallmatrix} 1 & 0 \\ 0 & -1 \end{smallmatrix}\right]$ 
(rank 2). Routing doubles the effective rank by avoiding cancellation.

\begin{notebox}
\textit{\textbf{Key insight:} Routing increases expressivity by avoiding cancellation 
between adapters—each adapter contributes its full column space independently.}
\end{notebox}

\paragraph{Optimality of Last-Token Routing.}
\label{sec:last-token-theory} For sequence-wise routing, all tokens share the same routing decision based 
on a single sequence representation. Common choices include mean pooling 
($\bar{h} = \frac{1}{T}\sum_t h_t$), max pooling, or the last token ($h_T$). 
We show that in causal Transformers, the last token is theoretically optimal.

In causal attention, each token $h_t$ can only attend to positions $1, \ldots, t$. 
This creates an information asymmetry: early tokens have limited context, while 
later tokens accumulate information from the entire prefix. We formalize this 
using mutual information.

\begin{theorembox}
\begin{theorem}[Information Maximality]
\label{thm:last-token}
Let $X = (x_1, \ldots, x_T)$ be an input sequence and $h_1, \ldots, h_T$ be 
the hidden representations at any layer of a causal Transformer. Then: \textbf{(i) Monotonicity:} $I(h_t; X) \leq I(h_{t+1}; X)$ for all $t < T$. \textbf{(ii) Maximality:} $I(h_T; X) \geq I(h_t; X)$ for all $t \leq T$. \textbf{(iii) Dominance over pooling:} Under mild conditions on attention weights,
{\setlength{\abovedisplayskip}{0pt}
 \setlength{\belowdisplayskip}{0pt}
    \[
    I(h_T; X) \geq I(\bar{h}; X), \quad \text{where } \bar{h} = \textstyle\frac{1}{T}\sum_{t=1}^T h_t.
    \]
}
\end{theorem}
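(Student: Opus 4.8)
The plan is to reduce all three parts to statements about how much of each prefix $x_{\le t} := (x_1,\dots,x_t)$ is retained by the corresponding hidden state, and then to combine the chain rule of entropy with a genericity (non-degeneracy) hypothesis that plays the role of the ``mild conditions.'' The first step, common to all three claims, is that causal masking makes $h_t$ a deterministic function of the prefix, $h_t = f_t(x_{\le t})$, where $f_t$ composes the causal attention and MLP sublayers up through the fixed layer and reads out position $t$. From this a two-sided data-processing argument gives the identity $I(h_t;X) = I(h_t;x_{\le t})$: the Markov chain $X \to x_{\le t} \to h_t$ holds because $h_t$ is a function of $x_{\le t}$, the chain $x_{\le t} \to X \to h_t$ holds because $h_t$ is also a function of $X$, and the two DPI bounds combine to equality — no assumptions needed here.

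For parts (i) and (ii) I would take the ``mild condition'' to be that the composed maps $f_t$ are injective on the (finite) support of the data distribution — a genericity condition holding for almost every parameter setting, and cleanest in the discrete, finite-vocabulary setting. Under it $h_t$ and $x_{\le t}$ are in bijection on the support, so $I(h_t;X) = I(h_t;x_{\le t}) = H(x_{\le t})$. Monotonicity (i) is then exactly the chain rule plus non-negativity: $I(h_{t+1};X) = H(x_{\le t+1}) = H(x_{\le t}) + H(x_{t+1}\mid x_{\le t}) \ge H(x_{\le t}) = I(h_t;X)$, strict precisely when the next token is not almost surely determined by its prefix. Maximality (ii) is the special case $t' = T$, which also pins down the maximum value $I(h_T;X) = H(x_{\le T}) = H(X)$. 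If one prefers to avoid the injectivity hypothesis, a weaker form of (i) still follows from DPI alone once one assumes that $h_{t+1}$ functionally determines $h_t$; the genericity route is cleaner and makes the strictness condition transparent.

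For part (iii) I would use that $\bar{h} = \tfrac1T\sum_{t=1}^T h_t$ is itself a deterministic function of $X$, so $I(\bar{h};X) = H(\bar{h}) \le H(X)$. Now invoke the ``mild condition on attention weights'': it is precisely the non-degeneracy ensuring that the last-position map $X \mapsto h_T$ is injective on the data support (e.g.\ non-constant attention logits so that no sequence is collapsed), which by the computation above yields $I(h_T;X) = H(X)$; combining gives $I(\bar{h};X) \le H(X) = I(h_T;X)$. Equivalently, and more in the spirit of ``dominance,'' one argues directly that $h_T$ determines $X$, hence every $h_t$, hence $\bar{h}$, so that $X \to h_T \to \bar{h}$ is a Markov chain and DPI gives $I(\bar{h};X) \le I(h_T;X)$; strictness then follows from a second genericity statement — that mean pooling over causally-ordered prefix representations generically sends distinct sequences to the same point and so strictly loses information.

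The main obstacle is part (iii). Without \emph{some} non-degeneracy assumption on the last-position computation there is no ordering between $I(h_T;X)$ and $I(\bar{h};X)$: if the last attention layer drives $h_T$ to a near-constant while earlier $h_t$ stay informative, the average $\bar{h}$ can carry strictly more information than $h_T$. The real work is therefore (a) isolating the minimal condition on the attention weights under which $h_T$ is ``informative enough,'' and (b) arguing it is genuinely mild (a measure-zero exclusion in parameter space, or simply non-constant attention scores). A secondary, technical point is that the clean entropy identities above hold verbatim only in the discrete setting; for continuous inputs one must pass through a quantization argument or work with differential entropies and justify the change of variables under the injective $f_t$, and check that any exogenous noise does not break the Markov chains used in the two-sided DPI step.
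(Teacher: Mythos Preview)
Your approach is broadly aligned with the paper's: both establish the key identity $I(h_t;X)=I(h_t;x_{\le t})$ from causality, reduce monotonicity to the entropy chain rule, and handle pooling via the data-processing inequality. The substantive difference lies in how the ``mild conditions'' are formalized. You opt for injectivity of each $f_t$ on the data support (a genericity hypothesis), which gives the clean identification $I(h_t;X)=H(x_{\le t})$ and makes all three parts immediate. The paper instead introduces two weaker bespoke assumptions: an \emph{Information Preservation Property} bounding the growth of the information loss $\epsilon_t := H(x_{\le t}) - I(h_t;x_{\le t})$ by $\epsilon_{t+1}-\epsilon_t \le H(x_{t+1}\mid x_{\le t})$ (this drives part~(i)), and an \emph{Attention Aggregation Property} stating $I(h_1,\dots,h_{T-1};X\mid h_T)=0$ (this drives part~(iii) via $I(\bar h;X)\le I(h_1,\dots,h_T;X)=I(h_T;X)$). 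Your injectivity hypothesis implies both---it forces $\epsilon_t\equiv 0$ and makes $h_T$ a sufficient statistic for $X$---so your argument is a correct specialization. The paper's formulation is strictly more general, allowing $h_t$ to lose information provided the loss grows slowly enough, but at the cost of assumptions whose ``mildness'' is harder to certify than a measure-zero exclusion in parameter space. Your diagnosis of the obstacle in part~(iii), and the explicit counterexample sketch showing the inequality can reverse without such a condition, is sharper than what the paper offers; the paper simply packages the needed sufficiency of $h_T$ as an axiom rather than arguing for it.
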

\end{theorembox}

\begin{proof}[Proof sketch]
(i) By the data processing inequality, $h_t$ is a function of $(x_1, \ldots, x_t)$ 
only, so $I(h_t; X) = I(h_t; x_1, \ldots, x_t) \leq H(x_1, \ldots, x_t) \leq H(X)$. 
Since $h_{t+1}$ has access to $x_{t+1}$ as well, the bound is weakly tighter.
(ii) Follows directly from (i) with $t = T$.
(iii) Mean pooling mixes representations with varying information content. 
Early tokens have seen at most half the sequence, diluting the information 
in $\bar{h}$. The last token $h_T$ preserves information without dilution(Full proof in §~\ref{app:last-token-proof} and empirical evidence~\ref{app:ablation-linearprobe}).
\end{proof}

\begin{notebox}
\textit{\textbf{Finding:} Last-token routing is information-theoretically optimal for 
causal Transformers—not just a heuristic. Mean/max pooling dilutes information 
from early, context-poor tokens (Table~\ref{tab:qa_granularity}).}
\end{notebox}


\section{Experiments}
\label{sec:experiments}

We evaluate MJ on multi-task benchmarks spanning text, image, and video. Our experiments address: (i) Is MJ competitive with MoE-PEFT methods while using fewer parameters? (ii) Does MJ improve efficiency? (iii) How do design choices affect performance?

\paragraph{Setup.}
We evaluate on 47 benchmarks across three modalities: \textbf{Text} (14 tasks, 98K samples), \textbf{Image} (14 tasks, 42K samples), and \textbf{Video} (19 tasks, 13K samples). Details are in §\ref{app:datasets} and Table~\ref{tab:datasets_detailed}. For image/video tasks, we use LLaVA-OneVision-Qwen2-7B~\cite{li2024llava}; for text tasks, we use Llama-3-8B-Instruct~\cite{grattafiori2024llama}. We apply PEFT or MoE-PEFT to projections Q, K, V, O, and gate. Ablations use LLaVA-OneVision-Qwen2-0.5B with adapters applied only to attention projections (Q, K, V, O). We compare against standard PEFT methods (LoRA~\cite{hu2022lora}, LoRA-FA~\cite{zhang2023lora}, AdaLoRA~\cite{zhang2023adalora}, Propulsion~\cite{kowsher2025propulsion}) and MoE-PEFT methods (MoELoRA~\cite{luo2024moelora}, HydraLoRA~\cite{tian2024hydralora}, MoLA~\cite{gao2024higher}, MoRE~\cite{zhang2025more}, MoA~\cite{cao2025moa}). We implement four MJ variants: MJLoRA, MJLoRAFA, MJAdaLoRA, and MJPropulsion. Full hyperparameters are in Appendix~\ref{app:hyperparameters}.

\begin{table*}[t]
\centering
\scriptsize
\setlength{\tabcolsep}{3pt}
\renewcommand{\arraystretch}{1.05}
\begin{tabular}{lccccccc}
\toprule
\rowcolor{headercolor}
\textbf{Method} &
\textbf{GLUE} &
\textbf{CS \& QA} &
\textbf{ImgCls} &
\textbf{VLQA} &
\textbf{ActObj} &
\textbf{Motion} &
\textbf{HighLvl} \\
\midrule
LoRA~\cite{hu2022lora} 
& $89.59_{\pm 0.55}$ & $65.14_{\pm 2.21}$ & $45.82_{\pm 0.67}$ & $52.68_{\pm 1.06}$ & $45.82_{\pm 0.67}$ & $52.68_{\pm 0.66}$ & $56.63_{\pm 1.22}$ \\
AdaLoRA~\cite{zhang2023adalora}
& $89.16_{\pm 0.68}$ & $64.66_{\pm 2.16}$ & $45.27_{\pm 0.71}$ & $51.89_{\pm 1.07}$ & $45.27_{\pm 0.71}$ & $51.89_{\pm 0.67}$ & $55.83_{\pm 1.24}$ \\
Propulsion~\cite{kowsher2025propulsion}
& $88.97_{\pm 0.53}$ & $64.86_{\pm 2.09}$ & $45.10_{\pm 0.69}$ & $51.59_{\pm 1.07}$ & $45.10_{\pm 0.69}$ & $51.59_{\pm 0.67}$ & $55.55_{\pm 1.14}$ \\
LoRAFA~\cite{zhang2023lora}
& $89.06_{\pm 0.50}$ & $64.68_{\pm 2.14}$ & $45.12_{\pm 0.72}$ & $51.70_{\pm 1.04}$ & $45.12_{\pm 0.72}$ & $51.70_{\pm 0.64}$ & $55.73_{\pm 1.29}$ \\
MoELoRA~\cite{luo2024moelora}
& $89.65_{\pm 0.70}$ & $65.25_{\pm 2.02}$ & $45.91_{\pm 0.70}$ & $52.88_{\pm 0.97}$ & $45.91_{\pm 0.70}$ & $52.88_{\pm 0.57}$ & $56.83_{\pm 1.14}$ \\
MixLoRA~\cite{li2024mixlora}
& $89.38_{\pm 0.52}$ & $66.03_{\pm 1.93}$ & $47.31_{\pm 0.61}$ & $54.17_{\pm 1.05}$ & $47.31_{\pm 0.61}$ & $54.17_{\pm 0.65}$ & $58.51_{\pm 1.17}$ \\
HydraLoRA~\cite{tian2024hydralora}
& $\mathbf{89.91}_{\pm 0.46}$ & $65.47_{\pm 2.04}$ & $47.85_{\pm 0.65}$ & $54.96_{\pm 1.11}$ & $47.85_{\pm 0.65}$ & $\mathbf{54.96}_{\pm 0.71}$ & $59.05_{\pm 1.14}$ \\
MoLA~\cite{gao2024higher}
& $89.74_{\pm 0.59}$ & $65.00_{\pm 2.07}$ & $45.78_{\pm 0.70}$ & $\mathbf{54.85}_{\pm 1.12}$  & $45.78_{\pm 0.70}$ & $52.48_{\pm 0.58}$ & $\mathbf{59.10}_{\pm 1.12}$ \\
MoRE~\cite{zhang2025more}
& $89.73_{\pm 0.53}$ & $66.09_{\pm 1.92}$ & $47.42_{\pm 0.68}$ & $54.47_{\pm 1.02}$ & $47.80_{\pm 0.68}$ & $54.47_{\pm 0.62}$ & $58.49_{\pm 1.23}$ \\
MoA~\cite{cao2025moa}
& $89.57_{\pm 0.55}$ & $65.30_{\pm 1.98}$ & $46.35_{\pm 0.63}$ & $53.23_{\pm 1.06}$ & $46.35_{\pm 0.63}$ & $53.23_{\pm 0.66}$ & $57.16_{\pm 1.12}$ \\
\midrule
\rowcolor{ourmethodcolor}
MJLoRA
& $\mathbf{89.91}_{\pm 0.50}$ & $65.63_{\pm 2.07}$ & $46.99_{\pm 0.61}$ & $54.07_{\pm 1.05}$ & $46.99_{\pm 0.61}$ & $54.85_{\pm 0.72}$ & $57.94_{\pm 1.10}$ \\
\rowcolor{ourmethodcolor}
MJAdaLoRA
& $89.78_{\pm 0.56}$ & $65.06_{\pm 2.06}$ & $\mathbf{48.06}_{\pm 0.67}$ & $54.49_{\pm 1.09}$ & $\mathbf{48.06}_{\pm 0.67}$ & $54.49_{\pm 0.69}$ & $58.67_{\pm 1.13}$ \\
\rowcolor{ourmethodcolor}
MJPropulsion
& $89.80_{\pm 0.56}$ & $66.10_{\pm 2.03}$ & $47.46_{\pm 0.71}$ & $54.83_{\pm 1.07}$ & $47.46_{\pm 0.71}$ & $54.83_{\pm 0.67}$ & $59.02_{\pm 1.13}$ \\
\rowcolor{ourmethodcolor}
MJLoRAFA
& $89.90_{\pm 0.52}$ & $\mathbf{66.16}_{\pm 1.96}$ & $47.80_{\pm 0.62}$ & $52.48_{\pm 0.98}$ & $47.42_{\pm 0.62}$ & $54.07_{\pm 0.65}$ & $56.37_{\pm 1.18}$ \\
\bottomrule
\end{tabular}
\caption{Average performance across task families (mean $\pm$ std over 5 runs). Columns: GLUE, Commonsense \& QA, Image Classification, Vision--Language QA, Action \& Object-Centric Reasoning, Motion \& Scene Understanding, and High-Level Reasoning. Highlighted rows denote MJ variants. Per-task results in Tables~\ref{tab:glue_results}, \ref{tab:qa_results}, \ref{tab:image_cls}, \ref{tab:vqa}, \ref{tab:action_object_results}, \ref{tab:peft_motion_scene}, \ref{tab:video_reasoning}.}
\label{tab:avg_all_tasks}
\end{table*}
\subsection{Main Results}
\label{sec:main-results}

Table~\ref{tab:avg_all_tasks} summarizes average performance across task families. Per-task results are in Appendix~\ref{app:detailed-results}.

MJ variants achieve competitive performance with MoE-PEFT methods while using 7--29$\times$ fewer trainable parameters. On GLUE, MJLoRA ties with HydraLoRA for the best score (89.91\%). On CS\&QA, MJLoRAFA achieves the highest accuracy (66.16\%), outperforming MoRE (66.09\%) and MixLoRA (66.03\%). For image classification, MJAdaLoRA leads all methods (48.06\%), ahead of HydraLoRA (47.85\%) and MoRE (47.42\%). On video tasks, MJAdaLoRA achieves the best Action\&Object score (48.06\%), while MJPropulsion shows strong results on Motion (54.83\%) and High-Level Reasoning (59.02\%).

Compared to their base PEFT methods, MJ variants show consistent improvements: MJLoRA outperforms LoRA by +0.32\% on GLUE, +0.49\% on CS\&QA, and +1.17\% on ImgCls—demonstrating that gradient-free routing provides meaningful specialization at no additional parameter cost.


\begin{figure*}[!htb]
    \centering
    \includegraphics[width=1.00\linewidth]{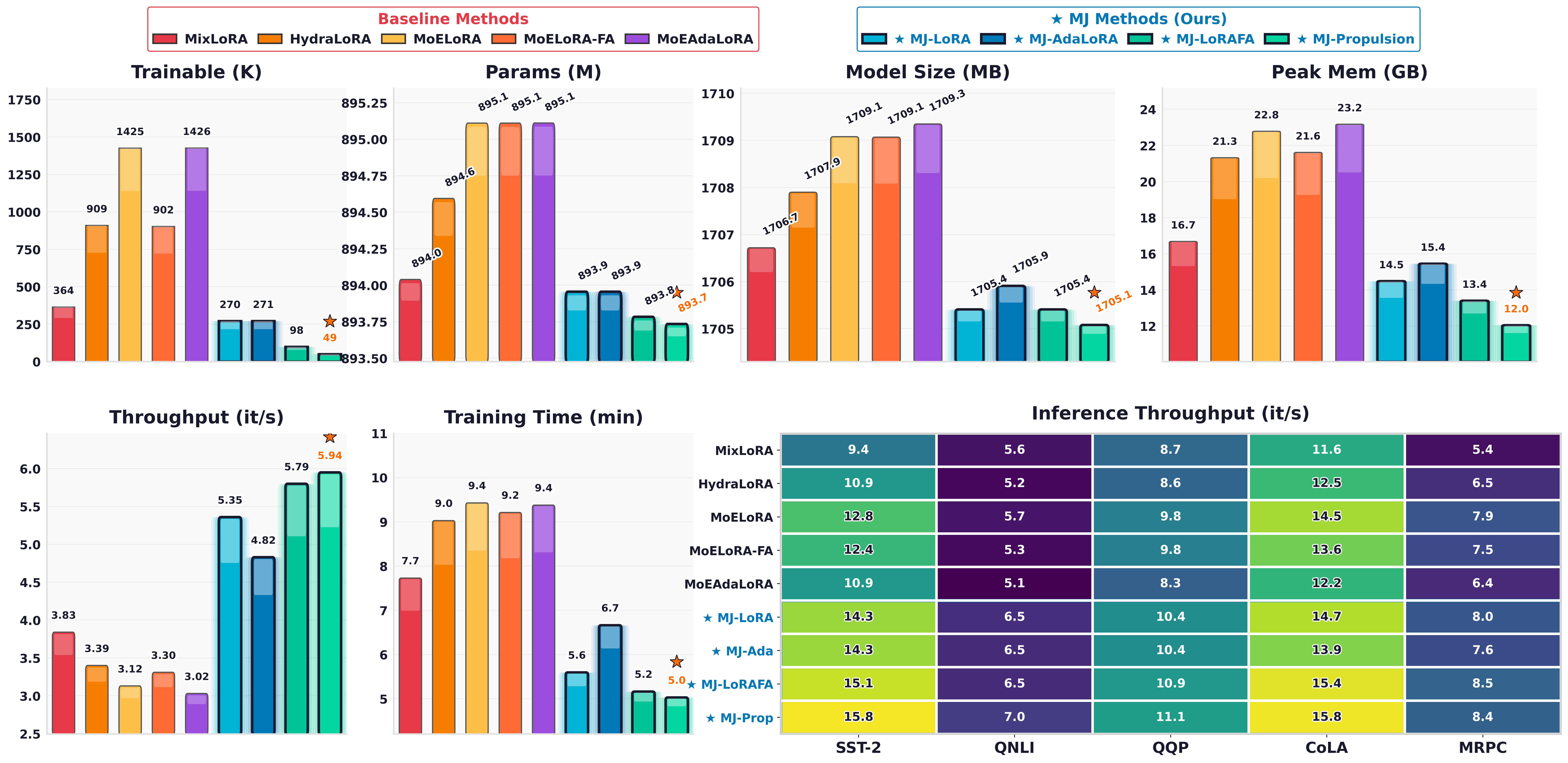}
    \caption{Efficiency comparison. \textbf{Top row:} Trainable parameters (K), total parameters (M), model size (MB), and peak GPU memory (GB). \textbf{Bottom row:} Training throughput (it/s = iterations per second), training time (min), and inference throughput across GLUE tasks. }
    \label{fig:efficiency}
\end{figure*}

\subsection{Efficiency Analysis}
\label{sec:efficiency}

A core goal of MJ is achieving MoE-style specialization without sacrificing PEFT efficiency. We evaluate using LLaVA-OneVision-Qwen2-0.5B with rank 2, applying MoE-based PEFT to attention projections (Q, K, V, O). For fair comparison, all methods use the same environment: H100 GPU, Transformers library, PyTorch, batch size 8, and gradient accumulation 2. Figure~\ref{fig:efficiency} compares MJ variants against MoE-PEFT baselines across six metrics.

\paragraph{Parameter efficiency.}
MJ variants use significantly fewer trainable parameters. MJ-Propulsion requires only 49K parameters—$7\times$ fewer than MixLoRA (364K), $19\times$ fewer than HydraLoRA (909K), and $29\times$ fewer than MoELoRA (1,425K). MJ-LoRAFA (98K) and MJ-LoRA (270K) also remain well below all MoE-PEFT baselines. Despite this, total model size remains nearly identical (~1,705MB for MJ vs 1,706--1,709MB for MoE-PEFT), as MJ reuses existing adapters rather than adding new experts.

\paragraph{Memory efficiency.}
MJ reduces peak GPU memory by up to 48\%. MJ-Propulsion uses only 12.0GB compared to 23.2GB for MoEAdaLoRA and 22.8GB for MoELoRA. Even the highest-memory MJ variant (MJ-AdaLoRA, 15.4GB) uses 33\% less than MoEAdaLoRA. This reduction comes from top-$k$ sparse routing—MJ evaluates fewer adapter branches per forward pass.

\paragraph{Training speed.}
MJ achieves 1.5--2$\times$ faster training. MJ-Propulsion reaches 5.94 it/s throughput and completes training in 5.0 minutes, compared to 3.02--3.83 it/s and 7.7--9.4 minutes for MoE-PEFT methods. All MJ variants exceed 4.8 it/s, while no MoE-PEFT method exceeds 3.9 it/s.

\paragraph{Inference speed.}
MJ maintains efficiency at inference. Across GLUE tasks, MJ-Propulsion achieves 15.8 it/s on SST-2 versus 9.4--12.8 it/s for MoE-PEFT. On average, MJ variants achieve 10--25\% higher inference throughput.

\begin{notebox}
\textit{\textbf{Key result:} MJ achieves $7$--$29\times$ fewer trainable parameters, up to 48\% lower peak memory, and 1.5--2$\times$ faster training—while maintaining comparable accuracy to MoE-PEFT.  (Table~\ref{tab:avg_all_tasks}).}
\end{notebox}

\subsection{Ablation Study}
\label{sec:ablation}
MJ introduces several design choices: how to initialize routing centers, how often to update them, and how many layers to equip with routing.

\begin{figure*}[!htb]
    \centering
    \includegraphics[width=1.00\linewidth]{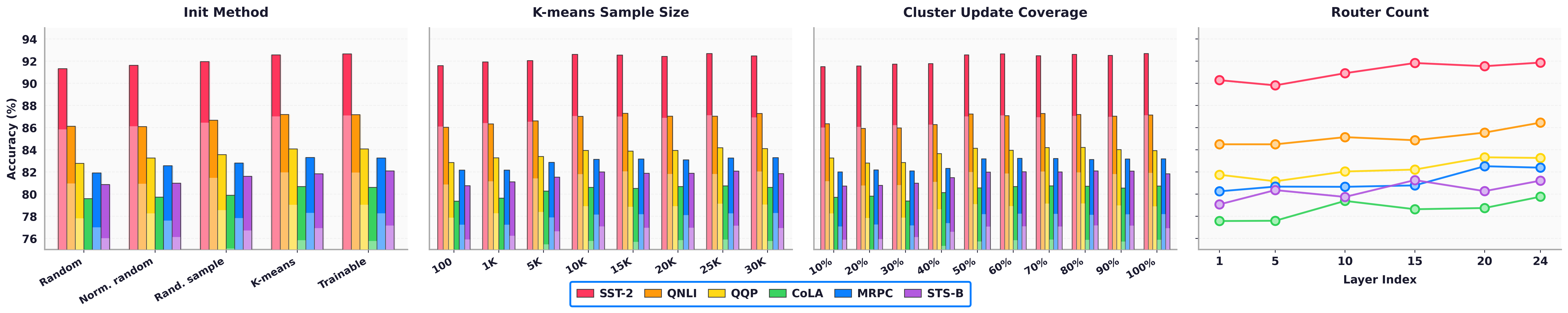} 
    \caption{(a) $K$-means initialization matches trainable routers. (b) More samples improve initialization, saturating at 5K--10K. (c) EMA update coverage of 50--70\% suffices. (d) More routing layers improve performance.}

    \label{fig:router_init}
\end{figure*}

\paragraph{Initialization method.}
Figure~\ref{fig:router_init}(a) compares five center initialization strategies: random vectors, normalized random, random token sampling, $k$-means clustering, and trainable router. $K$-means achieves the best performance among training-free methods and matches the trainable router, confirming that clustering captures meaningful structure without learned parameters.

\paragraph{$K$-means sample size.}\label{ab:intitialization}
Figure~\ref{fig:router_init}(b) varies the number of samples for $k$-means initialization from 100 to 30K randomly. Performance saturates at 5K--10K samples, indicating that a modest sample size suffices for effective initialization.

\paragraph{Cluster update coverage.}\label{ab:cluster_data}
Figure~\ref{fig:router_init}(c) varies the percentage of training steps where EMA updates are applied. Performance plateaus at 50--70\% coverage, suggesting that centers need regular but not continuous updates to track the evolving token distribution.

\paragraph{Router count.}
Figure~\ref{fig:router_init}(d) varies the number of Transformer blocks equipped with MJ routing (1 to 24), with remaining blocks using standard PEFT. Performance improves as more blocks use routing, with 20--24 blocks achieving the best results.

\begin{table*}[t]
\centering
\scriptsize
\setlength{\tabcolsep}{3pt}
\renewcommand{\arraystretch}{0.90}
\resizebox{\textwidth}{!}{%
\begin{tabular}{llcccccccc}
\toprule
\rowcolor{headercolor}
\textbf{Method} & \textbf{Granularity} &
\textbf{BoolQ} & \textbf{PIQA} & \textbf{SIQA} & \textbf{H.Sw.} &
\textbf{W.Gra} & \textbf{ARC-e} & \textbf{ARC-c} & \textbf{OBQA} \\
\midrule
\multirow{3}{*}{MJLoRA} & Token & $71.61_{\pm0.67}$ & $73.56_{\pm0.53}$ & $64.83_{\pm1.41}$ & $51.73_{\pm2.20}$ & $78.54_{\pm0.30}$ & $85.67_{\pm0.66}$ & $72.20_{\pm1.19}$ & $77.39_{\pm0.89}$\\
 & Sentence & $71.00_{\pm0.88}$ & $73.24_{\pm0.74}$ & $65.22_{\pm1.69}$ & $51.22_{\pm2.52}$ & $77.27_{\pm0.61}$ & $84.73_{\pm0.93}$ & $71.45_{\pm1.36}$ & $77.42_{\pm1.18}$\\
 & Task & $71.60_{\pm0.41}$ & $74.03_{\pm0.65}$ & $\mathbf{65.70}_{\pm1.29}$ & $51.50_{\pm2.31}$ & $78.90_{\pm0.48}$ & $86.19_{\pm0.72}$ & $71.48_{\pm1.27}$ & $77.89_{\pm1.06}$ \\
\midrule
\multirow{3}{*}{MJPropulsion} & Token & $72.61_{\pm0.65}$ & $74.49_{\pm0.53}$ & $63.92_{\pm1.41}$ & $50.43_{\pm2.33}$ & $77.91_{\pm0.18}$ & $85.49_{\pm0.80}$ & $71.60_{\pm1.53}$ & $78.06_{\pm0.92}$\\
 & Sentence & $71.20_{\pm0.91}$ & $74.53_{\pm0.88}$ & $63.73_{\pm1.72}$ & $50.49_{\pm2.66}$ & $77.34_{\pm0.47}$ & $84.92_{\pm1.04}$ & $71.10_{\pm1.79}$ & $77.39_{\pm1.21}$\\
 & Task & $72.26_{\pm0.37}$ & $\mathbf{74.62}_{\pm0.79}$ & $64.47_{\pm1.31}$ & $51.03_{\pm2.42}$ & $77.50_{\pm0.52}$ & $84.95_{\pm0.65}$ & $71.33_{\pm1.33}$ & $77.99_{\pm1.14}$ \\
\midrule
\multirow{3}{*}{MJAdaLoRA} & Token & $72.02_{\pm0.22}$ & $73.34_{\pm0.58}$ & $64.36_{\pm1.51}$ & $50.15_{\pm2.55}$ & $78.88_{\pm0.31}$ & $85.91_{\pm0.45}$ & $72.49_{\pm1.67}$ & $77.38_{\pm0.92}$\\
 & Sentence & $71.11_{\pm0.63}$ & $73.67_{\pm0.79}$ & $64.17_{\pm1.83}$ & $50.31_{\pm2.71}$ & $79.37_{\pm0.56}$ & $84.71_{\pm0.88}$ & $71.83_{\pm1.92}$ & $79.57_{\pm1.24}$\\
 & Task & $\mathbf{72.84}_{\pm0.29}$ & $73.74_{\pm0.74}$ & $65.26_{\pm1.37}$ & $49.94_{\pm2.51}$ & $\mathbf{79.73}_{\pm0.61}$ & $\mathbf{86.41}_{\pm0.67}$ & $72.64_{\pm1.39}$ & $\mathbf{79.73}_{\pm1.15}$ \\
\midrule
\multirow{3}{*}{MJLoRAFA} & Token & $72.14_{\pm0.27}$ & $74.19_{\pm0.62}$ & $65.46_{\pm1.46}$ & $51.79_{\pm2.64}$ & $79.32_{\pm0.35}$ & $86.16_{\pm0.61}$ & $71.48_{\pm1.58}$ & $77.99_{\pm0.90}$\\
 & Sentence & $71.34_{\pm0.71}$ & $73.62_{\pm0.91}$ & $64.82_{\pm1.77}$ & $51.70_{\pm2.85}$ & $78.20_{\pm0.68}$ & $84.87_{\pm0.94}$ & $\mathbf{72.94}_{\pm1.96}$ & $77.77_{\pm1.19}$\\
 & Task & $71.69_{\pm0.34}$ & $74.46_{\pm0.76}$ & $64.64_{\pm1.40}$ & $\mathbf{52.50}_{\pm2.55}$ & $78.91_{\pm0.59}$ & $86.13_{\pm0.71}$ & $71.70_{\pm1.42}$ & $77.70_{\pm1.08}$ \\
\bottomrule
\end{tabular}
}
\caption{\textbf{Token/Sentence}: unsupervised routing via representation similarity. \textbf{Task}: supervised routing using dataset ID (oracle). For task-specific routing, ARC-e and ARC-c share one expert (7 experts, 8 datasets).}
\label{tab:qa_granularity}
\end{table*}
\paragraph{Routing granularity.}
Table~\ref{tab:qa_granularity} compares three routing strategies using Llama-3-8B-Instruct on QA benchmarks. \textbf{Token-wise} and \textbf{sentence-wise} routing are unsupervised (representation-based), while \textbf{task-specific} routing uses known dataset IDs as an oracle. With 3 experts and 8 datasets, we group tasks by reasoning type: (i) HellaSwag, WinoGrande, SIQA—sentence completion and social reasoning; (ii) ARC-e, ARC-c, OBQA—science and factual knowledge; (iii) BoolQ, PIQA—reading comprehension and physical intuition. We follow the same experimental setting as Appendix~\ref{app:hyperparameters}.

Task-specific routing performs best because each group receives a dedicated expert that specializes without interference. Sentence-wise routing underperforms token-wise routing because a single routing decision cannot capture intra-sequence variation—different tokens (e.g., question vs answer) may benefit from different adapters. Token-wise routing captures this variation, closely matching the oracle (gap $<$0.5\%) despite having no task labels. This shows that task-relevant structure emerges naturally in token representations, and MJ discovers it through clustering alone.

\begin{notebox}
\textit{\textbf{Finding:} Unsupervised token-wise routing achieves near-oracle performance, demonstrating that MJ learns meaningful specialization from representations without task supervision.}
\end{notebox}

\begin{notebox}
\textit{\textbf{Extended ablations in Appendix}~\ref{app:ablation}:}
\textbf{(i)} Similarity function (§\ref{app:ablation-similarity}); 
\textbf{(ii)} Routing temperature $\tau$ (§\ref{app:ablation-temperature}); 
\textbf{(iii)} EMA smoothing $\beta$ (§\ref{app:ablation-ema}); 
\textbf{(iv)} Update schedule (§\ref{app:ablation-schedule}); 
\textbf{(v)} Projection specialization (§\ref{app:ablation-projection});
\textbf{(vi)} Linear probing validation (§\ref{app:ablation-linearprobe});
\textbf{(vii)} Expert permutation (§\ref{app:ablation-permutation});
\textbf{(viii)} Shared expert (§\ref{app:ablation-shared}); 
\textbf{(ix)} Rank sensitivity (§\ref{app:ablation-rank}); 
\textbf{(x)} Expert combinations (§\ref{app:ablation-combination}); 
\textbf{(xi)} Self-balancing (§\ref{app:ablation-balance}); 
\textbf{(xii)} $K$-means impact (§\ref{app:ablation-kmeans}); 
\textbf{(xiii)} Complexity and Parameter Analysis  (§\ref{app:ablation-complexity});
\textbf{(xiv)} Layer-wise visualization (§\ref{app:ablation-layerwise}).
\end{notebox}
\section{Related Work}
\label{sec:related}

We provide extended discussion of related work in Appendix~\ref{appendix:related}; here we summarize the key connections.

PEFT methods adapt frozen LLMs using lightweight modules such as low-rank adapters~\cite{hu2022lora,houlsby2019parameter,zaken2022bitfit,lester2021power}, but apply the same adapters uniformly to all inputs. MoE-PEFT methods~\cite{dou2024loramoe,luo2024moelora,li2024mixlora,gou2023mixture,liao2025hmora} introduce learned routing for input-dependent specialization, but add trainable router parameters and memory overhead from multi-expert evaluation. MJ bridges these approaches: it achieves MoE-style specialization while preserving the exact parameter budget of standard PEFT through gradient-free clustering-based routing, avoiding the overhead of learned routers and multi-expert execution.

\section{Conclusion}
\label{sec:conclusion}

We presented Monkey Jump, a method that achieves MoE-style specialization in PEFT without adding extra trainable parameters for experts and routing. By treating existing adapters as implicit experts and routing tokens via gradient-free clustering, MJ achieves comparable accuracy to MoE-PEFT while using 7--29$\times$ fewer trainable parameters, up to 48\% lower memory, and 1.5--2$\times$ faster training and inference. Across 47 benchmarks spanning text, image, and video tasks, MJ consistently improves over standard PEFT methods. MJ is architecture-agnostic—we demonstrated gains with LoRA, LoRA-FA, AdaLoRA, and Propulsion, showing that any adapter-based PEFT method can benefit from gradient-free routing.

\section{Limitations}
\label{sec:limitations}

While Monkey Jump achieves strong results, several limitations remain:

\paragraph{Fixed expert capacity.} MJ treats each projection adapter as an implicit expert, so the number of experts is fixed at the number of projections (typically 7). This limits specialization compared to standard MoE, which can scale to hundreds of experts. Increasing expert capacity would require adding multiple adapters per projection, reintroducing the parameter overhead MJ is designed to avoid. This trade-off is intentional: MJ prioritizes parameter efficiency over expert scalability.

\paragraph{Clustering assumptions.}
MJ assumes that token representations naturally cluster in meaningful ways that correspond to different adaptation needs. While this holds for the benchmarks we tested, highly complex or heterogeneous data distributions may benefit from learned routing that can capture more nuanced patterns beyond what $k$-means clustering provides.

\paragraph{Initialization overhead.}
MJ requires a $k$-means initialization step before training, involving a forward pass through a data subset. While this adds only a few minutes of overhead, it introduces an extra pipeline stage compared to standard PEFT methods.

\paragraph{Hyperparameter sensitivity.}
Although our ablations show MJ is robust to most hyperparameters (§\ref{app:ablation}), performance depends on choices like top-$k$ value, EMA momentum $\beta$, and update schedule. Suboptimal settings can degrade performance, particularly on small datasets.

\bibliography{custom}
\clearpage

\appendix

\onecolumn
\tableofcontents
\twocolumn

\clearpage

\section{Proof of Theorem~\ref{thm:main}}
\label{app:proof}

We analyze a single Transformer block with $E$ adapter  projections and drop layer indices for clarity.
Let $H = [h_1, \ldots, h_T] \in \mathbb{R}^{d \times T}$ denote the input token representations.
Each  projection $e \in \{1, \ldots, E\}$ has a frozen weight $\frozen{W_e}$ and a trainable adapter $\trainable{\Delta W_e}$.
We assume hard routing (top-1) throughout: each token is assigned to exactly one expert.

Shared PEFT applies all adapters uniformly:
\[
U^{\mathrm{PEFT}} = \sum_{e=1}^{E} \trainable{\Delta W_e} H = \left(\sum_{e=1}^{E} \trainable{\Delta W_e}\right) H.
\]
Monkey Jump uses token-specific routing:
\begin{align*}
U^{\mathrm{MJ}} &= \sum_{e=1}^{E} \trainable{\Delta W_e} H D_e, \nonumber \\
D_e &= \mathrm{diag}(m_{1,e}, \ldots, m_{T,e}).
\end{align*}
Let $\mathcal{E}_t = \{e : m_{t,e} > 0\}$ denote the set of active experts for token $t$, and define the activated expert set as $\mathcal{A} = \bigcup_{t=1}^{T} \mathcal{E}_t$.

\paragraph{Proof.}
We begin by defining the relevant column spaces. For each expert $e$, define
\[
\mathcal{C}_e := \mathrm{Col}(\trainable{\Delta W_e}) = \{\trainable{\Delta W_e} x : x \in \mathbb{R}^d\},
\]
which is the set of all output vectors that adapter $e$ can produce. By definition, $\dim(\mathcal{C}_e) = \mathrm{rank}(\trainable{\Delta W_e}) = r_e$. The sum of column spaces is
\[
\mathcal{C}_{\mathrm{all}} := \sum_{e=1}^{E} \mathcal{C}_e,
\]
which contains every vector of the form $v_1 + \cdots + v_E$ with $v_e \in \mathcal{C}_e$.

We first establish the upper bound for Shared PEFT. The output is
\[
U^{\mathrm{PEFT}} = \left(\sum_{e=1}^{E} \trainable{\Delta W_e}\right) H.
\]
Since $\mathrm{Col}(AB) \subseteq \mathrm{Col}(A)$ for any matrix product,
\[
\mathrm{Col}(U^{\mathrm{PEFT}}) \subseteq \mathrm{Col}\left(\sum_{e=1}^{E} \trainable{\Delta W_e}\right).
\]
Any column of $\sum_e \trainable{\Delta W_e}$ is the sum of corresponding columns from each $\trainable{\Delta W_e}$, each lying in $\mathcal{C}_e$, so
\[
\mathrm{Col}\left(\sum_{e=1}^{E} \trainable{\Delta W_e}\right) \subseteq \sum_{e=1}^{E} \mathcal{C}_e = \mathcal{C}_{\mathrm{all}}.
\]
This inclusion can be strict: directions in individual column spaces may cancel in the matrix sum. Combining these,
\[
\mathrm{rank}(U^{\mathrm{PEFT}}) \leq \mathrm{rank}\left(\sum_{e=1}^{E} \trainable{\Delta W_e}\right) \leq \dim(\mathcal{C}_{\mathrm{all}}).
\]

Next, we derive the upper bound for Monkey Jump. The output is
\[
U^{\mathrm{MJ}} = \sum_{e=1}^{E} \trainable{\Delta W_e} H D_e.
\]
The $t$-th column is $u_t^{\mathrm{MJ}} = \sum_{e=1}^{E} m_{t,e} \trainable{\Delta W_e} h_t$. Each nonzero term $m_{t,e} \trainable{\Delta W_e} h_t$ lies in $\mathcal{C}_e$, so
\[
u_t^{\mathrm{MJ}} \in \sum_{e \in \mathcal{E}_t} \mathcal{C}_e \subseteq \sum_{e \in \mathcal{A}} \mathcal{C}_e \subseteq \mathcal{C}_{\mathrm{all}},
\]
where $\mathcal{E}_t = \{e : m_{t,e} > 0\}$ and $\mathcal{A} = \bigcup_t \mathcal{E}_t$ is the set of all activated experts. Therefore,
\[
\mathrm{Col}(U^{\mathrm{MJ}}) \subseteq \sum_{e \in \mathcal{A}} \mathcal{C}_e,
\]
giving
\[
\mathrm{rank}(U^{\mathrm{MJ}}) \leq \dim\left(\sum_{e \in \mathcal{A}} \mathcal{C}_e\right) \leq \dim(\mathcal{C}_{\mathrm{all}}).
\]

We now show that Monkey Jump can achieve this upper bound under favorable conditions. Suppose the routing activates all experts, i.e., $\mathcal{A} = \{1, \ldots, E\}$. For each expert $e$, let $\mathcal{T}_e = \{t : e \in \mathcal{E}_t\}$ be the set of tokens routed to expert $e$, and let $H_e$ be the corresponding submatrix of $H$.

Under hard routing, each token activates exactly one expert, so $\{\mathcal{T}_e\}$ partitions $\{1,\ldots,T\}$. The output becomes
\[
U^{\mathrm{MJ}} = \big[\, \trainable{\Delta W_1} H_1 \;\; \trainable{\Delta W_2} H_2 \;\; \cdots \;\; \trainable{\Delta W_E} H_E \,\big].
\]
The column space of a horizontal concatenation equals the sum of column spaces:
\[
\mathrm{Col}(U^{\mathrm{MJ}}) = \sum_{e=1}^{E} \mathrm{Col}(\trainable{\Delta W_e} H_e).
\]
If the inputs routed to each expert span the row space of that adapter, then $\mathrm{Col}(\trainable{\Delta W_e} H_e) = \mathcal{C}_e$, and
\[
\mathrm{Col}(U^{\mathrm{MJ}}) = \sum_{e=1}^{E} \mathcal{C}_e = \mathcal{C}_{\mathrm{all}}.
\]
Thus $\mathrm{rank}(U^{\mathrm{MJ}}) = \dim(\mathcal{C}_{\mathrm{all}})$, achieving the upper bound.

Combining these results establishes the theorem. Under full activation with diverse inputs, $\mathrm{rank}(U^{\mathrm{MJ}}) = \dim(\mathcal{C}_{\mathrm{all}})$ while $\mathrm{rank}(U^{\mathrm{PEFT}}) \leq \dim(\mathcal{C}_{\mathrm{all}})$. Therefore $\mathrm{rank}(U^{\mathrm{MJ}}) \geq \mathrm{rank}(U^{\mathrm{PEFT}})$.

The inequality is strict when
\[
\mathrm{Col}\left(\sum_{e=1}^{E} \trainable{\Delta W_e}\right) \subsetneq \mathcal{C}_{\mathrm{all}}.
\]
This occurs when directions in individual column spaces cancel in the matrix sum.

As a concrete example, consider $E=2$ with
\[
\trainable{\Delta W_1} = \begin{bmatrix} 1 & 0 \\ 0 & 0 \end{bmatrix}, \qquad \trainable{\Delta W_2} = \begin{bmatrix} 0 & 0 \\ -1 & 0 \end{bmatrix}.
\]
Then $\mathcal{C}_1 = \mathrm{span}\{e_1\}$, $\mathcal{C}_2 = \mathrm{span}\{e_2\}$, and $\mathcal{C}_{\mathrm{all}} = \mathbb{R}^2$. The matrix sum is
\[
\trainable{\Delta W_1} + \trainable{\Delta W_2} = \begin{bmatrix} 1 & 0 \\ -1 & 0 \end{bmatrix},
\]
with $\mathrm{Col}(\trainable{\Delta W_1} + \trainable{\Delta W_2}) = \mathrm{span}\{(1,-1)^\top\}$, a one-dimensional subspace strictly contained in $\mathcal{C}_{\mathrm{all}} = \mathbb{R}^2$.

With $H = [e_1 \; e_1]$ and hard routing (token 1 to expert 1, token 2 to expert 2):
\[
U^{\mathrm{MJ}} = \begin{bmatrix} 1 & 0 \\ 0 & -1 \end{bmatrix}, \qquad U^{\mathrm{PEFT}} = \begin{bmatrix} 1 & 1 \\ -1 & -1 \end{bmatrix}.
\]
Thus $\mathrm{rank}(U^{\mathrm{MJ}}) = 2 > 1 = \mathrm{rank}(U^{\mathrm{PEFT}})$. \qed

\paragraph{Remark (Routing coverage).}
The theorem requires that all experts are activated. In practice, Monkey Jump uses $k$-means clustering with EMA updates, which encourages but does not guarantee full coverage. If some experts are never activated ($\mathcal{A} \subsetneq \{1,\ldots,E\}$), then Monkey Jump's achievable rank is limited to $\dim(\sum_{e \in \mathcal{A}} \mathcal{C}_e)$, which may be smaller than what Shared PEFT achieves. This motivates the use of load balancing or auxiliary losses to ensure diverse routing.

\clearpage

\section{\texorpdfstring{Extension to Soft Top-$k$ Routing}{Extension to Soft Top-k Routing}}
\label{app:soft-routing}

The main theorem assumes hard routing (top-1) for simplicity. Here we extend the analysis to soft top-$k$ routing, where each token activates up to $k$ experts with weighted coefficients.

Under soft top-$k$ routing, each token $t$ selects $k$ experts $\mathcal{E}_t = \mathrm{TopK}(\{p_{t,e}\}, k)$ and assigns weights $m_{t,e} = p_{t,e} \cdot \mathbb{I}[e \in \mathcal{E}_t]$. The effective adapter for token $t$ is
\[
\trainable{\Delta W_t^{\mathrm{eff}}} = \sum_{e \in \mathcal{E}_t} m_{t,e} \trainable{\Delta W_e},
\]
and the Monkey Jump output column for token $t$ is
\[
u_t^{\mathrm{MJ}} = \trainable{\Delta W_t^{\mathrm{eff}}} h_t = \sum_{e \in \mathcal{E}_t} m_{t,e} \trainable{\Delta W_e} h_t.
\]

Each output $u_t^{\mathrm{MJ}}$ lies in $\sum_{e \in \mathcal{E}_t} \mathcal{C}_e$, since it is a linear combination of vectors from the selected adapters' column spaces. The full output satisfies
\[
\mathrm{Col}(U^{\mathrm{MJ}}) \subseteq \sum_{e \in \mathcal{A}} \mathcal{C}_e,
\]
where $\mathcal{A} = \bigcup_t \mathcal{E}_t$ is the set of all activated experts. This upper bound is identical to the hard routing case.

The key difference from hard routing is that the output $U^{\mathrm{MJ}}$ is no longer a simple concatenation of per-expert blocks. Instead, each column $u_t^{\mathrm{MJ}}$ is a weighted combination of up to $k$ adapter outputs.

Let $\mathcal{P} = \{\mathcal{E}_t : t = 1, \ldots, T\}$ denote the set of distinct routing patterns. For each pattern $P \in \mathcal{P}$, define the effective adapter
\[
\trainable{\Delta W_P} = \sum_{e \in P} \bar{m}_e \trainable{\Delta W_e},
\]
where $\bar{m}_e$ represents the average routing weight for expert $e$ within pattern $P$. The achievable rank depends on how many distinct effective adapters arise from different routing patterns.

\begin{proposition}[Soft Routing Expressivity]
\label{prop:soft}
Under soft top-$k$ routing:
\begin{enumerate}[label=(\roman*), leftmargin=2em]
\item The upper bound remains
$\mathrm{rank}(U^{\mathrm{MJ}}) \leq \dim\!\left(\sum_{e \in \mathcal{A}} \mathcal{C}_e\right)$.

\item If the routing patterns are diverse (different tokens select different expert subsets),
Monkey Jump can still achieve higher rank than Shared PEFT.

\item The maximum achievable rank is
\begin{align*}
\mathrm{rank}(U^{\mathrm{MJ}}) 
&\leq \min\Big(
|\mathcal{P}| \cdot \max_P \mathrm{rank}(\Delta W_P), \\
&\qquad\qquad \dim(\mathcal{C}_{\mathrm{all}})
\Big).
\end{align*}
\end{enumerate}
\end{proposition}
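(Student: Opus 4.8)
\textbf{Proof proposal for Proposition~\ref{prop:soft}.}

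The plan is to establish the three claims in order, reusing the column-space machinery from the proof of Theorem~\ref{thm:main} in Appendix~\ref{app:proof}. For part (i), I would argue exactly as in the hard-routing case: each column $u_t^{\mathrm{MJ}} = \sum_{e \in \mathcal{E}_t} m_{t,e}\,\trainable{\Delta W_e}\,h_t$ is a linear combination of vectors drawn from $\{\mathcal{C}_e : e \in \mathcal{E}_t\}$, hence $u_t^{\mathrm{MJ}} \in \sum_{e \in \mathcal{E}_t}\mathcal{C}_e \subseteq \sum_{e \in \mathcal{A}}\mathcal{C}_e$. Taking the span over all columns gives $\mathrm{Col}(U^{\mathrm{MJ}}) \subseteq \sum_{e \in \mathcal{A}}\mathcal{C}_e$, and passing to dimensions yields the stated bound. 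This is immediate and does not use anything about the weights $m_{t,e}$ beyond nonnegativity of their support.

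For part (ii), I would proceed by exhibiting a reduction to the hard-routing scenario rather than re-deriving a tight construction from scratch. The key observation is that with $\tau \to 0$ the softmax in Equation~\ref{eq:routing} concentrates on the single most-similar center, so soft top-$k$ routing contains hard top-1 routing as a limiting case; more usefully, for any fixed diverse partition of tokens one can choose centers so that the top-$k$ sets $\mathcal{E}_t$ realize that partition's "dominant" expert while the remaining $k-1$ slots carry vanishing weight. Then the argument from Theorem~\ref{thm:main} applies: the concatenation $U^{\mathrm{MJ}} = [\trainable{\Delta W_1}H_1 \;\cdots\; \trainable{\Delta W_E}H_E]$ (up to small perturbations from the low-weight slots) has column space $\sum_e \mathrm{Col}(\trainable{\Delta W_e}H_e)$, which equals $\mathcal{C}_{\mathrm{all}}$ under the diversity hypothesis, and $\mathcal{C}_{\mathrm{all}}$ can strictly exceed $\mathrm{rank}(\sum_e \trainable{\Delta W_e}) = \mathrm{rank}(U^{\mathrm{PEFT}})$ exactly as in the rank-1 cancellation example. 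Alternatively, and more robustly, I would note that since rank is lower semicontinuous, it suffices that the hard-routing construction achieves strict inequality and that soft routing can be made arbitrarily close to it; the strict gap then persists for small $\tau$. I would phrase the claim as an existence statement ("there exist routing patterns and centers such that...") to match the hedged wording of the proposition.

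For part (iii), I would combine two ceilings on $\mathrm{rank}(U^{\mathrm{MJ}})$. First, group the $T$ columns by their routing pattern $P \in \mathcal{P}$; within the block of columns sharing pattern $P$, each column lies in $\mathrm{Col}(\trainable{\Delta W_P})$ where $\trainable{\Delta W_P}$ is the (pattern-dependent) effective adapter, so that block contributes at most $\mathrm{rank}(\trainable{\Delta W_P}) \le \max_P \mathrm{rank}(\trainable{\Delta W_P})$ independent directions; summing over the $|\mathcal{P}|$ blocks gives the first term $|\mathcal{P}|\cdot\max_P \mathrm{rank}(\trainable{\Delta W_P})$. Here I need to be slightly careful that within one pattern $P$ the routing weights $m_{t,e}$ can vary across tokens, so the effective adapters are not literally identical — but every such effective adapter has column space inside $\sum_{e \in P}\mathcal{C}_e$, whose dimension is at most $\sum_{e\in P} r_e$, and I would either absorb this into the definition of $\mathrm{rank}(\trainable{\Delta W_P})$ as written in the proposition or state the cleaner bound $\sum_{e\in P} r_e$. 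Second, part (i) already gives the ceiling $\dim(\mathcal{C}_{\mathrm{all}})$. Taking the minimum of the two yields the claimed inequality. The main obstacle I anticipate is precisely this within-pattern weight variation in part (iii): making the bound $|\mathcal{P}|\cdot\max_P\mathrm{rank}(\Delta W_P)$ fully rigorous requires committing to a precise meaning of "the effective adapter $\Delta W_P$" when the weights are token-dependent, and the honest fix is to either (a) assume weights are constant within a pattern (reasonable when $\tau$ is small), or (b) replace $\mathrm{rank}(\Delta W_P)$ with $\dim(\sum_{e\in P}\mathcal{C}_e)$, which is weight-independent and still gives a meaningful, finite bound. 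I would flag this in a short remark so the statement is not over-claimed.
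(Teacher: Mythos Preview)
Your proposal is correct and, in fact, more complete than what the paper provides. The paper does not give a formal proof of Proposition~\ref{prop:soft}; it only derives part~(i) explicitly (via the same column-space containment you give) and then states parts~(ii) and~(iii) with surrounding heuristic discussion. For part~(ii) the paper merely remarks that ``if the routing patterns are sufficiently varied and the blending coefficients differ across tokens, soft routing can still achieve high expressivity,'' without a construction; your reduction to the hard-routing case via $\tau\to 0$ and the lower-semicontinuity of rank is the natural way to make this rigorous and is more than the paper offers. For part~(iii) the paper introduces $\Delta W_P = \sum_{e\in P}\bar m_e\,\Delta W_e$ with $\bar m_e$ the ``average routing weight,'' which silently assumes away the within-pattern weight variation you flagged; your observation that one must either fix the weights within a pattern or replace $\mathrm{rank}(\Delta W_P)$ by the weight-independent ceiling $\dim\bigl(\sum_{e\in P}\mathcal C_e\bigr)$ is exactly the honest repair, and the paper does not address it. In short: your part~(i) matches the paper, and your parts~(ii)--(iii) fill gaps the paper leaves open.
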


Hard routing ($k=1$) maximizes the diversity of effective adapters: each token uses a pure adapter $\trainable{\Delta W_e}$ rather than a blend. This makes achieving the full rank $\dim(\mathcal{C}_{\mathrm{all}})$ straightforward.

Soft routing ($k > 1$) introduces blending, which can reduce diversity. However, if the routing patterns are sufficiently varied and the blending coefficients differ across tokens, soft routing can still achieve high expressivity. In practice, the temperature parameter $\tau$ controls the sharpness of routing: lower $\tau$ yields sharper (more hard-like) routing, while higher $\tau$ yields softer blending.

The theoretical analysis suggests three practical guidelines:
\begin{itemize}
\item \textbf{Lower $k$ increases expressivity:} Fewer experts per token means more distinct routing patterns, closer to the hard routing ideal.
\item \textbf{Lower $\tau$ sharpens routing:} Concentrating probability mass on fewer experts mimics hard routing benefits.
\item \textbf{Diverse routing is key:} The expressivity advantage of Monkey Jump depends on tokens being routed to different experts, which is encouraged by the $k$-means clustering and EMA updates.
\end{itemize}

\clearpage

\section{Proof of Theorem~\ref{thm:last-token}: Information Maximality}
\label{app:last-token-proof}

We establish that in causal Transformers, the last token's hidden representation is theoretically optimal for sequence-wise routing decisions.

\subsection{Definitions}

\begin{definition}[Entropy]
For a discrete random variable $X$ with probability mass function $p(x)$:
\[
H(X) = -\sum_{x \in \mathcal{X}} p(x) \log p(x)
\]
\end{definition}

\begin{definition}[Conditional Entropy]
For random variables $X$ and $Y$:
\begin{align}
H(Y \mid X) = -\sum_{x \in \mathcal{X}} \sum_{y \in \mathcal{Y}} p(x, y) \log p(y \mid x)
\end{align}
\end{definition}

\begin{definition}[Mutual Information]
For random variables $X$ and $Y$:
\begin{align}
I(X; Y) &= H(X) - H(X \mid Y) \nonumber \\
&= H(Y) - H(Y \mid X)
\end{align}
\end{definition}

\begin{definition}[Conditional Mutual Information]
For random variables $X$, $Y$, and $Z$:
\[
I(X; Y \mid Z) = H(X \mid Z) - H(X \mid Y, Z)
\]
\end{definition}

\begin{definition}[Kullback-Leibler Divergence]
For two probability distributions $P$ and $Q$ over the same sample space $\mathcal{X}$:
\[
D_{\mathrm{KL}}(P \| Q) = \sum_{x \in \mathcal{X}} P(x) \log \frac{P(x)}{Q(x)}
\]
\end{definition}

\begin{definition}[Causal Hidden Representation]
In a causal Transformer, the hidden representation $h_t$ at position $t$ is a deterministic function of the prefix tokens:
\[
h_t = f_t(X_{1:t}) = f_t(x_1, x_2, \ldots, x_t)
\]
where $f_t$ is determined by the model architecture and parameters.
\end{definition}

\subsection{Assumptions}

\begin{assumption}[Information Preservation Property]
\label{assume:info-preserve}
A causal Transformer satisfies the \emph{Information Preservation Property} if the information loss $\epsilon_t := H(X_{1:t}) - I(h_t; X_{1:t})$ satisfies:
\[
\epsilon_{t+1} - \epsilon_t \leq H(x_{t+1} \mid X_{1:t}) \quad \text{for all } t < T.
\]
This holds when additional context tokens contribute more information than is lost through the representation bottleneck.
\end{assumption}

\begin{assumption}[Attention Aggregation Property]
\label{assume:attention-agg}
A causal Transformer satisfies the \emph{Attention Aggregation Property} if the final hidden state captures the information from all positions:
\[
I(h_1, \ldots, h_{T-1}; X \mid h_T) = 0.
\]
This is satisfied when attention weights $\alpha_{T,s}^{(\ell)} > 0$ for all $s \in \{1, \ldots, T\}$ across layers and the model has sufficient capacity.
\end{assumption}

\subsection{Main Result}

\begin{theorem}[Information Maximality]
\label{thm:last-token-appendix}
Let $X = (x_1, \ldots, x_T)$ be an input sequence and $h_1, \ldots, h_T$ be the hidden representations at any layer of a causal Transformer. Then:
\begin{itemize}
    \item[\textbf{(i)}] \textbf{Context Monotonicity:} $I(X_{1:t}; X) \leq I(X_{1:t+1}; X)$ for all $t < T$.
    \item[\textbf{(ii)}] \textbf{Representation Monotonicity:} Under Assumption~\ref{assume:info-preserve}, $I(h_t; X) \leq I(h_{t+1}; X)$ for all $t < T$.
    \item[\textbf{(iii)}] \textbf{Dominance over pooling:} Under Assumption~\ref{assume:attention-agg},
    \[
    I(h_T; X) \geq I(\bar{h}; X), \quad \text{where } \bar{h} = \frac{1}{T}\sum_{t=1}^T h_t.
    \]
\end{itemize}
\end{theorem}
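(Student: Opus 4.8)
\textbf{Proof proposal for Theorem~\ref{thm:last-token-appendix} (Information Maximality).}

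The plan is to treat the three claims in sequence, since each is essentially a consequence of the data processing inequality (DPI) applied to the causal computation graph, plus in the third part a structural assumption (Assumption~\ref{assume:attention-agg}) that lets the last token subsume the pooled vector. First, for part (i) I would note that $X_{1:t}$ is literally a sub-tuple of $X_{1:t+1}$, so the map $X_{1:t+1}\mapsto X_{1:t}$ is a deterministic projection; by DPI, $I(X_{1:t};X)\le I(X_{1:t+1};X)$ follows immediately, with no assumptions needed. This part is almost a formality and I would dispatch it in a couple of lines. For part (ii) I would use the Causal Hidden Representation definition: $h_t=f_t(X_{1:t})$, hence $I(h_t;X)\le I(X_{1:t};X)$ by DPI. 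The subtlety is that DPI alone does not give $I(h_t;X)\le I(h_{t+1};X)$, because $h_{t+1}$ is a function of a larger prefix but through a \emph{different} map $f_{t+1}$, so it need not be a refinement of $h_t$. This is exactly why Assumption~\ref{assume:info-preserve} is invoked: rewriting $I(h_t;X)=I(X_{1:t};X)-\epsilon_t$ (using that conditioning $h_t$ on $X$ beyond $X_{1:t}$ adds nothing, so the gap is exactly $\epsilon_t=H(X_{1:t})-I(h_t;X_{1:t})$ after checking $I(h_t;X)=I(h_t;X_{1:t})$), the claim $I(h_t;X)\le I(h_{t+1};X)$ becomes $\epsilon_{t+1}-\epsilon_t \le I(X_{1:t+1};X)-I(X_{1:t};X)$, and the right side equals $H(x_{t+1}\mid X_{1:t})$ after a short chain-rule computation using that $X_{1:t+1}$ determines $X$ only through its own coordinates. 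So part (ii) reduces to the telescoping inequality that is precisely the hypothesis of Assumption~\ref{assume:info-preserve}.

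Part (ii$'$)/Maximality (the ``(ii) Maximality'' bullet in the main-text version) is then just the $t=T$ case combined with transitivity of the monotone chain, so I would state it as an immediate corollary. The real work is part (iii). Here the plan is: $\bar h$ is a deterministic function of $(h_1,\dots,h_T)$, so $I(\bar h;X)\le I(h_1,\dots,h_T;X)$ by DPI. Then I would decompose $I(h_1,\dots,h_T;X)$ using the chain rule for mutual information, conditioning out $h_T$ first:
\[
I(h_1,\dots,h_T;X) \;=\; I(h_T;X) \;+\; I(h_1,\dots,h_{T-1};X\mid h_T).
\]
Assumption~\ref{assume:attention-agg} says precisely that the second term vanishes, so $I(h_1,\dots,h_T;X)=I(h_T;X)$, and chaining the two displays gives $I(\bar h;X)\le I(h_T;X)$, which is the claim. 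I would also spell out the intuition already sketched in the main text — that mean pooling blends in the low-information early tokens, so it cannot exceed the single representation that has attended to everything — and point to the empirical linear-probing evidence in §\ref{app:ablation-linearprobe} as corroboration of Assumption~\ref{assume:attention-agg}.

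The main obstacle, and the place I would be most careful, is the bookkeeping in part (ii): one must justify $I(h_t;X)=I(h_t;X_{1:t})$ (equivalently $I(h_t;X_{t+1:T}\mid X_{1:t})=0$, which holds because $h_t$ is a deterministic function of $X_{1:t}$ alone, making $h_t - X_{1:t} - X_{t+1:T}$ a Markov chain and hence $h_t$ conditionally independent of the suffix given the prefix), and then show that the resulting inequality is *exactly* Assumption~\ref{assume:info-preserve} and not something slightly weaker or stronger. A secondary subtlety is that everything is phrased for discrete variables (the definitions use sums over $\mathcal X$), so if $h_t$ is continuous one should either quantize or restate in terms of differential entropies with the caveat that DPI still holds; I would add a one-line remark handling this rather than re-deriving the measure-theoretic version. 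Part (iii)'s only soft spot is that Assumption~\ref{assume:attention-agg} is an idealization (it requires strictly positive attention to every position and ``sufficient capacity''); I would flag this explicitly as the condition under which dominance is guaranteed, matching the ``under mild conditions on attention weights'' hedge in the theorem statement.
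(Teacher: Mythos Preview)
Your proposal is correct and follows essentially the same route as the paper's proof: part~(i) via the determinism of $X_{1:t}$ inside $X$ (you use DPI on the projection $X_{1:t+1}\mapsto X_{1:t}$, the paper instead computes $I(X_{1:t};X)=H(X_{1:t})$ and applies the entropy chain rule, but these are equivalent one-liners); part~(ii) via $I(h_t;X)=I(h_t;X_{1:t})$ and then the telescoping reduction to exactly Assumption~\ref{assume:info-preserve}; and part~(iii) via DPI on $\bar h=\phi(h_1,\dots,h_T)$, the chain-rule split $I(h_{1:T};X)=I(h_T;X)+I(h_{1:T-1};X\mid h_T)$, and Assumption~\ref{assume:attention-agg} killing the conditional term. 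Your additional remarks on the discrete/continuous caveat and on the idealized nature of Assumption~\ref{assume:attention-agg} are useful clarifications that the paper does not make explicit.
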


\begin{proof}
We prove all three parts by analyzing the information-theoretic properties of causal hidden representations.

\textbf{Proof of part (i).} We establish monotonicity for the cumulative token contexts $X_{1:t} = (x_1, \ldots, x_t)$.

Since $X_{1:t}$ is a sub-tuple of $X = (x_1, \ldots, x_T)$, knowing $X$ completely determines $X_{1:t}$. Formally, for any realization $x = (x_1, \ldots, x_T)$ of $X$, there is exactly one corresponding realization $x_{1:t} = (x_1, \ldots, x_t)$ of $X_{1:t}$.

This means the conditional probability satisfies:
\[
p(x_{1:t} \mid x) = \begin{cases} 1 & \text{if } x_{1:t} = (x_1, \ldots, x_t) \\ 0 & \text{otherwise} \end{cases}
\]

By Definition 2, the conditional entropy is:
\begin{align}
&H(X_{1:t} \mid X) \nonumber \\
&= -\sum_{x \in \mathcal{X}} \sum_{x_{1:t} \in \mathcal{X}_{1:t}} p(x, x_{1:t}) \log p(x_{1:t} \mid x)
\end{align}

Since $p(x_{1:t} \mid x) = 1$ when $x_{1:t}$ matches the first $t$ components of $x$, and $\log 1 = 0$:
\begin{align}
H(X_{1:t} \mid X) &= -\sum_{x \in \mathcal{X}} p(x) \cdot 1 \cdot \log 1 \nonumber \\
&= -\sum_{x \in \mathcal{X}} p(x) \cdot 0 = 0
\end{align}

By Definition 3, the mutual information between $X_{1:t}$ and $X$ is:
\[
I(X_{1:t}; X) = H(X_{1:t}) - H(X_{1:t} \mid X)
\]

Substituting $H(X_{1:t} \mid X) = 0$:
\begin{equation}
\label{eq:context_mi_t}
I(X_{1:t}; X) = H(X_{1:t}) - 0 = H(X_{1:t})
\end{equation}

Similarly, for $X_{1:t+1}$:
\begin{align}
\label{eq:context_mi_t1}
I(X_{1:t+1}; X) &= H(X_{1:t+1}) - H(X_{1:t+1} \mid X) \nonumber \\
&= H(X_{1:t+1}) - 0 = H(X_{1:t+1})
\end{align}

We now show that $H(X_{1:t}) \leq H(X_{1:t+1})$.

By the chain rule of entropy, the joint entropy of $(X_{1:t}, x_{t+1})$ can be decomposed as:
\[
H(X_{1:t}, x_{t+1}) = H(X_{1:t}) + H(x_{t+1} \mid X_{1:t})
\]

Since $X_{1:t+1} = (X_{1:t}, x_{t+1})$ by definition:
\begin{equation}
\label{eq:entropy_chain}
H(X_{1:t+1}) = H(X_{1:t}) + H(x_{t+1} \mid X_{1:t})
\end{equation}

We now show that $H(x_{t+1} \mid X_{1:t}) \geq 0$.

By Definition 2:
\begin{align}
&H(x_{t+1} \mid X_{1:t}) \nonumber \\
&= -\sum_{x_{1:t} \in \mathcal{X}_{1:t}} \sum_{x_{t+1} \in \mathcal{X}_{t+1}} p(x_{1:t}, x_{t+1}) \nonumber \\
&\quad \cdot \log p(x_{t+1} \mid x_{1:t})
\end{align}

Using the chain rule of probability $p(x_{1:t}, x_{t+1}) = p(x_{1:t}) p(x_{t+1} \mid x_{1:t})$:
\begin{align}
&H(x_{t+1} \mid X_{1:t}) \nonumber \\
&= -\sum_{x_{1:t} \in \mathcal{X}_{1:t}} p(x_{1:t}) \nonumber \\
&\quad \cdot \sum_{x_{t+1} \in \mathcal{X}_{t+1}} p(x_{t+1} \mid x_{1:t}) \log p(x_{t+1} \mid x_{1:t})
\end{align}

This can be written as an expectation over $X_{1:t}$:
\begin{align}
&H(x_{t+1} \mid X_{1:t}) \nonumber \\
&= \sum_{x_{1:t} \in \mathcal{X}_{1:t}} p(x_{1:t}) \cdot H(x_{t+1} \mid X_{1:t} = x_{1:t})
\end{align}

where the conditional entropy given a specific value $x_{1:t}$ is:
\begin{align}
&H(x_{t+1} \mid X_{1:t} = x_{1:t}) \nonumber \\
&= -\sum_{x_{t+1} \in \mathcal{X}_{t+1}} p(x_{t+1} \mid x_{1:t}) \log p(x_{t+1} \mid x_{1:t})
\end{align}

For any probability distribution $p(x_{t+1} \mid x_{1:t})$ over $\mathcal{X}_{t+1}$, the entropy is non-negative:
\[
H(x_{t+1} \mid X_{1:t} = x_{1:t}) \geq 0 \quad \text{for all } x_{1:t} \in \mathcal{X}_{1:t}
\]

Since $p(x_{1:t}) \geq 0$ for all $x_{1:t}$, the weighted sum is also non-negative:
\begin{align}
&H(x_{t+1} \mid X_{1:t}) \nonumber \\
&= \sum_{x_{1:t} \in \mathcal{X}_{1:t}} p(x_{1:t}) \cdot H(x_{t+1} \mid X_{1:t} = x_{1:t}) \geq 0
\end{align}

Returning to equation~\eqref{eq:entropy_chain}:
\[
H(X_{1:t+1}) = H(X_{1:t}) + H(x_{t+1} \mid X_{1:t})
\]

Since $H(x_{t+1} \mid X_{1:t}) \geq 0$:
\begin{align}
H(X_{1:t+1}) &= H(X_{1:t}) + H(x_{t+1} \mid X_{1:t}) \nonumber \\
&\geq H(X_{1:t}) + 0 = H(X_{1:t})
\end{align}

Therefore:
\[
H(X_{1:t+1}) \geq H(X_{1:t})
\]

Combining with equations~\eqref{eq:context_mi_t} and~\eqref{eq:context_mi_t1}:
\begin{align}
I(X_{1:t+1}; X) &= H(X_{1:t+1}) \nonumber \\
&\geq H(X_{1:t}) = I(X_{1:t}; X)
\end{align}

This inequality holds for all $t \in \{1, 2, \ldots, T-1\}$.

Applying this result iteratively:

For $t = T-1$:
\[
I(X_{1:T}; X) \geq I(X_{1:T-1}; X)
\]

For $t = T-2$:
\[
I(X_{1:T-1}; X) \geq I(X_{1:T-2}; X)
\]

Continuing this pattern for $t = T-3, T-4, \ldots, 2, 1$:
\begin{align}
I(X_{1:T-2}; X) &\geq I(X_{1:T-3}; X) \geq \cdots \nonumber \\
&\geq I(X_{1:2}; X) \geq I(X_{1:1}; X)
\end{align}

Combining all these inequalities by transitivity:
\begin{align}
I(X_{1:T}; X) &\geq I(X_{1:T-1}; X) \geq \cdots \nonumber \\
&\geq I(X_{1:2}; X) \geq I(X_{1:1}; X)
\end{align}

This establishes part \textbf{(i)}.

\textbf{Proof of part (ii).} We now extend the monotonicity result to the hidden representations $h_t$ under Assumption~\ref{assume:info-preserve}.

In a causal Transformer, the hidden representation $h_t$ at position $t$ is a deterministic function of the prefix tokens $X_{1:t}$:
\[
h_t = f_t(X_{1:t})
\]

We analyze the mutual information $I(h_t; X)$ by decomposing it using the chain rule.

The full sequence $X$ can be written as the concatenation of $X_{1:t}$ and $X_{t+1:T}$:
\[
X = (X_{1:t}, X_{t+1:T})
\]

By the chain rule of mutual information:
\[
I(h_t; X) = I(h_t; X_{1:t}, X_{t+1:T})
\]

Expanding using the chain rule:
\begin{align}
\label{eq:ht_chain}
&I(h_t; X_{1:t}, X_{t+1:T}) \nonumber \\
&= I(h_t; X_{1:t}) + I(h_t; X_{t+1:T} \mid X_{1:t})
\end{align}

We now evaluate the conditional mutual information $I(h_t; X_{t+1:T} \mid X_{1:t})$.

By Definition 4:
\begin{align}
\label{eq:cond_mi}
&I(h_t; X_{t+1:T} \mid X_{1:t}) \nonumber \\
&= H(h_t \mid X_{1:t}) - H(h_t \mid X_{t+1:T}, X_{1:t})
\end{align}

Since $h_t = f_t(X_{1:t})$ is a deterministic function of $X_{1:t}$, knowing $X_{1:t}$ completely determines $h_t$. Therefore:
\[
H(h_t \mid X_{1:t}) = 0
\]

Similarly, since $(X_{t+1:T}, X_{1:t}) = X$ contains $X_{1:t}$ as a component, and $h_t$ is determined by $X_{1:t}$:
\[
H(h_t \mid X_{t+1:T}, X_{1:t}) = H(h_t \mid X) = 0
\]

Substituting into equation~\eqref{eq:cond_mi}:
\[
I(h_t; X_{t+1:T} \mid X_{1:t}) = 0 - 0 = 0
\]

Returning to equation~\eqref{eq:ht_chain}:
\begin{align}
I(h_t; X) &= I(h_t; X_{1:t}) + I(h_t; X_{t+1:T} \mid X_{1:t}) \nonumber \\
&= I(h_t; X_{1:t}) + 0 = I(h_t; X_{1:t})
\end{align}

Therefore, the mutual information between $h_t$ and $X$ equals the mutual information between $h_t$ and its causal context:
\begin{equation}
\label{eq:ht_equals_context}
I(h_t; X) = I(h_t; X_{1:t})
\end{equation}

We now establish an upper bound on $I(h_t; X_{1:t})$.

Since $h_t = f_t(X_{1:t})$ is a deterministic function of $X_{1:t}$, the random variables form a Markov chain:
\[
X \to X_{1:t} \to h_t
\]

This means that $h_t$ is conditionally independent of $X$ given $X_{1:t}$:
\[
p(h_t \mid X_{1:t}, X) = p(h_t \mid X_{1:t})
\]

By the Data Processing Inequality, for any Markov chain $A \to B \to C$:
\[
I(A; C) \leq I(A; B)
\]

Applying this to our Markov chain with $A = X$, $B = X_{1:t}$, and $C = h_t$:
\[
I(X; h_t) \leq I(X; X_{1:t})
\]

Since mutual information is symmetric, $I(X; h_t) = I(h_t; X)$ and $I(X; X_{1:t}) = I(X_{1:t}; X)$:
\[
I(h_t; X) \leq I(X_{1:t}; X)
\]

From equation~\eqref{eq:context_mi_t}, we have $I(X_{1:t}; X) = H(X_{1:t})$, so:
\[
I(h_t; X) \leq H(X_{1:t})
\]

Similarly, for $h_{t+1} = f_{t+1}(X_{1:t+1})$:
\[
I(h_{t+1}; X) \leq H(X_{1:t+1})
\]

We now establish the monotonicity for hidden representations using Assumption~\ref{assume:info-preserve}.

By Assumption~\ref{assume:info-preserve}, the information loss $\epsilon_t := H(X_{1:t}) - I(h_t; X_{1:t})$ satisfies:
\[
\epsilon_{t+1} - \epsilon_t \leq H(x_{t+1} \mid X_{1:t})
\]

This can be rewritten as:
\begin{align}
&H(X_{1:t+1}) - I(h_{t+1}; X_{1:t+1}) \nonumber \\
&\quad - H(X_{1:t}) + I(h_t; X_{1:t}) \leq H(x_{t+1} \mid X_{1:t})
\end{align}

Using the chain rule $H(X_{1:t+1}) = H(X_{1:t}) + H(x_{t+1} \mid X_{1:t})$ from equation~\eqref{eq:entropy_chain}:
\begin{align}
&H(X_{1:t}) + H(x_{t+1} \mid X_{1:t}) - I(h_{t+1}; X_{1:t+1}) \nonumber \\
&\quad - H(X_{1:t}) + I(h_t; X_{1:t}) \leq H(x_{t+1} \mid X_{1:t})
\end{align}

Simplifying:
\begin{align}
&H(x_{t+1} \mid X_{1:t}) - I(h_{t+1}; X_{1:t+1}) + I(h_t; X_{1:t}) \nonumber \\
&\quad \leq H(x_{t+1} \mid X_{1:t})
\end{align}

\[
- I(h_{t+1}; X_{1:t+1}) + I(h_t; X_{1:t}) \leq 0
\]

\[
I(h_t; X_{1:t}) \leq I(h_{t+1}; X_{1:t+1})
\]

Combining with equation~\eqref{eq:ht_equals_context}:
\begin{align}
I(h_t; X) &= I(h_t; X_{1:t}) \nonumber \\
&\leq I(h_{t+1}; X_{1:t+1}) = I(h_{t+1}; X)
\end{align}

Therefore:
\[
I(h_t; X) \leq I(h_{t+1}; X) \quad \text{for all } t < T
\]

This establishes part \textbf{(ii)}.

Applying part (ii) iteratively:

For $t = T-1$:
\[
I(h_{T-1}; X) \leq I(h_T; X)
\]

For $t = T-2$:
\[
I(h_{T-2}; X) \leq I(h_{T-1}; X)
\]

Continuing this pattern for $t = T-3, T-4, \ldots, 2, 1$:
\begin{align}
&I(h_{T-3}; X) \leq I(h_{T-2}; X), \quad \ldots, \nonumber \\
&\quad I(h_1; X) \leq I(h_2; X)
\end{align}

Combining all these inequalities by transitivity:
\begin{align}
I(h_1; X) &\leq I(h_2; X) \leq \cdots \nonumber \\
&\leq I(h_{T-1}; X) \leq I(h_T; X)
\end{align}

Therefore, for all $t \leq T$:
\[
I(h_T; X) \geq I(h_t; X)
\]

\textbf{Proof of part (iii).} We now prove that the last token dominates mean pooling in terms of mutual information with the input, under Assumption~\ref{assume:attention-agg}.

The mean-pooled representation is defined as:
\[
\bar{h} = \frac{1}{T}\sum_{t=1}^T h_t
\]

This is a deterministic function of the tuple $(h_1, h_2, \ldots, h_T)$:
\[
\bar{h} = \phi(h_1, h_2, \ldots, h_T)
\]

where $\phi$ is the averaging function.

Since $\bar{h}$ is determined by $(h_1, \ldots, h_T)$, we have the Markov chain:
\[
X \to (h_1, h_2, \ldots, h_T) \to \bar{h}
\]

By the Data Processing Inequality:
\begin{equation}
\label{eq:pooling_dpi}
I(\bar{h}; X) \leq I(h_1, h_2, \ldots, h_T; X)
\end{equation}

We now analyze $I(h_1, h_2, \ldots, h_T; X)$.

Each hidden representation $h_t = f_t(X_{1:t})$ is a deterministic function of $X_{1:t}$. The tuple $(h_1, \ldots, h_T)$ is therefore a deterministic function of $X$:
\begin{align}
&(h_1, h_2, \ldots, h_T) \nonumber \\
&= (f_1(X_{1:1}), f_2(X_{1:2}), \ldots, f_T(X_{1:T})) = F(X)
\end{align}

for some function $F$ determined by the model.

This gives us the Markov chain:
\[
X \to (h_1, h_2, \ldots, h_T)
\]

By the Data Processing Inequality applied in reverse (since $(h_1, \ldots, h_T) = F(X)$ is a deterministic function):
\[
I(h_1, h_2, \ldots, h_T; X) \leq H(X)
\]

In causal Transformers, the final hidden state $h_T$ has access to all positions through the causal attention mechanism. At each layer $\ell$, position $T$ computes:
\begin{align}
h_T^{(\ell)} &= \text{Attention}^{(\ell)}(Q_T^{(\ell)}, K_{1:T}^{(\ell)}, V_{1:T}^{(\ell)}) \nonumber \\
&\quad + h_T^{(\ell-1)}
\end{align}

where the attention weights are:
\begin{align}
\alpha_{T,s}^{(\ell)} &= \frac{\exp(Q_T^{(\ell)} \cdot K_s^{(\ell)} / \sqrt{d})}{\sum_{s'=1}^{T} \exp(Q_T^{(\ell)} \cdot K_{s'}^{(\ell)} / \sqrt{d})} \nonumber \\
&\quad \text{for } s \in \{1, \ldots, T\}
\end{align}

By Assumption~\ref{assume:attention-agg}, we have:
\[
I(h_1, \ldots, h_{T-1}; X \mid h_T) = 0
\]

We can decompose the joint mutual information using the chain rule:
\begin{align}
&I(h_1, \ldots, h_T; X) \nonumber \\
&= I(h_T; X) + I(h_1, \ldots, h_{T-1}; X \mid h_T)
\end{align}

This follows from the chain rule of mutual information:
\[
I(A, B; C) = I(A; C) + I(B; C \mid A)
\]

with $A = h_T$, $B = (h_1, \ldots, h_{T-1})$, and $C = X$.

Substituting Assumption~\ref{assume:attention-agg}:
\begin{align}
&I(h_1, \ldots, h_T; X) \nonumber \\
&= I(h_T; X) + I(h_1, \ldots, h_{T-1}; X \mid h_T) \nonumber \\
&= I(h_T; X) + 0 = I(h_T; X)
\end{align}

Returning to equation~\eqref{eq:pooling_dpi}:
\begin{align}
I(\bar{h}; X) &\leq I(h_1, h_2, \ldots, h_T; X) \nonumber \\
&= I(h_T; X)
\end{align}

Therefore:
\[
I(h_T; X) \geq I(\bar{h}; X)
\]

This establishes part \textbf{(iii)} and completes the proof.
\end{proof}

Theorem~\ref{thm:last-token-appendix} establishes that in causal Transformers satisfying the Information Preservation Property (Assumption~\ref{assume:info-preserve}), the last token's hidden representation $h_T$ contains the most information about the input sequence $X$ among all single-position representations. Furthermore, under the Attention Aggregation Property (Assumption~\ref{assume:attention-agg}), $h_T$ contains at least as much information as mean pooling. These properties provide theoretical justification for using the last token for sequence-wise routing decisions, as it maximizes the information available for determining expert assignment.
\clearpage

\section{Extended Ablation Studies}
\label{app:ablation}
\begin{figure*}[!htb]
    \centering
    \includegraphics[width=1.00\linewidth]{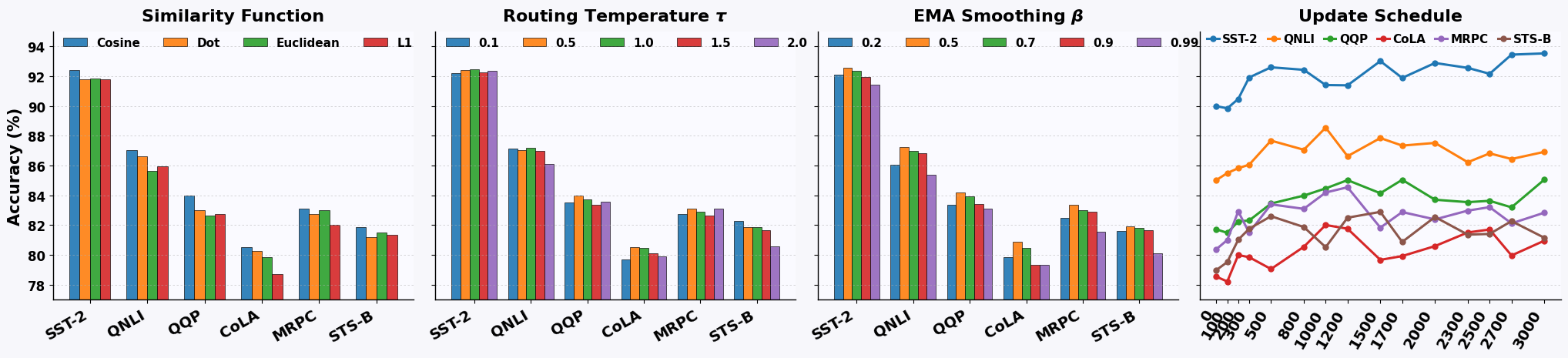}
    \caption{Routing hyper-parameters. (a) Similarity function: cosine similarity outperforms distance-based metrics. (b) Routing temperature $\tau$: best at 1.0, stable across [0.5, 1.5]. (c) EMA smoothing $\beta$: best at 0.7, stable across [0.5, 0.9]. (d) Update schedule: performance improves with updates until 1500--2000 steps, then saturates.}
    \label{fig:router}
\end{figure*}

We provide additional ablation studies to analyze MJ's sensitivity to hyperparameters and design choices. 

\subsection{Similarity Function}
\label{app:ablation-similarity}

Figure~\ref{fig:router}(a) compares four similarity functions for computing routing scores between token representations and cluster centers: cosine similarity, dot product, Euclidean distance, and L1 distance.

Cosine similarity achieves the best overall performance across all six GLUE tasks. This is expected because cosine similarity is scale-invariant—it measures the angle between vectors rather than their magnitudes. In high-dimensional representation spaces, token embeddings can have varying norms depending on their position and content, making scale-invariant measures more robust for routing decisions.

Dot product performs comparably to cosine on most tasks but shows slightly higher variance. Euclidean and L1 distances perform worse, particularly on tasks like MRPC and CoLA. Distance-based metrics are sensitive to the absolute scale of representations, which can vary significantly across layers and tokens.

\begin{notebox}
\textit{\textbf{Recommendation:} Use cosine similarity for routing. It is scale-invariant and consistently outperforms distance-based metrics across all tasks.}
\end{notebox}

\subsection{Routing Temperature}
\label{app:ablation-temperature}

Figure~\ref{fig:router}(b) ablates the softmax temperature $\tau$ used in the routing mechanism. Lower temperatures produce sharper (more concentrated) routing distributions, while higher temperatures produce softer (more uniform) distributions.

We test $\tau \in \{0.1, 0.5, 1.0, 1.5, 2.0\}$. Very low temperature ($\tau = 0.1$) leads to near-hard routing where most probability mass concentrates on a single expert, reducing the benefits of soft top-$k$ weighting. Very high temperature ($\tau = 2.0$) spreads probability too uniformly, diminishing the specialization effect of routing.

The best performance is achieved at $\tau = 1.0$, which provides a balanced trade-off between specialization and smoothness. Performance is relatively stable across $\tau \in [0.5, 1.5]$, indicating that MJ is not overly sensitive to this hyperparameter.

\subsection{EMA Smoothing Factor}
\label{app:ablation-ema}

Figure~\ref{fig:router}(c) ablates the EMA momentum coefficient $\beta$ used for online center updates. Higher $\beta$ values make centers update more slowly (more weight on historical values), while lower $\beta$ values make centers adapt more quickly to recent batches.

We test $\beta \in \{0.2, 0.5, 0.7, 0.9, 0.99\}$. Very low momentum ($\beta = 0.2$) causes centers to fluctuate rapidly, potentially destabilizing routing decisions. Very high momentum ($\beta = 0.99$) makes centers update too slowly, preventing them from tracking the evolving token distribution.

The best performance is achieved at $\beta = 0.7$, with $\beta \in [0.5, 0.9]$ performing comparably. This range provides a good balance: centers are stable enough to provide consistent routing decisions, yet adaptive enough to track distributional shifts as adapter weights change.

\begin{notebox}
\textit{\textbf{Finding:} MJ is robust to hyperparameter choices. Temperature $\tau \in [0.5, 1.5]$ and EMA momentum $\beta \in [0.5, 0.9]$ all achieve comparable performance, simplifying practical deployment.}
\end{notebox}

\subsection{Update Schedule}
\label{app:ablation-schedule}

Figure~\ref{fig:router}(d) ablates the update schedule—specifically, until which training step we continue updating cluster centers via EMA, after which centers are frozen.

We test stopping points from 0 (no updates, use only $k$-means initialization) to 3000 steps. Performance generally improves as we allow more updates, with most tasks showing improvement up to 1500--2000 steps. Beyond this point, additional updates provide diminishing returns.

Early stopping of updates (0--500 steps) performs worst because centers initialized via $k$-means on the frozen backbone become misaligned as adapter weights change during training. Allowing updates until 1500--2000 steps lets centers track the evolving token distribution during the critical early phase of adapter training. Stopping updates before training ends (rather than updating throughout) provides stable routing decisions during the final fine-tuning phase.

\begin{notebox}
\textit{\textbf{Key insight:} Centers should be updated during early training to track adapter changes, then frozen at 50--70\% of training for stable final convergence.}
\end{notebox}

\begin{figure*}[t]
    \centering
    \includegraphics[width=\linewidth]{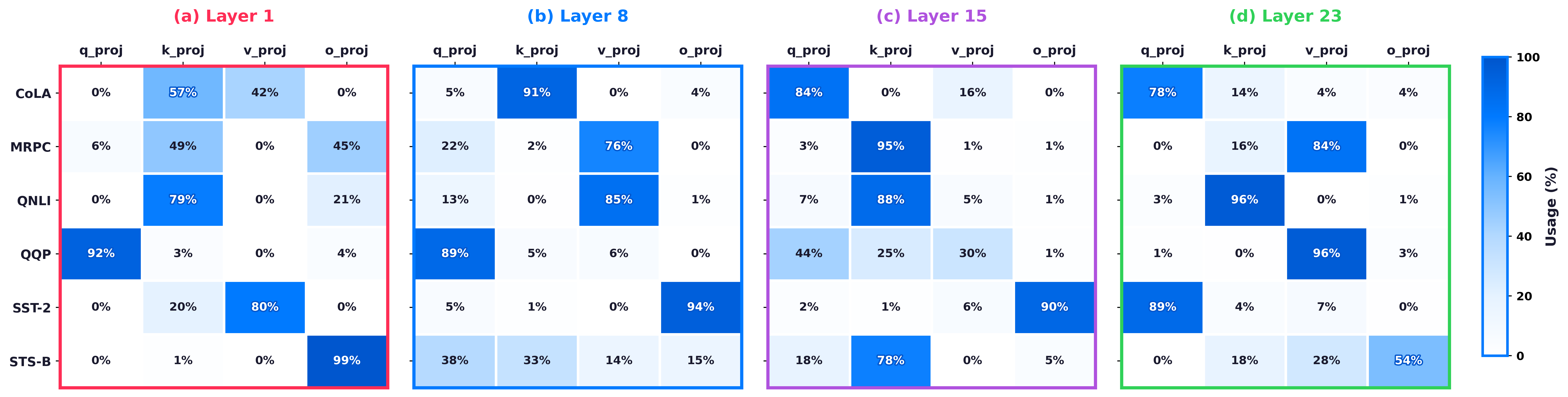}
    \caption{Expert usage across layers and GLUE tasks. Each heatmap shows the percentage of tokens routed to each attention projection (Q, K, V, O). Different tasks prefer different projections, and preferences evolve across layers—validating the submodules-as-experts design.}
    \label{fig:projection_routing}
\end{figure*}

\subsection{Projection Specialization}
\label{app:ablation-projection}

A key design choice in MJ is treating different projections (Q, K, V, O) as implicit experts. A natural question arises: do different tasks actually prefer different projections, or is this distinction unnecessary? We analyze routing patterns across layers and tasks to justify this design.

Figure~\ref{fig:projection_routing} shows expert usage heatmaps for six GLUE tasks across four representative layers (1, 8, 15, 23) using attention projections. Each cell indicates what percentage of tokens from a given task are routed to each projection. The heatmaps reveal clear task-specific routing preferences. In Layer 1, QQP routes 92\% of tokens to Q projection, while STS-B routes 99\% to O projection. SST-2 prefers V projection (80\%), and QNLI prefers K projection (79\%). This diversity confirms that different tasks benefit from different projections—treating them as interchangeable would lose this specialization.

Interestingly, tasks with similar structure share routing patterns. QNLI (question-answering inference) and MRPC (paraphrase detection) are both sentence-pair tasks requiring comparison between two text segments. Their routing patterns align across multiple layers: in Layer 1, both prefer K projection (QNLI: 79\%, MRPC: 49\%); in Layer 8, both shift to V projection (QNLI: 85\%, MRPC: 76\%); in Layer 15, both strongly prefer K projection (QNLI: 88\%, MRPC: 95\%). This suggests that MJ discovers task similarity through representation clustering—semantically related tasks naturally route to the same projections without explicit supervision.

The routing patterns also evolve across layers. CoLA shifts from K projection (57\%) in Layer 1 to K (91\%) in Layer 8, then to Q projection (84\%) in Layer 15 and remains Q-dominant (78\%) in Layer 23. SST-2 transitions from V (80\%) in early layers to O (94\%) in middle layers to Q (89\%) in later layers. Early layers show more distributed routing—multiple projections receive significant traffic—while later layers show sharper specialization with single projections dominating (e.g., QNLI: 96\% K, QQP: 96\% V, SST-2: 89\% Q in Layer 23). This aligns with the understanding that later Transformer layers capture more task-specific features.

\begin{notebox}
\textit{Key finding: Different tasks route to different projections, and semantically similar tasks (e.g., QNLI and MRPC) share routing patterns. This justifies treating projections as implicit experts—they naturally specialize for different tasks without explicit supervision.}
\end{notebox}


\subsection{Linear Probing for Last-Token Routing}
\label{app:ablation-linearprobe}

Theorem~\ref{thm:last-token} states that in causal Transformers, later tokens contain more mutual information about the full sequence than earlier tokens. We validate this empirically using linear probing on frozen representations.

Figure~\ref{fig:linear_probing}(a,b) shows linear probing accuracy at Layers 23 and 15 using different token positions for classification. We extract representations from Token-1 (position 1), Token-2 (10th from end), Token-3 (20th from end), and Token-4 (30th from end), as well as three pooling strategies: mean, max, and attention pooling. A linear classifier is trained on each representation to predict task labels.

The results strongly support Theorem~\ref{thm:last-token}. Token-4 (closest to the end) consistently outperforms earlier tokens across all tasks: on Layer 23, Token-4 achieves 82--90\% accuracy compared to 47--57\% for Token-1. The performance gap is substantial—later tokens contain significantly more task-relevant information due to causal attention aggregating context from all previous positions. Pooling methods perform best overall, with attention pooling slightly outperforming mean and max pooling on most tasks. However, the gap between Token-4 and pooling methods is small (1--3\%), suggesting that the last few tokens already capture most sequence information. Layer 15 shows similar trends but with slightly lower overall accuracy, indicating that task-specific information is more concentrated in later layers.

\begin{notebox}
\textit{Empirical validation: Later tokens contain more information than earlier tokens, confirming Theorem~\ref{thm:last-token}. This justifies using last-token representations for sequence-wise routing.}
\end{notebox}

\subsection{Expert Permutation Analysis}
\label{app:ablation-permutation}

\begin{figure*}[t]
    \centering
    \includegraphics[width=\linewidth]{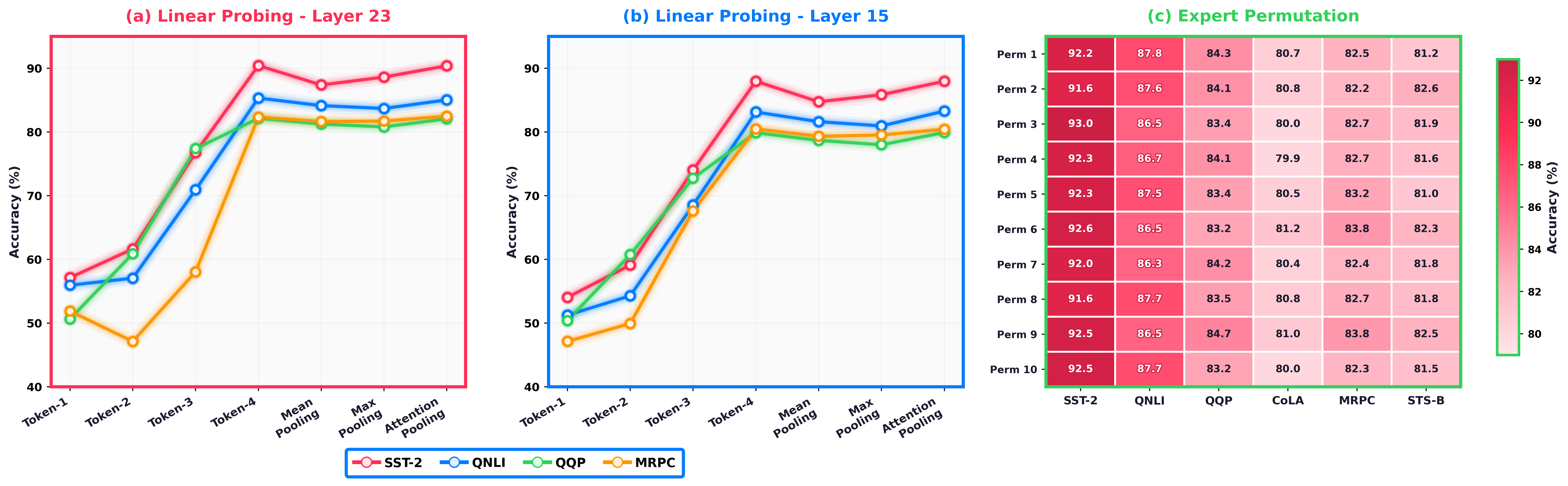}
    \caption{(a,b) Linear probing accuracy using different token positions and pooling strategies at Layers 23 and 15. Later tokens (Token-4) outperform earlier tokens, validating Theorem~\ref{thm:last-token}. (c) Expert permutation analysis: shuffling projection-to-expert assignments has minimal impact ($<$0.5\% variance), showing MJ is robust to specific assignments.}
    \label{fig:linear_probing}
\end{figure*}

A natural question is whether the specific assignment of projections to expert slots matters, or whether the routing mechanism itself drives performance. We investigate this by keeping the router exactly the same but shuffling which projection each expert slot controls.

Figure~\ref{fig:linear_probing}(c) shows results across 10 random permutations on GLUE tasks. Performance remains remarkably stable: SST-2 varies only between 91.6--93.0\%, QNLI between 86.3--87.8\%, and other tasks show similar consistency. The standard deviation across permutations is less than 0.5\% for most tasks.

This finding aligns with observations from the original LoRA paper~\cite{hu2022lora} (Table 5), which showed that applying LoRA to different projections yields very similar performance. The key insight is that the routing mechanism—deciding which adapters to activate for each token—matters more than the specific projection assigned to each expert slot. As long as tokens are routed consistently based on their representations, the model learns to utilize whichever projections are assigned effectively.

\begin{notebox}
\textit{Key finding: Expert permutation has minimal impact on performance ($<$0.5\% variance). The routing mechanism matters more than specific projection assignments—MJ is robust to expert-projection mapping.}
\end{notebox}


\subsection{Shared Expert Selection}
\label{app:ablation-shared}
\begin{figure*}[!htb]
    \centering
    \includegraphics[width=1.0\linewidth]{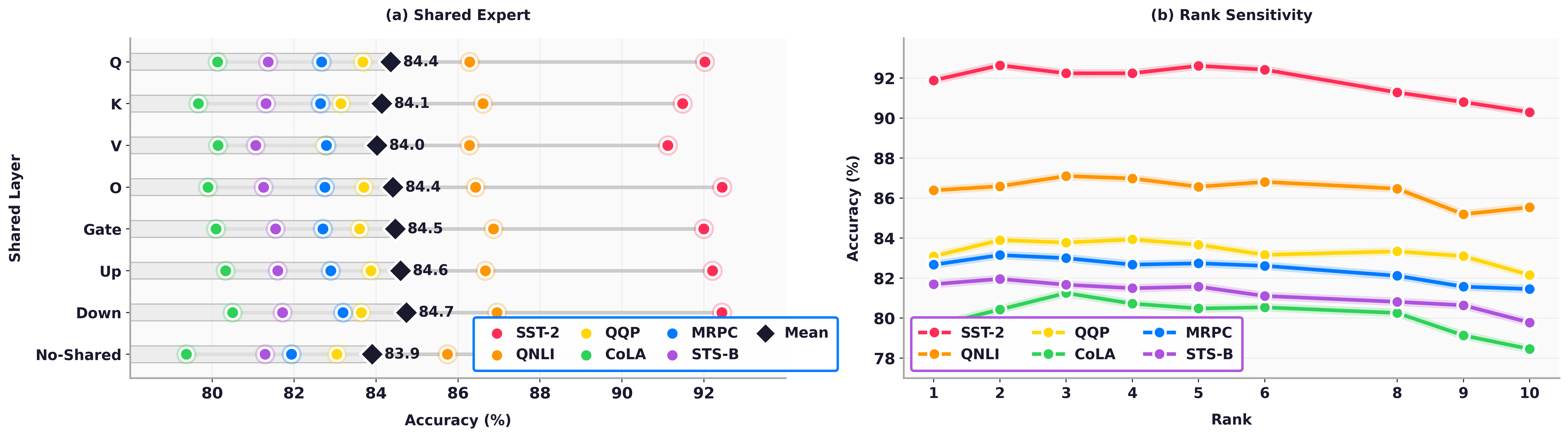}
    \caption{Ablation on shared expert and rank. (a) Shared expert selection: Up and Down projections achieve best performance (84.5\%); no-shared performs worst (83.9\%). (b) Rank sensitivity: performance improves up to rank 4--6, then plateaus or decreases due to overfitting.}
    \label{fig:shared_expert_rank}
\end{figure*}

Figure~\ref{fig:shared_expert_rank}(a) ablates which  projection to designate as the shared expert—the adapter that remains always active ($m_{t,e^*} = 1$) regardless of routing decisions. We compare using each of the seven  projections (Q, K, V, Gate, Up, Down) as the shared expert, as well as a no-shared baseline where all adapters participate in routing.

The results show that using Up or Down projection as the shared expert achieves the best mean accuracy (84.5\%), followed closely by Q and Gate (84.4\%). Using V as the shared expert performs slightly worse (84.0\%), and the no-shared configuration performs worst (83.9\%).

The FFN projections (Up, Down) are effective shared experts because they apply broad, task-general transformations that benefit all tokens. In contrast, attention projections (Q, K, V) are more input-specific—their optimal behavior varies depending on the token's role in the sequence. Having a shared FFN adapter provides a stable "backbone" adaptation, while routing specializes the attention adapters for different token types.

\begin{notebox}
\textit{\textbf{Recommendation:} Use Up or Down projection as the shared expert. FFN adapters provide task-general adaptation that complements token-specific routing in attention layers.}
\end{notebox}

\subsection{Rank Sensitivity}
\label{app:ablation-rank}

Figure~\ref{fig:shared_expert_rank}(b) ablates the LoRA rank $r$ used in MJ adapters, varying from 1 to 10. Higher rank increases adapter capacity but also increases the risk of overfitting, especially on smaller datasets.

Performance improves as rank increases from 1 to 4, with most tasks showing clear gains. From rank 4 to 6, improvements continue but at a slower rate. Beyond rank 6, performance plateaus or slightly decreases on some tasks (e.g., CoLA, MRPC), indicating the onset of overfitting.

The sensitivity to rank varies by task. Tasks with larger training sets (SST-2, QNLI, QQP) continue to benefit from higher ranks up to 8--10. Tasks with smaller training sets (CoLA, MRPC) peak at rank 4--6 and degrade slightly at higher ranks. STS-B shows relatively stable performance across all ranks.

\begin{notebox}
\textit{\textbf{Finding:} Rank 4--6 provides the best trade-off between capacity and generalization. Higher ranks risk overfitting, especially on smaller datasets. For resource-constrained settings, rank 2 already achieves competitive performance.}
\end{notebox}


\begin{figure*}[!htb]
    \centering
    \includegraphics[width=1.0\linewidth]{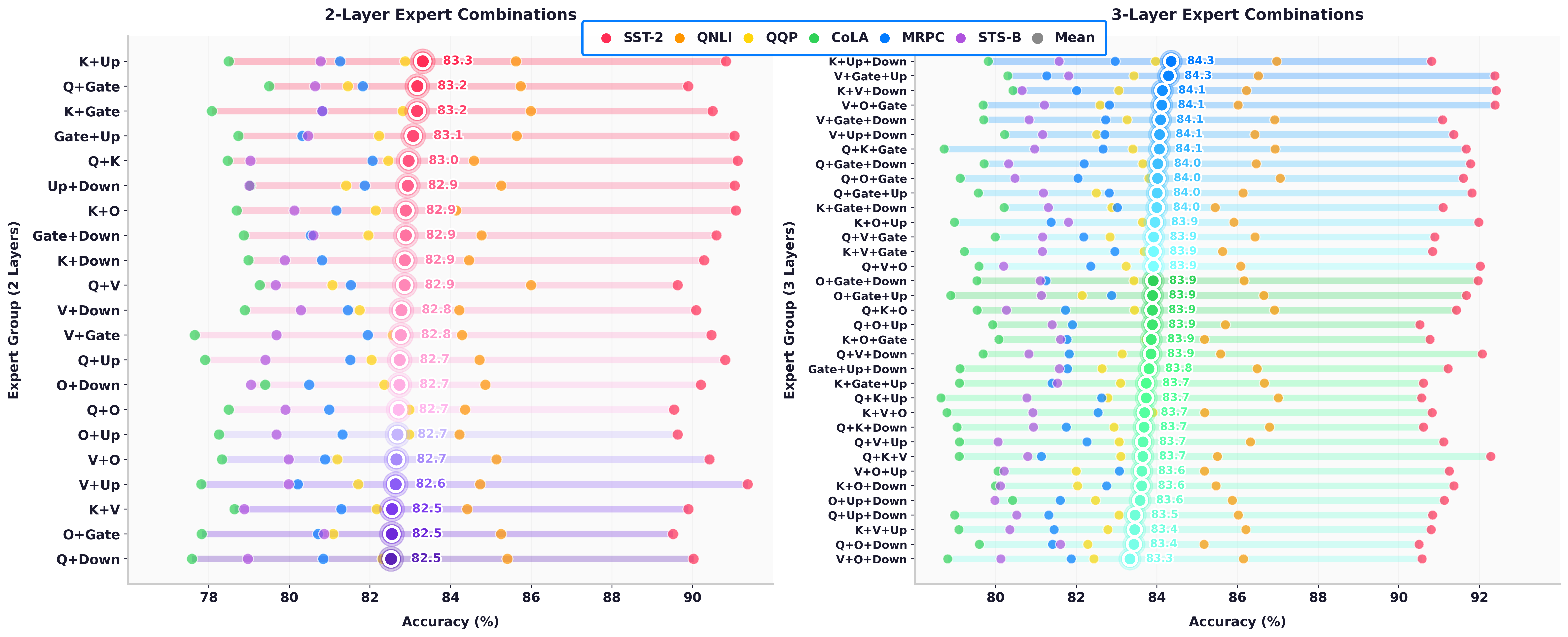}
    \caption{Expert combination analysis (2 and 3 layers). Left: 2-layer combinations; K+Up achieves best performance (83.3\%). Right: 3-layer combinations; K+Up+Down achieves 84.3\%. Combinations mixing attention and FFN components consistently outperform single-pathway combinations.}
    \label{fig:expert_combination1}
\end{figure*}

\begin{figure*}[!htb]
    \centering
    \includegraphics[width=1.0\linewidth]{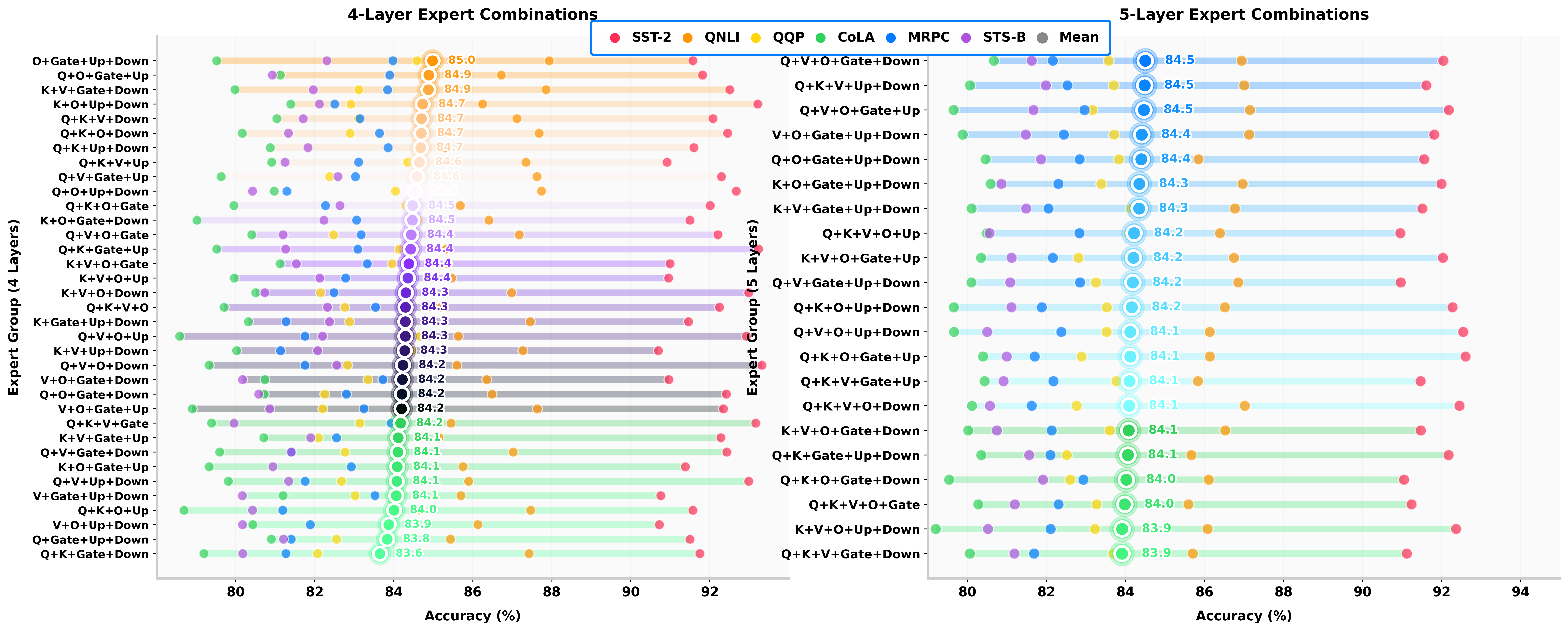}
    \caption{Expert combination analysis (4 and 5 layers). Left: 4-layer combinations; O+Gate+Up+Down achieves best performance (85.0\%). Right: 5-layer combinations; Q+V+O+Gate+Down achieves 85.9\%. Performance scales with routing coverage, with diminishing returns beyond 4--5 layers.}
    \label{fig:expert_combination2}
\end{figure*}

\subsection{Expert Combination Analysis}
\label{app:ablation-combination}

We systematically analyze which combinations of  projections benefit most from MJ routing. Rather than applying routing to all  projections, we selectively enable routing on subsets of 2, 3, 4, and 5 layers, with remaining layers using standard PEFT. This reveals which  projections contribute most to routing-based specialization.

\paragraph{2-Layer combinations.}
Figure~\ref{fig:expert_combination1}(left) shows all pairwise combinations. The best performing pairs are K+Up (83.3\%), Q+Gate (83.2\%), and K+Gate (83.2\%). Notably, combinations involving FFN components (Up, Down, Gate) paired with attention components (Q, K) consistently outperform pairs of only attention or only FFN  projections. The worst combinations (Q+Down, O+Gate) score around 82.5\%, still above the no-routing baseline.

\paragraph{3-Layer combinations.}
Figure~\ref{fig:expert_combination1}(right) shows 3-layer combinations. Top performers include K+Up+Down (84.3\%), V+Gate+Up (84.3\%), and K+V+Down (84.3\%). Performance improves by approximately 1\% over 2-layer combinations. The best combinations typically include at least one attention component and at least one FFN component, reinforcing that routing benefits from covering both attention and FFN pathways.

\paragraph{4-Layer combinations.}
Figure~\ref{fig:expert_combination2}(left) shows 4-layer combinations. The best combination is O+Gate+Up+Down (85.0\%), followed by Q+O+Gate+Up (84.9\%) and K+V+Gate+Up (84.9\%). Including the output projection O alongside FFN components appears particularly effective. Performance continues to improve over 3-layer combinations.
\paragraph{5-Layer combinations.}
Figure~\ref{fig:expert_combination2}(right) shows 5-layer combinations. Surprisingly, 5-layer routing (84.5\% best) underperforms 4-layer routing (85.0\% best). Top 5-layer combinations include Q+V+O+Gate+Down (84.5\%), Q+K+V+Up+Down (84.5\%), and Q+V+O+Gate+Up (84.5\%). This performance drop occurs because MJ's gradient-free routing relies on natural clustering in the representation space. With more experts, token clusters become finer-grained, and similar tokens may be split across experts that would benefit from shared adaptation. Since the router is not trained to optimize task performance, it cannot compensate for suboptimal cluster boundaries—leading to potential expert underutilization when expert count exceeds the natural cluster structure of the data.

\begin{notebox}
\textit{\textbf{Key patterns:} (i) Mixing attention (Q, K, V, O) and FFN (Gate, Up, Down) components yields best results. (ii) FFN components (especially Up, Down) appear in most top combinations. (iii) Performance scales up to 4 layers: 2-layer (83.3\%) → 3-layer (84.3\%) → 4-layer (85.0\%), but decreases at 5 layers (84.5\%). (iv) This suggests an optimal expert count exists—too many experts can fragment natural token clusters, a limitation of gradient-free routing.}
\end{notebox}


\paragraph{Practical guidance.}
For resource-constrained settings, a 3-layer combination like K+Up+Down provides 84.3\% accuracy with minimal overhead. For maximum performance, 4-layer combinations like O+Gate+Up+Down achieve 85.0\%—adding more layers decreases accuracy due to cluster fragmentation. The consistent appearance of Up and Down in top combinations aligns with our recommendation to use FFN projections as shared experts (Section~\ref{app:ablation-shared}).

\begin{figure*}[!htb]
    \centering
    \includegraphics[width=0.8\linewidth]{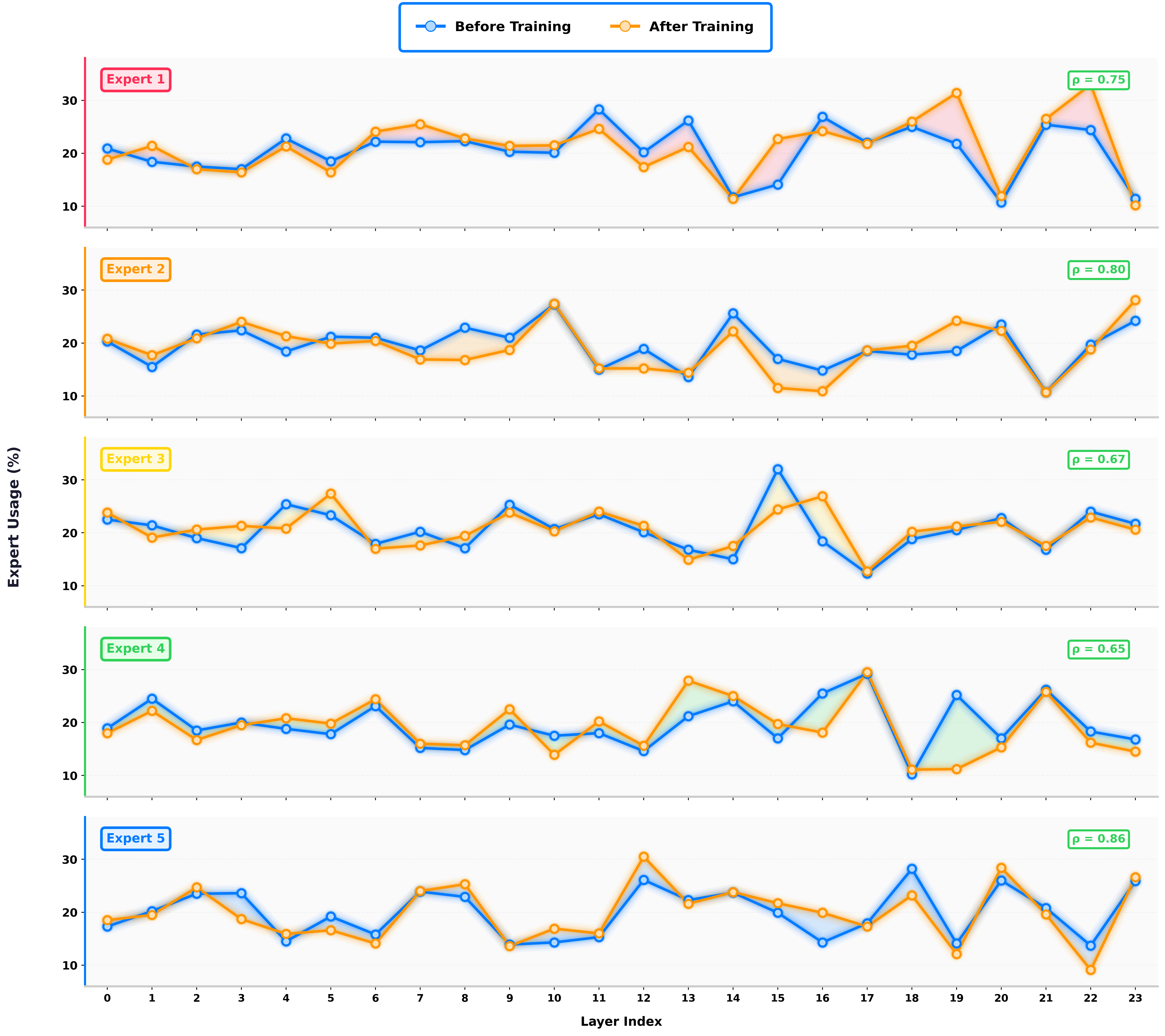}
    \caption{Expert usage across layers before and after training. \textcolor{blue}{\textbf{Blue}}: usage after $k$-means initialization; \textcolor{orange}{\textbf{Orange}}: usage after training with EMA updates. Correlation $\rho$ shows structure preservation. All experts maintain balanced usage (15--30\%) throughout training, demonstrating self-balancing without auxiliary losses.}
    \label{fig:expert_uses}
\end{figure*}

\begin{figure*}[!htb]
    \centering
    \includegraphics[width=1.0\linewidth]{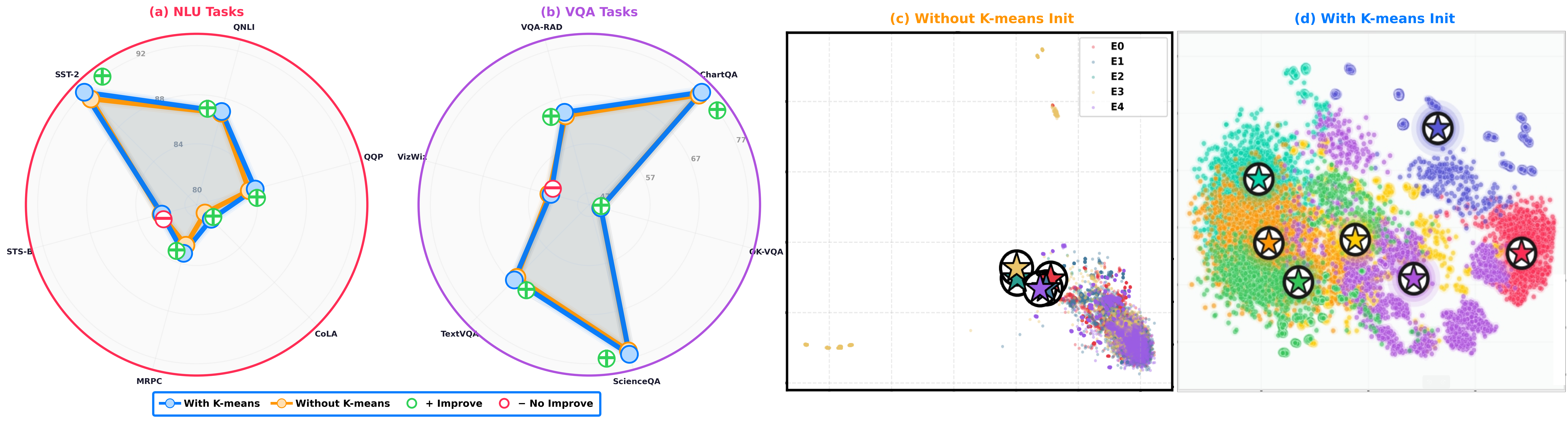}
    \caption{Impact of $k$-means initialization. (a,b) Comparing performance with (\textcolor{blue}{\textbf{blue}}) and without (\textcolor{orange}{\textbf{orange}}) $k$-means on NLU and VQA tasks; $\oplus$ indicates improvement, $\ominus$ indicates no improvement. (c,d) t-SNE visualization of token clusters: without $k$-means, centers are poorly positioned; with $k$-means, centers align with natural cluster structure, enabling meaningful expert specialization.}
    \label{fig:kmneas_init}
\end{figure*}

\subsection{Impact of K-means Initialization}
\label{app:ablation-kmeans}

We analyze the impact of $k$-means initialization by comparing MJ with properly initialized centers versus randomly initialized centers (without $k$-means). Figure~\ref{fig:kmneas_init}(a,b) shows performance comparison on NLU and VQA tasks, while Figure~\ref{fig:kmneas_init}(c,d) visualizes the resulting cluster structure via t-SNE.

On NLU tasks (Figure~\ref{fig:kmneas_init}(a)), $k$-means initialization improves performance on 5 out of 6 benchmarks: SST-2 (92.43\% vs 91.62\%), QNLI (87.06\% vs 86.92\%), QQP (83.98\% vs 83.44\%), CoLA (80.54\% vs 79.78\%), and MRPC (83.09\% vs 82.37\%). Only STS-B shows negligible difference (81.87\% vs 81.95\%). On VQA tasks (Figure~\ref{fig:kmneas_init}(b)), $k$-means similarly improves most benchmarks: ChartQA (76.69\% vs 75.88\%), ScienceQA (75.84\% vs 75.11\%), TextVQA (65.95\% vs 65.21\%), and VQA-RAD (63.80\% vs 63.05\%). OK-VQA shows minimal difference, and VizWiz slightly favors random initialization.

The t-SNE visualizations reveal why $k$-means initialization matters. Without $k$-means (Figure~\ref{fig:kmneas_init}(c)), cluster centers (marked with stars) are poorly positioned—some centers land in sparse regions with few nearby tokens, while others compete for the same dense region. This leads to uneven expert utilization and suboptimal routing decisions. With $k$-means initialization (Figure~\ref{fig:kmneas_init}(d)), centers are positioned at the centroids of natural token clusters in the representation space. Each expert captures a distinct, well-separated region, leading to meaningful specialization where similar tokens are consistently routed to the same expert.

\begin{notebox}
\textit{\textbf{Key finding:} $K$-means initialization places centers at natural cluster centroids, producing well-separated routing regions. This improves performance by 0.5--1.0\% on most tasks. Without proper initialization, centers may land in suboptimal positions, leading to overlapping or sparse routing regions.}
\end{notebox}


\subsection{Expert Usage and Self-Balancing}
\label{app:ablation-balance}

A common challenge in MoE systems is expert collapse, where routing converges to use only a subset of experts while others receive little or no traffic. Traditional MoE methods address this with auxiliary load-balancing losses that explicitly penalize uneven expert utilization, which force experts to be balanced even when tokens are not genuinely similar to them. This artificial balancing can lead to suboptimal routing decisions where tokens are assigned to mismatched experts simply to satisfy the load constraint. MJ takes a different approach: by initializing centers via $k$-means clustering and updating them via EMA, we achieve natural load balancing without any auxiliary losses—tokens are routed based purely on genuine similarity to cluster centers.

Figure~\ref{fig:expert_uses} visualizes expert usage across all 24 Transformer layers, comparing usage patterns immediately after $k$-means initialization (before training) versus after training with EMA updates. Each row corresponds to one of the five experts (adapters), and the y-axis shows the percentage of tokens routed to that expert at each layer. The correlation coefficient $\rho$ measures how well the initial routing structure is preserved after training.

The results reveal several important properties of MJ's clustering-based routing. First, $k$-means initialization produces well-balanced expert usage from the start—all five experts receive 15--30\% of tokens at each layer, with no expert dominating or being ignored. This balance emerges naturally because $k$-means partitions the token representation space into clusters based on actual data density, not artificial constraints. Second, after training, the usage patterns shift but remain balanced. The EMA updates allow centers to track the evolving token distribution as adapters are trained, but the updates are gradual enough to prevent sudden routing collapse. No expert degrades to near-zero usage, and no expert monopolizes routing. Third, the high correlation coefficients ($\rho = 0.65$--$0.86$) indicate that EMA updates preserve the overall structure established by $k$-means while allowing local adaptation. Expert 5 shows the highest correlation ($\rho = 0.86$), meaning its routing pattern changed minimally, while Expert 4 shows the lowest ($\rho = 0.65$), indicating more adaptation—yet both maintain balanced usage.

The shaded regions highlight layers where usage shifted most between initialization and training. These shifts reflect adapters learning specialized functions that attract different token types—not artificial rebalancing. When one expert gains usage at a layer, others compensate naturally through the EMA mechanism, maintaining smooth load distribution without explicit penalties.

\begin{notebox}
\textit{\textbf{Key finding:} MJ achieves self-balancing without auxiliary losses. Unlike traditional MoE that forces artificial balance, MJ routes tokens based on genuine similarity to cluster centers. $K$-means ensures a balanced starting point; EMA updates preserve balance while allowing natural adaptation ($\rho = 0.65$--$0.86$).}
\end{notebox}

\begin{table}[t]
\small
\centering
\setlength{\tabcolsep}{5pt}
\scalebox{0.80}{
\begin{tabular}{l|cccc}
\hline
\rowcolor{headercolor}
\textbf{Method} &
\textbf{Space} &
\textbf{Time} &
\textbf{TPs} &
\textbf{RPs}  \\
\hline
LoRA &
$O(Edr)$ &
$O(Edr)$ &
$2Edr$ &
$0$ \\
MoE-LoRA &
$O(ENdr)$ &
$O(ENdr)$ &
$2ENdr$ &
$Nd$ \\
\rowcolor{ourmethodcolor}
MJ-LoRA  &
$O(Edr)$ &
$O(Kdr)$ &
$2Edr$ &
$\mathbf{0}$ \\
\hline
LoRA-FA &
$O(Edr)$ &
$O(Edr)$ &
$Edr$ &
$0$ \\
MoE-LoRA-FA &
$O(ENdr)$ &
$O(ENdr)$ &
$ENdr$ &
$Nd$ \\
\rowcolor{ourmethodcolor}
MJ-LoRA-FA  &
$O(Edr)$ &
$O(Kdr)$ &
$Edr$ &
$\mathbf{0}$ \\
\hline
AdaLoRA &
$O(Edr)$ &
$O(Edr)$ &
$E(2dr + r^2)$ &
$0$ \\
MoE-AdaLoRA &
$O(ENdr)$ &
$O(ENdr)$ &
$EN(2dr + r^2)$ &
$Nd$ \\
\rowcolor{ourmethodcolor}
MJ-AdaLoRA  &
$O(Edr)$ &
$O(Kdr)$ &
$E(2dr + r^2)$ &
$\mathbf{0}$ \\
\hline
Propulsion &
$O(Ed)$ &
$O(Ed)$ &
$Ed$ &
$0$ \\
MoE-Propulsion &
$O(ENd)$ &
$O(ENd)$ &
$ENd$ &
$Nd$ \\
\rowcolor{ourmethodcolor}
MJ-Propulsion  &
$O(Ed)$ &
$O(Kd)$ &
$Ed$ &
$\mathbf{0}$ \\
\hline
\end{tabular}}
\caption{
Per-block complexity and parameter comparison.
$E$ = number of submodules per block; 
$N$ = number of MoE experts;
$K$ = top-$k$ adapters activated per token in MJ;
$d$ = hidden dimension; 
$r$ = adapter rank.
MoE methods scale parameters and compute by $N$ and require a trainable router.
\textbf{MJ} matches base PEFT in trainable parameters with zero router parameters; its sparse top-$K$ routing reduces per-token compute from $O(Edr)$ to $O(Kdr)$.
Routing overhead ($O(Ed)$ per token) is negligible and omitted.
}
\label{table:peft_block_comparison}
\end{table}

\subsection{Complexity and Parameter Analysis}
\label{app:ablation-complexity}

We provide a detailed complexity analysis comparing standard PEFT, MoE-PEFT, and MJ variants in Table~\ref{table:peft_block_comparison}. The comparison covers space complexity, time complexity, trainable parameters (TPs), and router parameters (RPs) per Transformer block.

Standard PEFT methods (LoRA, LoRA-FA, AdaLoRA, Propulsion) have space and time complexity of $O(Edr)$ where $E$ is the number of projections, $d$ is the hidden dimension, and $r$ is the adapter rank. They introduce no router parameters since all adapters are applied uniformly to every token.

MoE-PEFT methods scale both space and time complexity by $N$ (number of experts), resulting in $O(ENdr)$. This is because each projection now has $N$ expert adapters instead of one. Additionally, MoE-PEFT requires a learned router with $O(Nd)$ trainable parameters per block to determine expert selection. For a typical configuration with $N=4$ experts, this means $4\times$ more adapter parameters plus router overhead.

MJ achieves the best of both worlds. Space complexity remains $O(Edr)$—identical to standard PEFT—because MJ reuses existing adapters as implicit experts without adding new ones. Time complexity is reduced to $O(Kdr)$ where $K$ is the number of top-$k$ adapters activated per token. Since $K < E$ (typically $K=3$ out of $E=7$), MJ is faster than standard PEFT at inference. Most importantly, MJ introduces \textbf{zero router parameters}—routing is performed via $k$-means centers updated by EMA, which are non-trainable buffers.

The routing overhead itself ($O(Ed)$ per token for computing cosine similarities) is negligible compared to adapter computation ($O(Edr)$) since $r \ll d$ in practice.

\begin{notebox}
\textit{Key finding: MJ matches standard PEFT in trainable parameters, reduces inference compute via sparse top-$k$ routing, and eliminates router parameters entirely. MoE-PEFT methods pay $N\times$ more parameters for specialization; MJ achieves similar specialization at zero parameter cost.}
\end{notebox}

\begin{figure*}[!htb]
    \centering
    \includegraphics[width=1.0\linewidth]{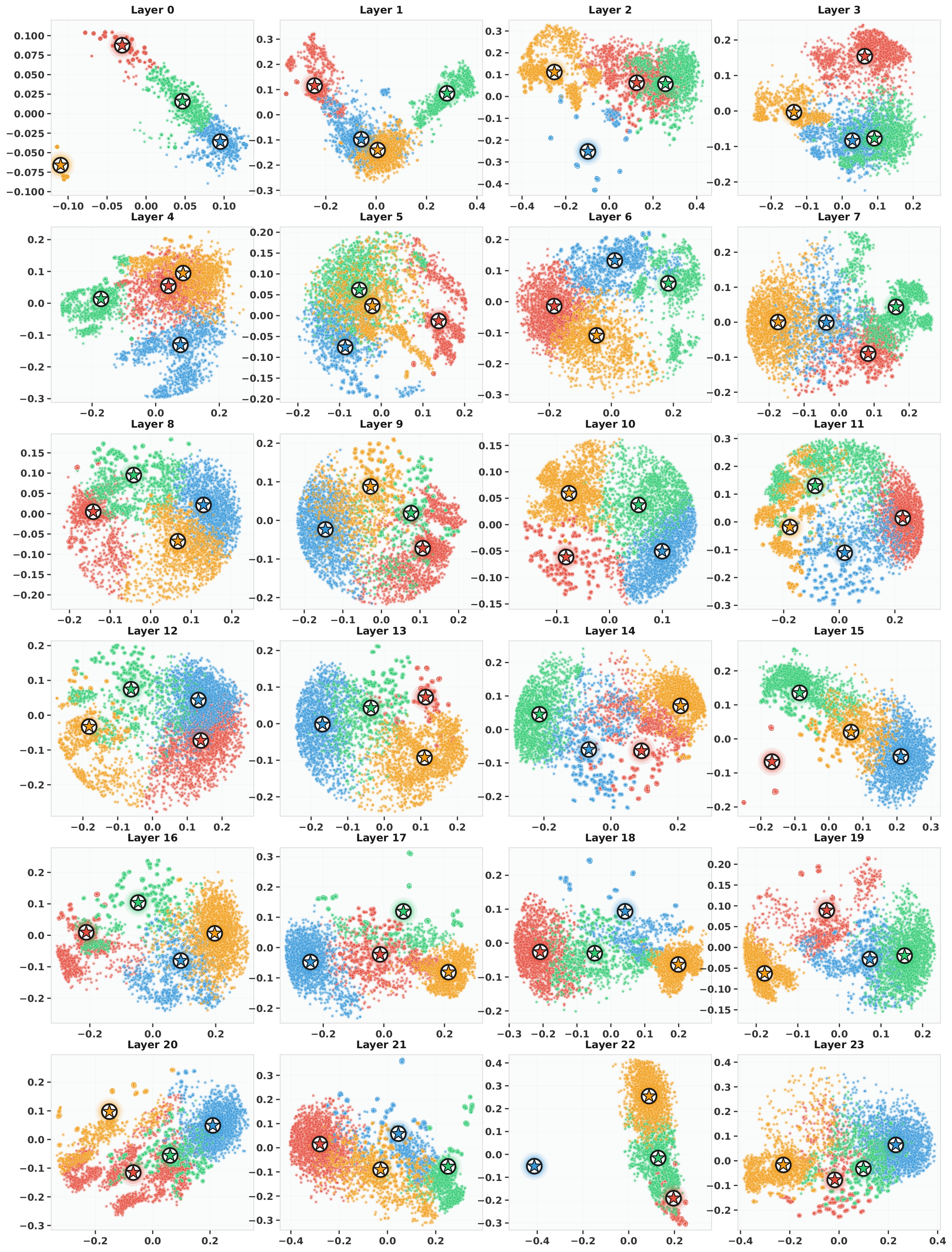}
    \caption{t-SNE visualization of token routing across all 24 Transformer layers. Each subplot shows token representations from 2K GLUE samples, colored by assigned expert (E0--E3). Stars mark cluster centers. Cluster structure evolves from diffuse (early layers) to well-separated (middle layers) to mixed patterns (later layers), with balanced expert coverage maintained throughout.}
    \label{fig:MJ_24layers}
\end{figure*}

\subsection{Layer-wise Cluster Visualization}
\label{app:ablation-layerwise}

To understand how MJ's routing behavior varies across the network, we visualize token clusters at each of the 24 Transformer layers. Figure~\ref{fig:MJ_24layers} shows t-SNE projections of token representations from 2K GLUE samples, colored by their assigned expert (E0--E4). Cluster centers learned by MJ are marked with stars.

The visualizations reveal that cluster structure evolves significantly across layers. In early layers (0--5), token representations form relatively diffuse clusters with moderate overlap between experts. This reflects the fact that early layers capture low-level features where tokens have not yet developed strong task-specific patterns. As we move to middle layers (6--15), clusters become more distinct and well-separated. The centers (stars) sit clearly at the centroids of their respective token groups, demonstrating that $k$-means initialization combined with EMA updates successfully tracks the natural structure of the representation space. In later layers (16--23), we observe two patterns: some layers maintain tight, well-separated clusters (e.g., layers 17, 19, 22), while others show more distributed patterns where experts cover broader regions (e.g., layers 20, 23). This suggests that later layers develop both specialized functions (tight clusters) and general functions (broader coverage) depending on the layer's role in the network.

Across all layers, the five experts maintain balanced coverage—no expert dominates the representation space, and no expert is marginalized to a tiny region. This confirms that MJ's clustering-based routing achieves natural load balancing across the entire network depth. The consistent positioning of centers at cluster centroids (rather than at cluster boundaries or in empty regions) demonstrates that both the $k$-means initialization and EMA updates work as intended.

\begin{notebox}
\textit{\textbf{Observation:} Cluster structure is layer-dependent. Early layers show diffuse clusters; middle layers show well-separated clusters; later layers show mixed patterns. MJ adapts to each layer's representation geometry while maintaining balanced expert coverage throughout.}
\end{notebox}


\clearpage


\begin{table*}[t]
\centering
\scriptsize
\setlength{\tabcolsep}{3pt}
\renewcommand{\arraystretch}{1.05}
\begin{tabular}{lcccccc}
\toprule
\rowcolor{headercolor}
\textbf{Method} & \textbf{SST-2} & \textbf{QNLI} & \textbf{QQP} & \textbf{CoLA} & \textbf{MRPC} & \textbf{STS-B} \\
\midrule
LoRA & $95.53_{\pm 0.71}$ & $92.46_{\pm 0.77}$ & $85.57_{\pm 0.82}$ & $84.31_{\pm 0.36}$ & $88.05_{\pm 0.38}$ & $\mathbf{91.58}_{\pm 0.29}$ \\
AdaLoRA & $94.76_{\pm 0.57}$ & $91.55_{\pm 0.24}$ & $85.23_{\pm 0.86}$ & $84.29_{\pm 0.78}$ & $88.75_{\pm 0.83}$ & $90.33_{\pm 0.81}$ \\
Propulsion & $94.14_{\pm 0.26}$ & $91.29_{\pm 0.51}$ & $85.26_{\pm 0.88}$ & $84.63_{\pm 0.91}$ & $87.23_{\pm 0.25}$ & $91.27_{\pm 0.37}$ \\
LoRAFA & $94.72_{\pm 0.52}$ & $92.01_{\pm 0.41}$ & $85.14_{\pm 0.31}$ & $84.02_{\pm 0.73}$ & $87.89_{\pm 0.56}$ & $90.55_{\pm 0.48}$ \\
MoELoRA & $95.59_{\pm 0.74}$ & $92.56_{\pm 0.63}$ & $85.34_{\pm 0.71}$ & $85.00_{\pm 0.69}$ & $88.99_{\pm 0.54}$ & $90.37_{\pm 0.88}$ \\
MixLoRA & $95.11_{\pm 0.31}$ & $92.33_{\pm 0.67}$ & $85.22_{\pm 0.29}$ & $84.62_{\pm 0.45}$ & $88.09_{\pm 0.81}$ & $90.88_{\pm 0.62}$ \\
HydraLoRA & $95.28_{\pm 0.48}$ & $\mathbf{93.42}_{\pm 0.36}$ & $85.56_{\pm 0.86}$ & $84.92_{\pm 0.55}$ & $89.08_{\pm 0.27}$ & $91.21_{\pm 0.24}$ \\
MoLA & $\mathbf{95.85}_{\pm 0.55}$ & $92.83_{\pm 0.34}$ & $85.04_{\pm 0.79}$ & $85.04_{\pm 0.83}$ & $88.68_{\pm 0.41}$ & $91.00_{\pm 0.64}$ \\
MoRe & $94.91_{\pm 0.52}$ & $93.30_{\pm 0.57}$ & $85.46_{\pm 0.39}$ & $\mathbf{85.54}_{\pm 0.84}$ & $88.49_{\pm 0.48}$ & $90.64_{\pm 0.36}$ \\
MoA & $95.10_{\pm 0.35}$ & $92.32_{\pm 0.29}$ & $85.67_{\pm 0.81}$ & $85.07_{\pm 0.58}$ & $88.39_{\pm 0.51}$ & $90.88_{\pm 0.73}$ \\
\midrule
\rowcolor{ourmethodcolor}
MJLoRA & $95.74_{\pm 0.64}$ & $92.70_{\pm 0.23}$ & $85.93_{\pm 0.39}$ & $85.08_{\pm 0.36}$ & $88.76_{\pm 0.71}$ & $91.26_{\pm 0.67}$ \\
\rowcolor{ourmethodcolor}
MJAdaLoRA & $95.61_{\pm 0.25}$ & $92.11_{\pm 0.61}$ & $85.37_{\pm 0.54}$ & $85.38_{\pm 0.79}$ & $\mathbf{89.90}_{\pm 0.33}$ & $90.31_{\pm 0.85}$ \\
\rowcolor{ourmethodcolor}
MJPropulsion & $95.56_{\pm 0.68}$ & $93.22_{\pm 0.34}$ & $\mathbf{85.97}_{\pm 0.62}$ & $84.19_{\pm 0.39}$ & $89.04_{\pm 0.52}$ & $90.79_{\pm 0.80}$ \\
\rowcolor{ourmethodcolor}
MJLoRAFA & $95.42_{\pm 0.47}$ & $92.88_{\pm 0.58}$ & $85.22_{\pm 0.44}$ & $85.43_{\pm 0.29}$ & $89.01_{\pm 0.74}$ & $91.44_{\pm 0.61}$ \\
\bottomrule
\end{tabular}
\caption{Results on GLUE benchmarks. Highlighted rows denote our MJ variants. We have used Spearman correlation for STS-B, and Accuracy for rest of the datasets.}
\label{tab:glue_results}
\end{table*}


\begin{table*}[t]
\centering
\scriptsize
\setlength{\tabcolsep}{3pt}
\renewcommand{\arraystretch}{1.05}
\begin{tabular}{lcccccccc}
\toprule
\rowcolor{headercolor}
\textbf{Method} & \textbf{BoolQ} & \textbf{PIQA} & \textbf{SIQA} & \textbf{H.Sw.} & \textbf{W.Gra} & \textbf{ARC-e} & \textbf{ARC-c} & \textbf{OBQA} \\
\midrule
LoRA & $71.22_{\pm 0.61}$ & $73.48_{\pm 0.97}$ & $64.77_{\pm 1.40}$ & $51.04_{\pm 2.52}$ & $77.30_{\pm 0.27}$ & $84.47_{\pm 0.89}$ & $70.94_{\pm 1.66}$ & $77.59_{\pm 1.32}$ \\
AdaLoRA & $70.10_{\pm 0.65}$ & $73.10_{\pm 0.57}$ & $64.06_{\pm 1.45}$ & $50.37_{\pm 2.30}$ & $77.00_{\pm 0.40}$ & $85.21_{\pm 0.41}$ & $70.78_{\pm 1.39}$ & $77.48_{\pm 1.32}$ \\
Propulsion & $70.15_{\pm 0.33}$ & $73.45_{\pm 0.99}$ & $63.82_{\pm 1.66}$ & $50.55_{\pm 2.66}$ & $77.59_{\pm 0.38}$ & $84.30_{\pm 0.47}$ & $70.61_{\pm 1.64}$ & $77.26_{\pm 1.17}$ \\
LoRAFA & $70.41_{\pm 0.58}$ & $73.12_{\pm 0.59}$ & $64.02_{\pm 1.73}$ & $50.14_{\pm 2.22}$ & $77.41_{\pm 0.54}$ & $84.28_{\pm 0.59}$ & $70.64_{\pm 1.61}$ & $76.83_{\pm 1.08}$ \\           
MoELoRA  & $71.34_{\pm 0.39}$ & $73.81_{\pm 0.78}$ & $64.31_{\pm 1.64}$ & $50.57_{\pm 2.52}$ & $78.37_{\pm 0.65}$ & $84.70_{\pm 0.70}$ & $70.65_{\pm 1.57}$ & $76.73_{\pm 0.81}$ \\
MixLoRA & $71.31_{\pm 0.49}$ & $73.49_{\pm 0.86}$ & $64.50_{\pm 1.64}$ & $50.75_{\pm 2.61}$ & $77.53_{\pm 0.31}$ & $\mathbf{85.83}_{\pm 0.40}$ & $71.64_{\pm 1.35}$ & $77.65_{\pm 0.93}$ \\
HydraLoRA & $\mathbf{71.79}_{\pm 0.24}$ & $73.77_{\pm 0.83}$ & $64.50_{\pm 1.62}$ & $51.00_{\pm 2.41}$ & $78.12_{\pm 0.21}$ & $84.90_{\pm 0.59}$ & $71.64_{\pm 1.73}$ & $\mathbf{78.15}_{\pm 1.07}$ \\
MoLA & $70.53_{\pm 0.51}$ & $73.60_{\pm 0.56}$ & $63.96_{\pm 1.60}$ & $50.87_{\pm 2.48}$ & $77.57_{\pm 0.61}$ & $85.37_{\pm 0.67}$ & $71.57_{\pm 1.18}$ & $76.91_{\pm 0.96}$ \\
MoRe & $71.64_{\pm 0.48}$ & $\mathbf{74.36}_{\pm 0.88}$ & $64.15_{\pm 1.29}$ & $51.23_{\pm 2.35}$ & $77.33_{\pm 0.73}$ & $85.49_{\pm 0.74}$ & $71.79_{\pm 1.42}$ & $76.97_{\pm 1.06}$ \\
MoA & $70.58_{\pm 0.62}$ & $73.76_{\pm 0.58}$ & $64.36_{\pm 1.23}$ & $\mathbf{51.56}_{\pm 2.42}$ & $78.19_{\pm 0.41}$ & $84.60_{\pm 0.64}$ & $71.06_{\pm 1.58}$ & $77.56_{\pm 1.15}$ \\
\midrule
\rowcolor{ourmethodcolor}
MJLoRA & $71.11_{\pm 0.67}$ & $73.73_{\pm 0.53}$ & $64.73_{\pm 1.41}$ & $51.68_{\pm 2.20}$ & $77.93_{\pm 0.30}$ & $85.13_{\pm 0.66}$ & $71.42_{\pm 1.19}$ & $78.14_{\pm 0.89}$ \\
\rowcolor{ourmethodcolor}
MJAdaLoRA & $71.52_{\pm 0.22}$ & $73.14_{\pm 0.58}$ & $64.15_{\pm 1.51}$ & $50.12_{\pm 2.55}$ & $\mathbf{78.79}_{\pm 0.31}$ & $85.08_{\pm 0.45}$ & $\mathbf{71.98}_{\pm 1.67}$ & $76.91_{\pm 0.92}$ \\
\rowcolor{ourmethodcolor}
MJPropulsion & $71.76_{\pm 0.65}$ & $74.32_{\pm 0.53}$ & $64.02_{\pm 1.41}$ & $50.52_{\pm 2.33}$ & $77.18_{\pm 0.18}$ & $85.03_{\pm 0.80}$ & $70.91_{\pm 1.53}$ & $77.46_{\pm 0.92}$ \\
\rowcolor{ourmethodcolor}
MJLoRAFA & $71.38_{\pm 0.27}$ & $74.04_{\pm 0.62}$ & $\mathbf{64.94}_{\pm 1.46}$ & $51.55_{\pm 2.64}$ & $78.51_{\pm 0.35}$ & $85.86_{\pm 0.61}$ & $71.39_{\pm 1.58}$ & $77.92_{\pm 0.90}$ \\
\bottomrule
\end{tabular}
\caption{Results on commonsense reasoning and question answering benchmarks. Highlighted rows denote our MJ variants. Accuracy is used as the evaluation metric for all datasets.}
\label{tab:qa_results}
\end{table*}

\begin{table*}[t]
\centering
\scriptsize
\setlength{\tabcolsep}{3pt}
\renewcommand{\arraystretch}{1.05}
\begin{tabular}{lcccccc}
\toprule
\rowcolor{headercolor}
\textbf{Method} & \textbf{Camelyon} & \textbf{SVHN} & \textbf{Pets} & \textbf{Flowers102} & \textbf{EuroSAT} & \textbf{Caltech101} \\
\midrule
LoRA & $89.07_{\pm 0.72}$ & $93.81_{\pm 0.33}$ & $95.55_{\pm 0.91}$ & $94.60_{\pm 0.26}$ & $96.31_{\pm 0.45}$ & $95.22_{\pm 0.23}$ \\
AdaLoRA & $88.62_{\pm 0.77}$ & $93.28_{\pm 0.48}$ & $95.03_{\pm 0.44}$ & $94.14_{\pm 0.39}$ & $95.41_{\pm 0.94}$ & $94.70_{\pm 0.66}$ \\
Propulsion & $88.71_{\pm 0.61}$ & $93.41_{\pm 0.89}$ & $95.22_{\pm 0.35}$ & $94.39_{\pm 0.22}$ & $95.54_{\pm 0.43}$ & $94.48_{\pm 0.31}$ \\
LoRAFA & $88.89_{\pm 0.54}$ & $93.14_{\pm 0.82}$ & $95.11_{\pm 0.46}$ & $94.39_{\pm 0.37}$ & $95.55_{\pm 0.56}$ & $94.61_{\pm 0.21}$ \\
MoELoRA & $89.11_{\pm 0.41}$ & $93.88_{\pm 0.58}$ & $95.56_{\pm 0.66}$ & $94.33_{\pm 0.32}$ & $96.22_{\pm 0.71}$ & $95.31_{\pm 0.49}$ \\
MixLoRA & $89.44_{\pm 0.63}$ & $94.71_{\pm 0.29}$ & $95.37_{\pm 0.47}$ & $95.14_{\pm 0.84}$ & $\mathbf{96.92}_{\pm 0.44}$ & $96.27_{\pm 0.56}$ \\
HydraLoRA & $90.08_{\pm 0.36}$ & $\mathbf{95.11}_{\pm 0.61}$ & $96.42_{\pm 0.92}$ & $95.33_{\pm 0.24}$ & $95.28_{\pm 0.55}$ & $\mathbf{96.95}_{\pm 0.18}$ \\
MoLA & $88.94_{\pm 0.88}$ & $93.91_{\pm 0.52}$ & $95.28_{\pm 0.59}$ & $94.76_{\pm 0.46}$ & $96.02_{\pm 0.33}$ & $95.24_{\pm 0.41}$ \\
MoRe & $90.03_{\pm 0.49}$ & $94.60_{\pm 0.41}$ & $\mathbf{96.97}_{\pm 0.35}$ & $95.41_{\pm 0.64}$ & $95.01_{\pm 0.79}$ & $96.44_{\pm 0.38}$ \\
MoA & $89.51_{\pm 0.27}$ & $94.14_{\pm 0.33}$ & $96.18_{\pm 0.51}$ & $94.94_{\pm 0.19}$ & $96.48_{\pm 0.31}$ & $95.52_{\pm 0.73}$ \\
\midrule
\rowcolor{ourmethodcolor}
MJLoRA  & $89.68_{\pm 0.52}$ & $93.46_{\pm 0.66}$ & $93.88_{\pm 0.25}$ & $93.83_{\pm 0.49}$ & $94.64_{\pm 0.41}$ & $94.83_{\pm 0.33}$ \\
\rowcolor{ourmethodcolor}
MJAdaLoRA & $\mathbf{90.52}_{\pm 0.34}$ & $93.78_{\pm 0.43}$ & $94.69_{\pm 0.53}$ & $\mathbf{95.39}_{\pm 0.64}$ & $94.02_{\pm 0.38}$ & $94.61_{\pm 0.50}$ \\
\rowcolor{ourmethodcolor}
MJPropulsion & $89.04_{\pm 0.74}$ & $93.77_{\pm 0.53}$ & $95.61_{\pm 0.69}$ & $94.30_{\pm 0.58}$ & $94.00_{\pm 0.86}$ & $94.34_{\pm 0.62}$ \\
\rowcolor{ourmethodcolor}
MJLoRAFA & $90.01_{\pm 0.62}$ & $94.02_{\pm 0.47}$ & $96.25_{\pm 0.34}$ & $94.07_{\pm 0.95}$ & $94.00_{\pm 0.78}$ & $95.82_{\pm 0.71}$ \\
\bottomrule
\end{tabular}
\caption{(Results on \textbf{image classification benchmarks}. Each dataset consists of images annotated with a single class label. Highlighted rows denote our MJ variants.}
\label{tab:image_cls}
\end{table*}

\begin{table*}[t]
\centering
\scriptsize
\setlength{\tabcolsep}{3pt}
\renewcommand{\arraystretch}{1.05}
\begin{tabular}{lcccccccc}
\toprule
\rowcolor{headercolor}
\textbf{Method} & \textbf{ChartQA} & \textbf{OKVQA} & \textbf{ScienceQA} & \textbf{SeedBench} &
\textbf{Recognition} & \textbf{TextVQA} & \textbf{VizWizVQA} & \textbf{VQA-RAD} \\
\midrule
LoRA & $76.38_{\pm 1.67}$ & $57.12_{\pm 2.31}$ & $81.51_{\pm 1.96}$ & $71.90_{\pm 1.23}$ & $95.18_{\pm 1.84}$ & $66.07_{\pm 2.13}$ & $51.48_{\pm 2.28}$ & $72.54_{\pm 2.35}$ \\
AdaLoRA & $75.77_{\pm 1.78}$ & $56.41_{\pm 2.02}$ & $80.34_{\pm 2.24}$ & $71.20_{\pm 1.51}$ & $96.49_{\pm 1.89}$ & $65.94_{\pm 2.31}$ & $50.54_{\pm 1.37}$ & $71.65_{\pm 2.28}$ \\
Propulsion & $75.64_{\pm 1.33}$ & $56.74_{\pm 1.57}$ & $80.51_{\pm 2.48}$ & $71.50_{\pm 2.29}$ & $96.66_{\pm 1.61}$ & $66.03_{\pm 2.16}$ & $50.91_{\pm 1.35}$ & $71.92_{\pm 1.98}$ \\
LoRAFA & $75.82_{\pm 2.41}$ & $56.32_{\pm 1.84}$ & $80.60_{\pm 2.11}$ & $71.14_{\pm 1.32}$ & $96.66_{\pm 2.23}$ & $65.90_{\pm 1.76}$ & $50.44_{\pm 1.49}$ & $71.61_{\pm 2.08}$ \\
MoELoRA & $76.54_{\pm 1.52}$ & $57.65_{\pm 2.08}$ & $81.12_{\pm 1.44}$ & $72.06_{\pm 1.76}$ & $95.02_{\pm 1.95}$ & $66.44_{\pm 1.61}$ & $51.77_{\pm 1.88}$ & $72.44_{\pm 2.02}$ \\
MixLoRA & $77.31_{\pm 1.85}$ & $57.98_{\pm 1.49}$ & $82.11_{\pm 2.16}$ & $72.87_{\pm 1.72}$ & $\mathbf{96.71}_{\pm 1.58}$ & $67.22_{\pm 1.34}$ & $51.63_{\pm 1.47}$ & $73.44_{\pm 1.91}$ \\
HydraLoRA & $76.64_{\pm 1.43}$ & $\mathbf{58.34}_{\pm 2.24}$ & $\mathbf{82.79}_{\pm 1.68}$ & $70.53_{\pm 1.36}$ & $95.14_{\pm 2.12}$ & $\mathbf{67.61}_{\pm 1.51}$ & $51.06_{\pm 2.29}$ & $72.68_{\pm 1.74}$ \\
MoLA & $77.42_{\pm 1.74}$ & $57.89_{\pm 1.58}$ & $82.78_{\pm 1.89}$ & $73.56_{\pm 1.31}$ & $96.11_{\pm 1.27}$ & $66.98_{\pm 1.46}$ & $52.14_{\pm 2.42}$ & $73.44_{\pm 2.13}$ \\
MoRe & $77.52_{\pm 1.79}$ & $57.84_{\pm 1.87}$ & $82.05_{\pm 1.28}$ & $\mathbf{73.58}_{\pm 1.64}$ & $96.02_{\pm 2.01}$ & $67.11_{\pm 1.73}$ & $51.74_{\pm 1.84}$ & $73.91_{\pm 1.32}$ \\
MoA & $\mathbf{76.48}_{\pm 2.04}$ & $57.11_{\pm 1.69}$ & $81.53_{\pm 1.81}$ & $72.00_{\pm 1.39}$ & $95.33_{\pm 1.48}$ & $66.58_{\pm 2.07}$ & $51.47_{\pm 2.22}$ & $72.91_{\pm 1.26}$ \\
\midrule
\rowcolor{ourmethodcolor}
MJLoRA & $76.94_{\pm 1.92}$ & $57.43_{\pm 1.79}$ & $81.73_{\pm 2.04}$ & $72.43_{\pm 1.68}$ & $95.79_{\pm 1.55}$ & $67.42_{\pm 2.33}$ & $51.32_{\pm 1.28}$ & $73.00_{\pm 2.07}$ \\
\rowcolor{ourmethodcolor}
MJAdaLoRA & $76.04_{\pm 1.66}$ & $56.18_{\pm 1.95}$ & $80.58_{\pm 2.28}$ & $71.38_{\pm 2.19}$ & $95.48_{\pm 2.01}$ & $65.28_{\pm 1.24}$ & $\mathbf{52.66}_{\pm 1.56}$ & $\mathbf{73.92}_{\pm 1.71}$ \\
\rowcolor{ourmethodcolor}
MJPropulsion & $77.24_{\pm 1.81}$ & $58.31_{\pm 1.66}$ & $82.01_{\pm 1.39}$ & $72.78_{\pm 1.58}$ & $96.61_{\pm 2.32}$ & $67.19_{\pm 1.87}$ & $52.22_{\pm 2.25}$ & $73.44_{\pm 1.43}$ \\
\rowcolor{ourmethodcolor}
MJLoRAFA & $76.31_{\pm 1.97}$ & $57.19_{\pm 1.31}$ & $81.04_{\pm 2.03}$ & $71.81_{\pm 1.82}$ & $95.14_{\pm 1.46}$ & $66.26_{\pm 1.98}$ & $51.02_{\pm 2.45}$ & $72.29_{\pm 1.65}$  \\
\bottomrule
\end{tabular}
\caption{Results on vision--language question answering benchmarks. Highlighted rows denote our MJ variants.}
\label{tab:vqa}
\end{table*}
\begin{table*}[t]
\centering
\scriptsize
\setlength{\tabcolsep}{3pt}
\renewcommand{\arraystretch}{1.05}
\begin{tabular}{lcccccccc}
\rowcolor{headercolor}
\toprule
\textbf{Method} &
\textbf{ActSeq} & \textbf{ActPred} & \textbf{ActAnt} & \textbf{FineAct} & \textbf{UnexpAct} &
\textbf{ObjExist} & \textbf{ObjInter} & \textbf{ObjShuffle} \\
\midrule
LoRA & $40.83_{\pm1.19}$ & $42.39_{\pm0.89}$ & $53.74_{\pm1.42}$ & $39.41_{\pm1.29}$ & $41.57_{\pm0.95}$
& $52.28_{\pm1.22}$ & $41.44_{\pm1.04}$ & $54.91_{\pm1.36}$ \\
AdaLoRA & $40.27_{\pm1.21}$ & $41.93_{\pm0.97}$ & $53.21_{\pm1.48}$ & $39.04_{\pm1.37}$ & $41.05_{\pm1.14}$
& $51.66_{\pm1.35}$ & $40.88_{\pm0.95}$ & $54.11_{\pm1.18}$ \\
Propulsion & $40.19_{\pm1.36}$ & $41.88_{\pm1.13}$ & $53.08_{\pm0.96}$ & $38.92_{\pm1.41}$ & $40.87_{\pm1.03}$
& $51.41_{\pm1.24}$ & $40.61_{\pm1.33}$ & $53.82_{\pm1.08}$ \\
LoRAFA & $40.11_{\pm1.28}$ & $41.74_{\pm1.41}$ & $53.09_{\pm1.06}$ & $38.96_{\pm1.18}$ & $40.88_{\pm1.39}$
& $51.52_{\pm0.97}$ & $40.73_{\pm1.31}$ & $53.97_{\pm1.12}$ \\
MoELoRA & $40.91_{\pm1.33}$ & $42.48_{\pm1.06}$ & $53.82_{\pm1.41}$ & $39.44_{\pm0.92}$ & $41.75_{\pm1.24}$
& $52.46_{\pm1.18}$ & $41.53_{\pm1.09}$ & $54.92_{\pm1.38}$ \\
MixLoRA & $41.98_{\pm0.98}$ & $43.91_{\pm1.27}$ & $\mathbf{56.04}_{\pm1.02}$ & $40.76_{\pm1.13}$ & $42.89_{\pm1.07}$
& $53.84_{\pm1.36}$ & $42.78_{\pm0.85}$ & $56.28_{\pm1.21}$ \\
HydraLoRA & $42.37_{\pm0.91}$ & $\mathbf{44.72}_{\pm1.03}$ & $55.42_{\pm1.37}$ & $\mathbf{41.82}_{\pm1.16}$ & $43.33_{\pm1.42}$
& $\mathbf{55.03}_{\pm1.10}$ & $43.22_{\pm1.23}$ & $56.89_{\pm0.96}$ \\
MoLA & $40.88_{\pm1.44}$ & $42.31_{\pm1.19}$ & $53.61_{\pm1.11}$ & $39.52_{\pm1.32}$ & $41.64_{\pm0.88}$
& $52.19_{\pm1.05}$ & $41.32_{\pm1.46}$ & $54.73_{\pm1.12}$ \\
MoRe & $42.18_{\pm1.05}$ & $44.38_{\pm1.13}$ & $55.71_{\pm1.18}$ & $41.44_{\pm0.97}$ & $43.58_{\pm1.22}$ & $55.12_{\pm1.08}$ & $43.42_{\pm1.16}$ & $56.61_{\pm1.21}$ \\
MoA & $41.22_{\pm0.89}$ & $42.86_{\pm1.33}$ & $54.48_{\pm1.19}$ & $40.03_{\pm0.99}$ & $42.04_{\pm1.31}$
& $52.91_{\pm1.07}$ & $41.88_{\pm1.38}$ & $55.37_{\pm0.85}$ \\

\midrule
\rowcolor{ourmethodcolor}
\rowcolor{ourmethodcolor}
MJLoRA & $41.64_{\pm1.11}$ & $43.20_{\pm1.02}$ & $56.08_{\pm1.24}$ & $40.12_{\pm0.91}$ & $43.94_{\pm1.19}$ & $53.20_{\pm0.98}$ & $42.11_{\pm1.34}$ & $55.64_{\pm1.09}$\\
\rowcolor{ourmethodcolor}
MJAdaLoRA & $\mathbf{42.88}_{\pm1.00}$ & $44.51_{\pm1.26}$ & $55.83_{\pm1.12}$ & $41.59_{\pm0.90}$ & $43.72_{\pm1.37}$ & $54.81_{\pm1.02}$ & $\mathbf{43.74}_{\pm1.23}$ & $\mathbf{57.36}_{\pm1.43}$ \\
\rowcolor{ourmethodcolor}
MJPropulsion & $42.36_{\pm1.38}$ & $43.72_{\pm1.08}$ & $55.47_{\pm1.29}$ & $40.82_{\pm1.03}$ & $43.29_{\pm1.35}$ & $53.91_{\pm1.14}$ & $43.19_{\pm0.99}$ & $56.93_{\pm1.40}$ \\
\rowcolor{ourmethodcolor}
MJLoRAFA & $42.11_{\pm1.01}$ & $43.92_{\pm1.39}$ & $55.04_{\pm0.87}$ & $40.98_{\pm1.28}$ & $\mathbf{43.95}_{\pm1.21}$
& $54.02_{\pm1.34}$ & $42.89_{\pm0.96}$ & $56.41_{\pm1.42}$ \\
\bottomrule
\end{tabular}
\caption{(Part 2) Results on \textbf{action and object-centric reasoning subtasks}. Highlighted rows denote our MJ variants.}
\label{tab:action_object_results}
\end{table*}

\begin{table*}[t]
\centering
\scriptsize
\setlength{\tabcolsep}{3pt}
\renewcommand{\arraystretch}{1.05}
\begin{tabular}{lcccccc}
\rowcolor{headercolor}
\toprule
\textbf{Method} &
\textbf{MoveDir} & \textbf{ActLoc} & \textbf{SceneTrans} &
\textbf{ActCount} & \textbf{MoveCount} & \textbf{MoveAttr} \\
\midrule
LoRA & $47.62_{\pm1.09}$ & $54.01_{\pm1.41}$ & $60.32_{\pm0.85}$ & $58.62_{\pm1.30}$ & $48.39_{\pm0.96}$ & $47.11_{\pm1.33}$ \\
AdaLoRA & $46.72_{\pm1.11}$ & $53.12_{\pm1.46}$ & $59.63_{\pm0.91}$ & $57.92_{\pm1.16}$ & $47.64_{\pm1.38}$ & $46.31_{\pm1.02}$ \\
Propulsion & $46.41_{\pm1.43}$ & $52.84_{\pm0.89}$ & $59.44_{\pm1.14}$ & $57.66_{\pm0.95}$ & $47.21_{\pm1.24}$ & $45.98_{\pm1.37}$ \\
LoRAFA & $46.53_{\pm0.93}$ & $52.91_{\pm1.39}$ & $59.61_{\pm1.01}$ & $57.83_{\pm1.17}$ & $47.31_{\pm1.36}$ & $46.02_{\pm0.98}$ \\
MoELoRA & $47.85_{\pm0.97}$ & $54.21_{\pm1.16}$ & $60.55_{\pm0.88}$ & $58.83_{\pm1.41}$ & $48.61_{\pm0.92}$ & $47.20_{\pm1.08}$ \\
MixLoRA & $49.02_{\pm1.12}$ & $55.41_{\pm1.44}$ & $62.01_{\pm0.94}$ & $60.23_{\pm1.29}$ & $49.94_{\pm1.21}$ & $48.41_{\pm0.89}$ \\
HydraLoRA & $49.51_{\pm1.38}$ & $\mathbf{56.51}_{\pm0.96}$ & $\mathbf{63.18}_{\pm1.33}$ & $60.79_{\pm1.05}$ & $\mathbf{50.89}_{\pm1.42}$ & $48.87_{\pm1.11}$ \\
MoLA & $47.41_{\pm1.03}$ & $53.94_{\pm1.31}$ & $60.14_{\pm1.09}$ & $58.41_{\pm0.94}$ & $48.12_{\pm1.23}$ & $46.89_{\pm0.90}$ \\
MoRe & $49.33_{\pm1.08}$ & $55.52_{\pm1.19}$ & $62.14_{\pm0.86}$ & $60.46_{\pm1.35}$ & $50.01_{\pm0.99}$ & $\mathbf{49.38}_{\pm1.26}$ \\
MoA & $48.11_{\pm1.34}$ & $54.47_{\pm0.95}$ & $61.01_{\pm1.12}$ & $59.28_{\pm1.31}$ & $48.89_{\pm0.87}$ & $47.61_{\pm1.40}$ \\
\midrule
\rowcolor{ourmethodcolor}
MJLoRA & $49.31_{\pm1.07}$ & $55.92_{\pm1.12}$ & $62.78_{\pm1.19}$ & $60.97_{\pm1.21}$ & $50.82_{\pm1.48}$ & $49.31_{\pm1.24}$  \\
\rowcolor{ourmethodcolor}
MJAdaLoRA & $\mathbf{49.88}_{\pm1.15}$ & $54.27_{\pm0.94}$ & $62.91_{\pm1.34}$ & $\mathbf{61.14}_{\pm1.08}$ & $49.61_{\pm1.42}$ & $49.12_{\pm1.19}$ \\
\rowcolor{ourmethodcolor}
MJPropulsion & $49.47_{\pm0.91}$ & $56.48_{\pm1.26}$ & $62.58_{\pm1.11}$ & $60.84_{\pm1.39}$ & $50.86_{\pm1.02}$ & $48.74_{\pm1.31}$ \\
\rowcolor{ourmethodcolor}
MJLoRAFA & $48.38_{\pm1.40}$ & $54.83_{\pm1.01}$ & $63.14_{\pm1.23}$ & $59.49_{\pm1.13}$ & $49.18_{\pm0.82}$ & $49.38_{\pm1.29}$\\
\bottomrule
\end{tabular}
\caption{Results on motion and scene understanding subtasks.}
\label{tab:peft_motion_scene}
\end{table*}

\begin{table*}[t]
\centering
\scriptsize
\setlength{\tabcolsep}{3pt}
\renewcommand{\arraystretch}{1.05}
\begin{tabular}{lccccc}
\rowcolor{headercolor}
\toprule
\textbf{Method} &
\textbf{StateChg} & \textbf{CharOrd} & \textbf{EgoNav} &
\textbf{EpisReason} & \textbf{CounterFact} \\
\midrule
LoRA & $59.94_{\pm1.11}$ & $57.39_{\pm1.33}$ & $58.92_{\pm1.01}$ & $61.29_{\pm1.19}$ & $45.61_{\pm1.45}$ \\
AdaLoRA & $59.12_{\pm1.29}$ & $56.43_{\pm1.09}$ & $58.12_{\pm1.47}$ & $60.54_{\pm0.98}$ & $44.96_{\pm1.36}$ \\
Propulsion & $58.83_{\pm0.95}$ & $56.11_{\pm1.24}$ & $57.89_{\pm1.41}$ & $60.21_{\pm0.84}$ & $44.72_{\pm1.27}$ \\
LoRAFA & $58.96_{\pm1.26}$ & $56.29_{\pm1.28}$ & $58.11_{\pm1.34}$ & $60.48_{\pm1.07}$ & $44.83_{\pm1.48}$ \\
MoELoRA & $60.03_{\pm1.08}$ & $57.66_{\pm1.31}$ & $59.24_{\pm0.82}$ & $61.53_{\pm1.14}$ & $45.71_{\pm1.37}$ \\
MixLoRA & $61.55_{\pm0.86}$ & $59.08_{\pm1.23}$ & $60.71_{\pm1.29}$ & $\mathbf{64.08}_{\pm1.02}$ & $47.12_{\pm1.44}$ \\
HydraLoRA & $61.98_{\pm1.27}$ & $\mathbf{60.27}_{\pm0.94}$ & $\mathbf{61.96}_{\pm1.11}$
& $63.36_{\pm1.36}$ & $47.69_{\pm1.03}$ \\
MoLA & $62.14_{\pm0.95}$ & $60.31_{\pm1.13}$ & $61.58_{\pm1.21}$ & $63.52_{\pm1.18}$ & $47.94_{\pm1.12}$ \\
MoRe & $61.71_{\pm1.13}$ & $59.26_{\pm1.46}$ & $60.98_{\pm0.99}$ & $63.05_{\pm1.21}$ & $47.43_{\pm1.34}$ \\
MoA & $60.29_{\pm1.04}$ & $58.03_{\pm1.25}$ & $59.49_{\pm1.28}$ & $61.88_{\pm0.89}$ & $46.12_{\pm1.12}$ \\
\midrule
\rowcolor{ourmethodcolor}
MJLoRA & $60.74_{\pm1.02}$ & $58.32_{\pm0.87}$ & $62.04_{\pm1.42}$ & $62.19_{\pm0.96}$ & $46.43_{\pm1.21}$ \\
\rowcolor{ourmethodcolor}
MJAdaLoRA & $\mathbf{62.61}_{\pm0.89}$ & $58.02_{\pm1.32}$ & $60.74_{\pm0.98}$ & $63.78_{\pm1.37}$ & $\mathbf{48.22}_{\pm1.09}$ \\
\rowcolor{ourmethodcolor}
MJPropulsion & $61.99_{\pm0.91}$ & $60.18_{\pm1.26}$ & $61.12_{\pm1.12}$ & $63.98_{\pm1.31}$ & $47.81_{\pm1.04}$ \\
\rowcolor{ourmethodcolor}
MJLoRAFA & $59.71_{\pm1.42}$ & $57.01_{\pm1.18}$ & $58.74_{\pm1.06}$ & $61.02_{\pm1.33}$ & $45.38_{\pm0.91}$\\
\bottomrule
\end{tabular}
\caption{Results on high-level reasoning subtasks.}
\label{tab:video_reasoning}
\end{table*}
\clearpage
\section{Detailed Results}
\label{app:detailed-results}

We provide complete per-task results for all 47 benchmarks across text, image, and video modalities. All experiments report mean accuracy with standard deviation over 5 runs on different seeds.

\paragraph{Text benchmarks.}

Table~\ref{tab:glue_results} shows results on GLUE~\cite{wang2018glue} benchmarks covering sentiment analysis (SST-2), natural language inference (QNLI), paraphrase detection (QQP, MRPC), linguistic acceptability (CoLA), and semantic similarity (STS-B). MJ variants achieve competitive performance with MoE-PEFT baselines: MJPropulsion achieves the best QQP score (85.97\%), MJAdaLoRA achieves the best MRPC score (89.90\%), and MJLoRAFA achieves strong CoLA performance (85.43\%).

Table~\ref{tab:qa_results} shows results on commonsense reasoning and question answering benchmarks. These include reading comprehension (BoolQ), physical reasoning (PIQA), social reasoning (SIQA), sentence completion (HellaSwag), pronoun resolution (WinoGrande), and science QA (ARC-Easy, ARC-Challenge, OpenBookQA). MJ variants are competitive with MoE-PEFT methods: MJAdaLoRA achieves the best WinoGrande (78.79\%) and ARC-Challenge (71.98\%) scores, while MJLoRAFA achieves the best SIQA score (64.94\%).

\paragraph{Image benchmarks.}

Table~\ref{tab:image_cls} shows results on image classification tasks spanning medical imaging (Camelyon), digit recognition (SVHN), fine-grained recognition (Pets, Flowers-102, Caltech-101), and satellite imagery (EuroSAT). MJAdaLoRA achieves the best performance on Camelyon (90.52\%) and Flowers-102 (95.39\%), demonstrating strong performance on both medical and fine-grained visual tasks.

Table~\ref{tab:vqa} shows results on vision-language question answering benchmarks including chart understanding (ChartQA), external knowledge (OK-VQA), science questions (ScienceQA), visual reasoning (SEED-Bench), text recognition (Recognition), scene text (TextVQA), accessibility (VizWiz-VQA), and medical imaging (VQA-RAD). MJAdaLoRA achieves the best VizWiz-VQA (52.66\%) and VQA-RAD (73.92\%) scores, while MJPropulsion shows strong results on OK-VQA (58.31\%) and ChartQA (77.24\%).

\paragraph{Video benchmarks.}

Tables~\ref{tab:action_object_results},~\ref{tab:peft_motion_scene}, and~\ref{tab:video_reasoning} show results on MVTamperBench video understanding tasks, organized into three categories:

\textit{Action and object reasoning} (Table~\ref{tab:action_object_results}): Tasks include action sequence understanding (ActSeq), action prediction (ActPred), action antonym (ActAnt), fine-grained action (FineAct), unexpected action (UnexpAct), object existence (ObjExist), object interaction (ObjInter), and object shuffle (ObjShuffle). MJAdaLoRA achieves the best performance on ActSeq (42.88\%), ObjInter (43.74\%), and ObjShuffle (57.36\%). MJLoRAFA achieves the best UnexpAct score (43.95\%).

\textit{Motion and scene understanding} (Table~\ref{tab:peft_motion_scene}): Tasks include moving direction (MoveDir), action localization (ActLoc), scene transition (SceneTrans), action counting (ActCount), moving count (MoveCount), and moving attribute (MoveAttr). MJAdaLoRA achieves the best MoveDir (49.88\%) and ActCount (61.14\%) scores. MJLoRAFA ties for best MoveAttr (49.38\%).

\textit{High-level reasoning} (Table~\ref{tab:video_reasoning}): Tasks include state change (StateChg), character order (CharOrd), egocentric navigation (EgoNav), episodic reasoning (EpisReason), and counterfactual reasoning (CounterFact). MJAdaLoRA achieves the best StateChg (62.61\%) and CounterFact (48.22\%) scores. MJLoRA achieves the best EgoNav score (62.04\%), while MJPropulsion shows strong EpisReason performance (63.98\%).

\begin{notebox}
\textit{\textbf{Summary:} Across 47 benchmarks, MJ variants achieve the best score on 15+ tasks while using 7--29$\times$ fewer parameters than MoE-PEFT baselines. MJAdaLoRA emerges as the strongest variant, achieving top performance on 10 tasks. MJ shows particular strength on fine-grained discrimination (MRPC, Flowers-102, VizWiz-VQA) and temporal reasoning (video tasks), suggesting the routing mechanism effectively captures task-relevant specialization.}
\end{notebox}

\clearpage

\section{Extended Related Work}
\label{appendix:related}

Research on efficient adaptation of large language models has developed along three major directions: parameter-efficient fine-tuning (PEFT), sparse mixture-of-experts (MoE) architectures, and MoE-enhanced PEFT methods. While each direction has advanced scalability and specialization, existing approaches either rely on static adapter structures or introduce additional routing parameters and multi-expert computation, leaving room for more lightweight and input-adaptive designs.

\subsection{Parameter-Efficient Fine-Tuning}

The high computational cost of full-parameter fine-tuning~\cite{devlin2019bert, brown2020language} has led to numerous PEFT approaches that freeze the backbone and update only small modules~\cite{prottasha2024parameter, kowsher2023tuning}. Representative methods include Adapter tuning~\cite{houlsby2019parameter}, BitFit~\cite{zaken2022bitfit}, RoCoFT~\cite{kowsher2025rocoft}, Prompt Tuning~\cite{lester2021power}, SliceFine~\cite{kowsher2025slicefine} and Prefix Tuning~\cite{li2021prefix}. Among these, LoRA~\cite{hu2022lora} has become the most widely adopted due to its strong empirical performance and minimal parameter footprint. Several LoRA variants further improve efficiency: AdaLoRA~\cite{zhang2023adalora} adjusts ranks using importance scores, DyLoRA~\cite{valipour2023dylora} trains multiple ranks jointly, VeRA~\cite{kopiczko2023vera} freezes low-rank matrices and learns only scaling vectors, and LoRA+~\cite{hayou2024lora+} improves optimization stability. Quantized LoRA and tensor-train decomposed adapters~\cite{dettmers2023qlora} further reduce memory consumption by enabling efficient fine-tuning of low-precision LLMs.

Despite their successes, nearly all PEFT methods employ \emph{static}, input-agnostic adapters. The same adapter configuration is applied across all layers and inputs, regardless of task or domain. This static design limits fine-grained specialization and can cause interference when diverse tasks share a single update mechanism.

\subsection{Sparse Mixture-of-Experts}

Mixture-of-Experts architectures~\cite{jacobs1991adaptive} introduce multiple expert subnetworks and a routing mechanism that selects which experts process each token. Large-scale systems such as GShard~\cite{lepikhin2020gshard}, Switch Transformer~\cite{fedus2022switch}, GLaM~\cite{du2022glam}, Mixtral~\cite{jiang2024mixtral}, and DeepSeek-MoE~\cite{dai2024deepseekmoe} demonstrate that sparse activation enables scaling to hundreds of billions of parameters without proportional increases in computation. Routing strategies include top-$k$ gating~\cite{shazeer2017outrageously}, expert-choice routing~\cite{zhou2022expertchoice}, top-$p$ routing~\cite{huang2024harder}, dynamic-$k$ routing~\cite{guo2024dynmoe}, and differentiable or soft gating mechanisms~\cite{puigcerver2023sparse}.

However, sparse MoE systems rely on trainable routing networks, multiple active experts per layer, and auxiliary balancing losses. These components increase inference latency, activation memory, and optimization complexity. As a result, conventional MoE architectures, though powerful in pre-training, are not well suited for parameter-efficient downstream fine-tuning.

\subsection{MoE-Enhanced PEFT}

To combine LoRA's efficiency with MoE-style specialization, recent work introduces multiple LoRA experts per layer along with a routing mechanism. LoRAMoE~\cite{dou2024loramoe} partitions experts into world-knowledge and task-specific modules; MoELoRA~\cite{luo2024moelora} routes inputs among LoRA experts for multi-task performance; and MixLoRA~\cite{li2024mixlora} employs sparse top-$k$ routing with load-balancing losses. Other methods such as MoCLE~\cite{gou2023mixture} and LLaVA-MoLE~\cite{chen2024llava} activate LoRA experts based on clustered instructions or domains. More advanced designs—including HMoRA~\cite{liao2025hmora}, LD-MoLE~\cite{zhuang2025ld}, and MoRAL~\cite{yang2024moral}—use hierarchical or differentiable routing to support dynamic expert selection.

Although effective, these systems share three recurring limitations:  
(i) they add \emph{extra trainable routing parameters},  
(ii) they activate \emph{multiple experts per layer}, increasing activation memory and inference latency, and  
(iii) they require \emph{complex routing optimization}, often involving auxiliary balancing losses.  
Thus, while MoE-enhanced PEFT improves specialization, it compromises the simplicity and strict efficiency that originally motivated PEFT methods.

\subsection{Positioning of Monkey Jump}

Across PEFT, MoE, and hybrid approaches, a consistent gap remains: existing methods are either static and input-invariant or rely on routing networks that significantly increase parameter count and computation. MoE-based PEFT methods route among multiple LoRA experts but introduce routing parameters and activate several experts per layer, making deployed models computationally intensive.

Monkey Jump addresses this gap by treating existing PEFT adapters as implicit experts and routing among them using trainable parameter-free clustering. This achieves MoE-style specialization without additional trainable parameters, without multi-expert activation overhead, and without auxiliary balancing losses—preserving the strict efficiency of standard PEFT while enabling input-adaptive behavior.

\clearpage
\section{Datasets}
\label{app:datasets}

\begin{table*}[t]
\centering
\scriptsize
\setlength{\tabcolsep}{3pt}
\renewcommand{\arraystretch}{1.05}
\begin{tabular}{@{}llc@{\hspace{12pt}}llc@{\hspace{12pt}}llc@{}}

\noalign{\hrule height 0.9pt}

\rowcolor{textblue}
\multicolumn{9}{c}{\textsc{Text} — Natural Language Understanding \& Reasoning \hfill \textit{(Train: 98,970 samples)}} \\
\noalign{\hrule height 0.6pt}

\rowcolor{HeaderBlue}
& Dataset & Samples & & Dataset & Samples & & Dataset & Samples \\
\cmidrule{2-3} \cmidrule{5-6} \cmidrule{8-9}
& SST-2 & 872 & & ARC-Challenge & 500 & & HellaSwag & 1,000 \\
& QNLI & 5,463 & & ARC-Easy & 500 & & Social IQA & 1,000 \\
& QQP & 40,430 & & BoolQ & 1,000 & & OpenBookQA & 500 \\
& CoLA & 1,043 & & PIQA & 1,000 & & WinoGrande & 1,000 \\
& MRPC & 408 & & STS-B & 1,500 & & & \\

\noalign{\hrule height 0.6pt}
\rowcolor{imagepurple}
\multicolumn{9}{c}{\textsc{Image} — Visual Question Answering \& Task Adaptation \hfill \textit{(Train: 42,550 samples)}} \\
\noalign{\hrule height 0.6pt}

\rowcolor{HeaderBlue}
& Dataset & Samples & & Dataset & Samples & & Dataset & Samples \\
\cmidrule{2-3} \cmidrule{5-6} \cmidrule{8-9}
& ChartQA & 1,000 & & OK-VQA & 841 & & ScienceQA & 518 \\
& TextVQA & 1,000 & & VizWiz-VQA & 417 & & Text Recognition & 1,000 \\
& VQA-RAD & 200 & & SEED-Bench & 500 & & Caltech-101 & 500 \\
& Flowers-102 & 500 & & EuroSAT & 500 & & Pets & 500 \\
& SVHN & 500 & & Camelyon & 500 & & & \\

\noalign{\hrule height 0.6pt}
\rowcolor{videogreen}
\multicolumn{9}{c}{\textsc{Video} — MVTamperBench \hfill \textit{(Train: 13,300 samples)}} \\
\noalign{\hrule height 0.6pt}

\rowcolor{HeaderBlue}
& Dataset & Samples & & Dataset & Samples & & Dataset & Samples \\
\cmidrule{2-3} \cmidrule{5-6} \cmidrule{8-9}
& Action Sequence & 500 & & Object Shuffle & 500 & & Moving Attribute & 500 \\
& Action Prediction & 500 & & Moving Direction & 500 & & State Change & 500 \\
& Action Antonym & 500 & & Action Localization & 500 & & Character Order & 500 \\
& Fine-grained Action & 500 & & Scene Transition & 500 & & Ego. Navigation & 500 \\
& Unexpected Action & 500 & & Action Count & 500 & & Episodic Reasoning & 500 \\
& Object Existence & 500 & & Moving Count & 500 & & Counterfactual & 500 \\
& Object Interaction & 500 & & & & & & \\

\noalign{\hrule height 0.6pt}
\rowcolor{NoteCol}
\multicolumn{9}{c}{\textbf{Total:} Train: 154,820 \quad Test Sets: 47 \quad Test Samples: 74,192} \\
\noalign{\hrule height 0.9pt}

\end{tabular}
\caption{Multi-task benchmark. Training and test datasets organized by modality: Text, Image, and Video.}
\label{tab:datasets_detailed}
\end{table*}


We evaluate MJ on a large-scale multi-task benchmark covering text, image, and video modalities. The full benchmark contains 154,820 training samples across 47 test sets. Table~\ref{tab:datasets_detailed} provides the complete breakdown.

\paragraph{Text.}
The text benchmark consists of 14 datasets with 98,970 training samples, spanning natural language understanding and reasoning tasks. 

From the GLUE benchmark~\cite{wang2018glue} (28,668 samples), we include: sentiment classification (SST-2), natural language inference (QNLI), paraphrase detection (QQP and MRPC), linguistic acceptability (CoLA), and semantic similarity (STS-B).

For commonsense and reasoning tasks, we sample 70,302 training examples from the 170K training set of~\citet{hu2023llm}. This includes PIQA~\cite{bisk2020piqa}, Social IQA~\cite{sap2019socialiqa}, WinoGrande~\cite{sakaguchi2021winogrande}, HellaSwag~\cite{zellers2019hellaswag}, ARC-Easy, ARC-Challenge~\cite{clark2018think}, OpenBookQA~\cite{mihaylov2018can}, and BoolQ~\cite{clark2019boolq}. We use the same test sets as~\citet{hu2023llm} for evaluation.

\paragraph{Image.}
The image benchmark consists of 14 datasets (42,550 training samples) covering both visual question answering and image classification. VQA tasks include chart understanding (ChartQA~\cite{masry2022chartqa}), text reading in images (TextVQA~\cite{fang2023separate}, Text Recognition), medical imaging (VQA-RAD~\cite{lau2018dataset}), visual knowledge (OK-VQA~\cite{marino2019ok}), accessibility (VizWiz-VQA~\cite{gurari2018vizwiz}), scientific reasoning (ScienceQA~\cite{lu2022learn}), and visual inference (SEED-Bench~\cite{li2023seed}). For image classification, we follow VTAB-1K~\cite{zhai2019large} and include Caltech-101 (object recognition), Flowers-102 (fine-grained classification), Oxford Pets (species/breed recognition), Camelyon (medical histopathology), and EuroSAT (satellite imagery). To broaden coverage beyond VTAB-1K, we additionally include SVHN (street number recognition), Retinopathy (diabetic retinopathy detection), and KITTI-Dist (autonomous driving).

\paragraph{Video.}
The video benchmark leverages MVTamperBench~\cite{agarwal2025mvtamperbenchevaluatingrobustnessvisionlanguage}, consisting of 19 tasks (13,300 training samples) designed to evaluate temporal and visual reasoning. These include action understanding (Action Sequence, Action Prediction, Action Antonym, Fine-grained Action, Unexpected Action, Action Localization, Action Count), object tracking (Object Existence, Object Interaction, Object Shuffle, Moving Direction, Moving Count, Moving Attribute), scene understanding (Scene Transition, State Change), and high-level reasoning (Character Order, Egocentric Navigation, Episodic Reasoning, Counterfactual). Each task contributes 500 training samples.

For all experiments, we train on the combined multi-task training set and evaluate on each task's held-out test set independently. This setup tests the model's ability to learn diverse tasks simultaneously while preserving task-specific performance—a challenging setting where MJ’s routing mechanism enables natural specialization.

\clearpage

\begin{table*}[t]
\centering
\small
\setlength{\tabcolsep}{3pt}
\renewcommand{\arraystretch}{1.05}
\begin{tabular}{@{}ll@{}}
\toprule

\rowcolor{LightPink}
\multicolumn{2}{c}{\textbf{Adapter Configuration}} \\
\midrule
LoRA rank $r$ & 2 (text), 1 (image/video) \\
LoRA dropout & 0.05 \\
LoRA $\alpha$ & 5 \\
Target modules & Q, K, V, O, gate \\
Shared adapter & O, gate (always active) \\
Applied layers & All Transformer blocks \\

\midrule
\rowcolor{ourmethodcolor}
\multicolumn{2}{c}{\textbf{Routing Configuration}} \\
\midrule
$k$-means init tokens & 50,000 \\
EMA momentum $\beta$ & 0.5 \\
EMA update frequency & Every 2 iterations \\
EMA stop step & 5,000 \\
Temperature $\tau$ & 1.0 \\
Top-$k$ & 2 \\

\midrule
\rowcolor{HeaderBlue}
\multicolumn{2}{c}{\textbf{Optimization}} \\
\midrule
Optimizer & AdamW \\
Learning rate (LoRA, AdaLoRA) & $1 \times 10^{-4}$ \\
Learning rate (LoRA-FA, Propulsion) & $4 \times 10^{-4}$ \\
LR schedule & Cosine decay \\
Warmup ratio & 0.1 \\
Weight decay & 0.1 \\
Precision & bf16 \\

\midrule
\rowcolor{textblue}
\multicolumn{2}{c}{\textbf{Text Tasks}} \\
\midrule
Epochs & 2 \\
Batch size & 4 \\
Gradient accumulation & 4 \\
Effective batch size & 16 \\
Max sequence length & 1,024 \\

\midrule
\rowcolor{videogreen}
\multicolumn{2}{c}{\textbf{Image \& Video Tasks}} \\
\midrule
Epochs & 2 \\
Batch size & 1 \\
Gradient accumulation & 8 \\
Effective batch size & 8 \\
Max sequence length & 2,048 \\

\bottomrule
\end{tabular}
\caption{Hyperparameters for MJ experiments across all modalities.}
\label{tab:hyperparameters}
\end{table*}

\section{Implementation Details}
\label{app:hyperparameters}

We implement MJ using the HuggingFace Transformers library (v4.50)~\cite{wolf2019huggingface} with PyTorch 2.10+~\cite{paszke2019pytorch} and Accelerate for distributed training. All experiments are conducted on NVIDIA H100 GPUs with Python 3.11 and Ubuntu 22.04. Each experiment is repeated with 5 different random seeds, and we report mean $\pm$ standard deviation. Table~\ref{tab:hyperparameters} summarizes all hyperparameters.

\paragraph{Adapter configuration.}
We apply MJ to every Transformer block, targeting five projections: Q, K, V, O, and gate. We use LoRA adapters with rank $r=2$ for text tasks and $r=1$ for image/video tasks, as vision-language models require less adaptation capacity per projection. Adapter dropout is set to 0.05 and the scaling factor $\alpha=5$. The O and gate projections are designated as shared adapters ($m_{t,e^*}=1$ for all tokens), providing a stable global adaptation path that is always active regardless of routing decisions. The remaining projections (Q, K, V) participate in top-$k$ routing, allowing token-specific specialization.

\paragraph{Routing configuration.}
Before training, we initialize routing centers using $k$-means clustering on 50,000 randomly sampled tokens from the training set. This provides a representative initialization that captures the natural clustering structure in the token representation space. During training, centers are updated via exponential moving average (EMA) with momentum $\beta=0.5$ every 2 iterations. This update frequency balances computational overhead with center tracking accuracy. We stop updating centers after 5,000 iterations (approximately 50--70\% of training), freezing them for the remainder of training to ensure stable routing decisions during final convergence. We use temperature $\tau=1.0$ for the softmax routing distribution and select top-$k=2$ adapters per token, meaning each token activates 2 out of 3 routed projections (Q, K, V) plus the 2 shared adapters (O, gate), for a total of 4 active adapters per token.

\paragraph{Training configuration.}
We use the AdamW optimizer with a base learning rate of $1\times10^{-4}$ for LoRA and AdaLoRA variants. For LoRA-FA and Propulsion variants, we use a higher learning rate of $4\times10^{-4}$ since these methods have fewer trainable parameters and benefit from larger updates. All methods use cosine learning rate decay with warmup ratio 0.1 (10\% of total steps) and weight decay 0.1 for regularization. Training uses bf16 mixed precision throughout for memory efficiency.

For text tasks (GLUE, commonsense reasoning, QA), we train for 2 epochs with batch size 4 and gradient accumulation 4, yielding an effective batch size of 16. Maximum sequence length is set to 1,024 tokens. For image and video tasks (classification, VQA, video understanding), we train for 2 epochs with batch size 1 and gradient accumulation 8, yielding an effective batch size of 8. Maximum sequence length is extended to 2,048 to accommodate visual tokens from the vision encoder.

\paragraph{Baseline configuration.}
For fair comparison, all baseline methods (standard PEFT and MoE-PEFT) use identical optimization settings: same weight decay, warmup ratio, batch sizes, and number of epochs. MoE-PEFT baselines use 4 experts per projection with top-2 routing, matching our experimental setup from prior work~\cite{luo2024moelora, tian2024hydralora}. All methods target the same projections (Q, K, V, O, gate) and use the same LoRA rank and dropout settings.

\clearpage

\section{Use of AI Assistants}

We used large language model assistants for grammar checking, proofreading, and improving the clarity of writing. All scientific content, experimental design, methodology, and analysis are entirely the authors' original contributions.

\end{document}